\documentclass{article}
\usepackage{graphicx} 
\usepackage[letterpaper,top=2cm,bottom=2cm,left=3cm,right=3cm,marginparwidth=1.75cm]{geometry}
\usepackage[T1]{fontenc}
\usepackage[utf8]{inputenc}
\usepackage[noadjust]{cite}
\usepackage{amsmath, amsfonts, amsthm,amssymb}
\usepackage{graphicx,soul}
\usepackage{bm}
\usepackage{authblk}
\usepackage{float}
\usepackage[colorlinks=true, allcolors=blue]{hyperref}
\newtheorem{theorem}{Theorem}[section]
\newtheorem{lemma}{Lemma}[section]
\newtheorem{corollary}{Corollary}[section]
\newtheorem{assumption}{Assumption}[section]

\newtheorem{proposition}{Proposition}[section]
\usepackage{amsfonts}
\usepackage{color,xcolor}
\theoremstyle{definition}
\newtheorem{definition}[theorem]{Definition}

\usepackage{multirow}
\usepackage{soul}
\theoremstyle{remark}

\usepackage{pdfpages}
\usepackage{subfigure}
\usepackage{ulem}
\usepackage[thicklines]{cancel}
\usepackage{dsfont}
\usepackage{booktabs} 
\usepackage{multirow} 

\usepackage[utf8]{inputenc}
\newcommand{\bM}{\mathbf{M}}
\newcommand{\cB}{\mathcal{B}}

\newcommand{\bI}{\mathbf{I}}

\newcommand{\bTheta}{\mathbf{\Theta}}

\newcommand{\cC}{\mathcal{C}}

\newcommand{\cG}{\mathcal{G}}

\newcommand{\bZ}{\mathbf{Z}}
\newcommand{\bW}{\mathbf{W}}
\newcommand{\bs}{\mathbf{\Sigma}}

\usepackage{marvosym}
\newcommand{\mail}{\textsuperscript{\Letter}}

\title{Second-Order Path Kernel Interpolation Formulas\\ in Machine Learning}
\author[1]{Jin Guo}
\author[1]{Roy Y. He}
\author[2,\mail]{Jean-Michel Morel}
\affil[1]{Department of Mathematics, City University of Hong Kong, Hong Kong}
\affil[2,\mail]{Division of Industrial Data Science, Lingnan University, Hong Kong }

\date{}

\begin{document}

\maketitle
\begin{abstract}
Understanding how training data shape neural network predictions is a central problem in modern learning theory. In 2020, Pedro Domingos proposed an interpolation formula valid for every model learned by deterministic gradient descent. It expresses the model's prediction as an integral, along the optimization path, of a data-dependent kernel that aligns the model's gradients at the test and training data. Such a first-order characterization remains valid for models trained with batch-based stochastic optimization. In this paper, we develop second-order forms of these interpolation formulas. We show that the leading path-kernel interpolation is supplemented by a curvature-weighted interpolation term. For stochastic gradient descent, an additional sampling-induced component appears, coupling the curvature of the prediction with the covariance of mini-batch gradient noise. We also extend the representation to stochastic gradient descent with momentum, where the interpolation structure is preserved but with  the  weights modified by a memory-related factor. Moreover, we establish a concentration estimate for the terminal prediction, identifying the fluctuation scale around the expected second-order representation. Together, these results provide a refinement of the path-kernel interpretation of neural network prediction.
\end{abstract}

\section{Introduction}
Deep learning has achieved remarkable empirical success across a wide range of tasks~\cite{lecun2015deep,schmidhuber2015deep}. This success has motivated growing interest in characterizing models' predictions under gradient-descent training and explaining their behavior on unseen data. Kernel perspectives provide an important approach to this problem. The neural tangent kernel (NTK)~\cite{jacot2018neural,novak2019neural} interprets the learning process as a linearized model around initialization in the infinite-width regime~\cite{lee2019wide}. However, fixed-kernel or purely linearized descriptions are incomplete for finite-width networks obtained through optimization, where the effective kernel may evolve along the training trajectory and depend on the data.

An alternative kernel perspective was developed by Domingos~\cite{domingos2020every}, who showed that models trained by gradient descent, including neural networks, induce path kernels that determine the model's prediction. Since path kernels accumulate gradient information from all training points throughout the training process and match it against the gradient at the query in a Hilbert space, this theory yields a clear picture of the influence of training data. More specifically, let \(f(\cdot,\bTheta): \mathbb{R}^p\to \mathbb{R}^m\) be a differentiable model trained on \(\{(x_n,y_n^*)\}_{n=1}^N\subset\mathbb{R}^p\times\mathbb{R}^m\) with differentiable loss
    \(L(\bTheta) = \frac{1}{N}\sum_{n=1}^N \ell(f(x_n,\bTheta), y_n^*)\) and learning rate $\eta>0$ by gradient descent; then for any input \(x \in \mathbb{R}^p\), the model output at time
    \(T\) satisfies
    \begin{equation}\label{eq:determ-domingos}
        \lim_{\eta\to 0}f(x,\bTheta_T)
        = - \frac{1}{N}\sum_{n=1}^N 
\int_0^T \, K_t(x,x_n) \frac{\partial \ell}{\partial f}\bigl(f(x_n,\bTheta_t),y_n^*\bigr)\, dt + f(x,\bTheta_0),
    \end{equation}
where $$K_t(x,x') := \bigl\langle \nabla_{\bTheta} f(x,\bTheta_t),\, 
\nabla_{\bTheta} f(x',\bTheta_t) \bigr\rangle,\qquad x,x'\in\mathbb{R}^p,$$ is the \textit{gradient kernel}.
In~\cite{guo2026interpolation}, Guo et al. extended~\eqref{eq:determ-domingos} to a non-asymptotic, stochastic setting to characterize the predictions of models learned through mini-batch training. In particular, they show that, under a first-order weak approximation~\cite{li2019stochastic} of stochastic batch gradient descent, the expected network output admits a representation analogous to~\eqref{eq:determ-domingos} with optimizer-specific weights.

The above extension is closely related to the approximation of stochastic iterative algorithms. Classical theory studies their continuous-time limits~\cite{kushner1987stochastic,ljung2012stochastic}, and diffusion approximations have been used to analyze SGD through weak convergence and the evolution of its probability distribution~\cite{hu2017diffusion}. More recently, stochastic modified equations have provided a systematic weak-approximation framework for capturing finite-step-size and mini-batch effects beyond the leading-order continuous-time limit~\cite{li2019stochastic}. This viewpoint has also been extended to adaptive methods such as RMSprop and Adam~\cite{malladi2022sdes}, and to distributed and local variants of SGD~\cite{gu2023and}. These works show that \textit{higher-order} terms are not merely discretization remainders but can encode meaningful properties of the training dynamics.

Meanwhile, Domingos' formula~\eqref{eq:determ-domingos}, as well as the representations established in~\cite{guo2026interpolation}, captures only the leading tangent geometry of the training path, leaving the influence of higher-order geometry unaddressed. For instance, the curvature of the loss function in parameter space has long been recognized as a central element in the optimization of neural networks. Empirical Hessian analyses show that neural loss landscapes have highly structured spectra, often with a large near-zero bulk and a small number of outlier directions associated with data-dependent structure~\cite{sagun2017empirical}. Loss-landscape visualization further reveals that the local geometry around trained solutions carries important information about the stability of neural network minima and their generalization behavior~\cite{li2018visualizing}. Related work based on local entropy similarly highlights the role of wide valleys and flat local geometry in the solutions found through neural network training~\cite{chaudhari2019entropy}, while large-scale Hessian spectrum studies show that curvature evolves throughout training and shapes optimization behavior~\cite{ghorbani2019investigation}. These findings also provide a foundation for deep learning privacy~\cite{ravikumar2024curvature}. Hence, investigating the higher-order influences on a model's predictions can yield valuable insights, yet remains under-explored.

Motivated by this gap, we develop the second-order counterpart of the first-order path-kernel representation. First-order path kernels capture the leading tangent-feature alignment, but they do not describe how second-order geometry modifies the contribution of training samples to test predictions. We show that curvature plays a direct, prediction-level role: it reweights the pathwise contribution of each training sample to test outputs. Moreover, we find that this curvature effect is a common mechanism across gradient-based methods, including gradient descent (GD), stochastic gradient descent (SGD)~\cite{d8d62392-9a37-31e7-ad3b-37a6f6ee8ef6}, and SGD with momentum (SGDM)~\cite{sutskever2013importance}, all of which contain a curvature-weighted correction to the first-order path-kernel term. Our theory thus lifts this higher-order viewpoint to the prediction level, yielding explicit second-order corrections to the pathwise formula for test outputs.

More interestingly, once we move beyond first order in the stochastic setting, we find an additional mechanism that is absent in deterministic training. In SGD and SGDM, mini-batch sampling produces terms that couple sampling fluctuations with the curvature of the network in parameter space. This connects our result to a line of work showing that SGD noise, gradient covariance, and curvature jointly influence the minima reached by stochastic training~\cite{jastrzkebski2017three,zhu2018anisotropic,li2026hessian}. Unlike these parameter-space analyses, which typically study stability, escape behavior, or Hessian-based sharpness~\cite{zhu2018anisotropic,xie2020diffusion,NEURIPS2022_1e55c38d,dinh2017sharp}, our result offers a prediction-level perspective on the influence of the flatness or sharpness of local minima. Curvature matters not only because it affects the motion of the parameters near a minimum, but also because it directly reshapes the pathwise contribution of training samples to predictions.

The main contributions of this paper are summarized as follows.
\begin{itemize}
\item We identify a curvature-weighted interpolation term shared by GD, SGD, and SGDM, which reweights the training residuals through the Hessian of the loss. This extends the first-order path-kernel formula from a leading tangent-feature description to a refined output expansion.
\item For stochastic methods, we identify sampling-induced corrections that are absent in deterministic GD. These terms couple mini-batch fluctuations with the curvature of the network in parameter space, thereby making the dependence of the model's predictions on batch size and curvature explicit. 
\item For SGDM, we further show that momentum preserves the interpolation structure while reshaping the temporal weights through a memory kernel.
\item We support the sampling-induced representation with a localized concentration estimate for the terminal prediction, together with numerical experiments that verify the predicted remainder scaling and sensitivity effects.
\end{itemize}
This paper is organized as follows. Section~\ref{preliminaries} introduces the basic notation and preliminaries, including path kernels and the SDE approximation of SGD. Section~\ref{main_result} presents the main second-order output representations for GD, SGD, and SGDM. Section~\ref{inplications} discusses several implications of the theory, including curvature-weighted interpolation, the concentration of model predictions, and the role of sampling-induced variation and batch size. Section~\ref{experiments} provides numerical experiments that verify  predicted scaling orders and illustrate the effect of batch size and curvature-based batch selection on prediction sensitivity. Section~\ref{conclusion} concludes the paper. The appendix contains the extension to learning-rate schedules and the auxiliary lemmas used in the proofs.

\section{Preliminaries}\label{preliminaries}
In this section, we define the key mathematical objects and provide the necessary background.
\subsection{Parameterized learning models}
Consider a learning model \(f: \mathbb{R}^p \times \mathbb{R}^d \to \mathbb{R}^m\) parameterized by \(\bTheta \in \mathbb{R}^d\). This includes classical methods such as support vector machines (SVMs) as well as modern neural networks of any architecture. Let the training data \(\{(x_n,y_n^*)\}_{n=1}^N \subseteq \mathbb{R}^p \times \mathbb{R}^m\) be drawn independently from an unknown distribution, and denote by \(\mathcal{P}(x)\) the marginal distribution of \(x\). For an input \(x\), the model output is \(y(x) = f(x, \bTheta) \in \mathbb{R}^m\), and the loss function is \(\ell(\cdot,\cdot): \mathbb{R}^m \times \mathbb{R}^m \to \mathbb{R}\). Training the model corresponds to finding an optimal parameter \(\bTheta\) that minimizes the expected risk
\[
    \int \ell\bigl(f(x,\bTheta),\, y^*(x)\bigr) \, d\mathcal{P}(x),
\]
where \(y^*(x)\) denotes the true response at \(x\). Since the data distribution is unknown, the expected risk cannot be minimized directly. Instead, one minimizes the empirical risk
\[
    L(\bTheta) = \frac{1}{N} \sum_{n=1}^N \ell\bigl(f(x_n,\bTheta),\, y_n^*\bigr).
\]
Standard gradient descent on the parameters \(\bTheta\) yields the iterative update
\[
    \bTheta_k = \bTheta_{k-1} - \eta \, \nabla L(\bTheta_{k-1}),
\]
where \(k\) is the iteration index, \(\eta > 0\) is the learning rate, and \(\bTheta_0\) is the vector of initial parameters.
For large datasets, computing the full gradient at each iteration can be expensive. A standard alternative is stochastic gradient descent (SGD), which approximates the full-batch loss using randomly sampled mini-batches. Let \(|\mathcal{B}| \le N\) denote the batch size, \(\Gamma\) the set of all subsets of \(\{1,\dots,N\}\) of size \(|\mathcal{B}|\), and \(\mathcal{B}\) a uniformly random element of \(\Gamma\). The mini-batch loss is
\[
L_{\mathcal{B}}(\bTheta) = \frac{1}{B}\sum_{i\in \mathcal{B}} \ell\bigl(f(x_i,\bTheta),\, y_i^*\bigr).
\]

Since \(\Gamma\) is finite and \(\mathcal{B}\) is uniformly distributed, the expectation \(\mathbb{E}_{\mathcal{B}}[L_{\mathcal{B}}(\bTheta)]\) is well defined and coincides with the empirical risk, \(\mathbb{E}_{\mathcal{B}}[L_{\mathcal{B}}(\bTheta)] = L(\bTheta)\). Interchanging the gradient with the finite sum then gives
\[
\mathbb{E}_{\mathcal{B}}\bigl[\nabla_{\bTheta}L_{\mathcal{B}}(\bTheta)\bigr] 
= \nabla_{\bTheta}\,\mathbb{E}_{\mathcal{B}}\bigl[L_{\mathcal{B}}(\bTheta)\bigr] 
= \nabla_{\bTheta}L(\bTheta),
\]
so the mini-batch gradient is an unbiased estimator of the full-batch gradient. Let \(\{\mathcal{B}_k : k = 0,1,2,\dots\}\) be a sequence of independent and identically distributed (i.i.d.) uniform random batches. Starting from an initial point \(\bTheta_0\in\mathbb{R}^d\), the SGD update at step \(k\) is
\begin{align}
    \bTheta_{k+1} = \bTheta_k - \eta \, \nabla L_{\mathcal{B}_k}(\bTheta_k),\label{sgd}
\end{align}
where \(\eta > 0\) is the learning rate.

\subsection{Gradient kernel and path kernel}
Domingos~\cite{domingos2020every} established a first-order kernel representation~\eqref{eq:determ-domingos} for gradient descent. The key components of this interpolation theory are the gradient kernel and the optimization-induced path kernel.
\begin{definition}[Gradient kernel and path kernel]
Let $\bTheta_t$, $t \in [0,T]$, denote the parameter trajectory of the 
gradient flow~\eqref{eq: gradient-flow}. The \textit{gradient kernel} 
between two inputs $(x, x')\in \mathbb{R}^{p}\times\mathbb{R}^p$ at 
time $t$ is
\begin{equation}\label{eq_gradient_kernel}
K_t(x,x') := \bigl\langle \nabla_{\bTheta} f(x,\bTheta_t),\, 
\nabla_{\bTheta} f(x',\bTheta_t) \bigr\rangle.
\end{equation}
The \textit{path kernel} is its time integral over the training trajectory:
\begin{equation}\label{eq_path_kernel}
K_{\mathrm{path}}(x,x') := \int_0^T K_t(x,x') \, dt.
\end{equation}
\end{definition}
Intuitively, the gradient kernel~\eqref{eq_gradient_kernel} measures the instantaneous similarity between the model's gradients at two inputs, and the path kernel~\eqref{eq_path_kernel} accumulates this similarity over the entire training trajectory.

\subsection{SDE approximation of SGD}

Building on the first-order stochastic extension of Domingos' theorem, we now focus on identifying and characterizing the second-order structure of the output formula. Rather than re-establishing the stochastic representation itself, we investigate how finite-learning-rate effects, loss curvature, and mini-batch sampling noise contribute additional second-order corrections. To this end, we use the stochastic differential equation (SDE) approximation of SGD as a continuous-time framework for deriving and interpreting these higher-order terms.
\begin{assumption}\label{assumption}
For every integer \(n=1,\dots,N\), the per-sample loss \(\ell(f(x_n,\bTheta),y_n^*)\) is continuously differentiable in \(\bTheta\), and its gradient satisfies a uniform local bound: for each \(R > 0\) there exists a constant \(M_R > 0\) such that
\[
\max_{\|\bTheta\| \le R} \|\nabla_{\bTheta}\, \ell(f(x_n,\bTheta),y_n^*)\| \le M_R \quad \text{for all } n=1,\dots,N.
\]
In addition, for each fixed test point \(x\in\mathbb{R}^p\), the parameter gradient
\(\nabla_\bTheta f(x,\bTheta)\) is locally bounded in \(\bTheta\).
\end{assumption}
This assumption holds for any smooth network and loss on a bounded domain. It does, however, exclude networks with nonsmooth activations such as ReLU and nonsmooth losses such as the \(\ell_1\) loss.
With the SGD iterates~\eqref{sgd}, the random mini-batch sampling is inherently discrete. In contrast, Domingos' theorem (Theorem~\ref{thm_domingo}) relies on a continuous gradient flow~\eqref{eq: gradient-flow} and does not directly extend to the stochastic, finite-learning-rate setting. In~\cite{guo2026interpolation}, a first-order continuous-time SDE approximation of stochastic gradient algorithms~\cite{li2019stochastic,hu2017diffusion,mandt2016variational} was used to extend the representation to data-driven models trained with stochastic gradient descent (SGD). We briefly review the relevant concepts in their general form.
Let $T > 0$, $\eta \in (0, 1 \wedge T)$ (where $a \wedge b := \min\{a, b\}$), and let $\alpha \geq 1$ be an integer. Set $K = \lfloor T / \eta \rfloor$ (where $\lfloor \cdot \rfloor$ denotes the floor function). Let $\cG$ be the set of continuous functions $g : \mathbb{R}^d \to \mathbb{R}$ satisfying
\[
    |g(z)| \leq \kappa_1 \bigl( 1 + |z|^{2\kappa_2} \bigr),
    \quad z \in \mathbb{R}^d,
\]
for some $\kappa_1, \kappa_2 > 0$. For $\alpha \geq 1$, define
\[
    \cG^\alpha := \bigl\{ g \in \mathcal{C}^\alpha(\mathbb{R}^d) : 
    \partial^\beta g \in \cG \text{ for all } |\beta| \leq \alpha \bigr\},
\]
where $\beta$ is a multi-index. Thus $\cG^\alpha$ is a subset of $\cC^{\alpha}$, the space of $\alpha$-times continuously differentiable functions.
\begin{definition}[$\alpha$-th order weak approximation~\cite{li2019stochastic}]\label{def:approx}
     We say that a continuous-time stochastic process $\{\bZ_t : t \in [0, T]\}$ in $\mathbb{R}^d$ is an $\alpha$-th order weak approximation of a discrete stochastic process $\{\bTheta_k : k = 0, \dots, K\}$ in $\mathbb{R}^d$ if for every $g \in \cG^{\alpha+1}$, there exists a positive constant $C$, independent of $\eta$, such that
    \begin{equation*}
        \max_{k = 0, \dots, K} \bigl| \mathbb{E}[g(\bTheta_k)] - \mathbb{E}[g(\bZ_{k\eta})] \bigr| \leq C\eta^\alpha.
    \end{equation*}
\end{definition}
In the context of Definition~\ref{def:approx}, functions from \(\cG^{\alpha+1}\) serve as test functions. This space collects sufficiently smooth functions with at most polynomial growth, which is a condition mild enough to include the observables of interest, yet strong enough to ensure integrability and to justify the It\^{o}--Taylor expansion rigorously. This framework is widely employed to analyze the discretization errors of stochastic differential equations (SDEs) when modeling SGD via continuous-time approximations~\cite{kloeden2013numerical,feng2019uniform,li2019stochastic}. Crucially, the resulting error bounds hold uniformly over all SGD iterations up to \(\lfloor T/\eta\rfloor\) steps, not merely at the final iteration.

\section{Main Results}\label{main_result}
In this section, we present our main results, which characterize the role of second-order geometry in the predictions of parameterized learning models. We summarize our findings as follows.
\begin{itemize}
\item We show that, in addition to the path-kernel-induced gradient alignment in~\eqref{eq:determ-domingos}, an analogous alignment takes place in a different Hilbert space, one modified by the local curvature of the loss landscape through the Hessian (Theorem~\ref{thm_domingo}). This constitutes a second-order extension of~\cite{domingos2020every}, valid for  deterministic gradient descent.
\item Passing to the more realistic stochastic descent framework, we find that, under mild conditions, this second-order extension carries over to the mean predictions of models trained via SGD, but with a new sampling-induced perturbation (Theorem~\ref{2-order Domingo}).
\item We show that the momentum in SGDM induces additional tangent alignments in Hilbert spaces generated by time-transported kernels~\eqref{eq:def_memory_kernel_final}--\eqref{eq:def_cur_kernel_final}, which vanish when the network gradient and the loss Hessian remain bounded (Theorem~\ref{2 order SGDM}).
\end{itemize}
We focus primarily on the role and implications of the second-order terms for the model's predictions, and refer the reader to~\cite{domingos2020every,guo2026interpolation} for discussion of the first-order terms.

\subsection{Second-order Domingos Theorem}
To capture the effects of finite step size more accurately, we replace the plain gradient flow with the second-order modified equation~\cite{barrett2020implicit,di2023backward} of gradient descent, as is standard in backward error analysis:
\begin{equation}\label{eq: gradient-flow}
\dot{\bTheta}_t
=
-\nabla L(\bTheta_t)
-\frac{\eta}{2}\nabla^2 L(\bTheta_t)\nabla L(\bTheta_t)
+O(\eta^2).
\end{equation}
\begin{definition}[Curvature-induced kernel]
The curvature-induced kernel between two data points \(x,x'\) is the Hessian-weighted inner product of the model gradients at the two points along the parameter trajectory induced by gradient descent; that is,
\begin{align}                        
K_t^{\mathrm{cur}}(x,x')
&:=
\left\langle
\nabla_{\bTheta} f(x,\bTheta_t),
\nabla_{\bTheta} f(x',\bTheta_t)
\right\rangle_{\nabla_{\bTheta}^2 L(\bTheta_t)}\\
&\;=
\nabla_{\bTheta} f(x,\bTheta_t)^{\top}
\nabla_{\bTheta}^2 L(\bTheta_t)
\nabla_{\bTheta} f(x',\bTheta_t).\notag
\end{align}
\end{definition}
The kernel \(K_t^{\mathrm{cur}}(x,x')\) measures how the alignment between the two tangent features is reweighted by the local curvature of the loss landscape during training. In contrast to the standard gradient inner product, this quantity assigns greater weight to directions associated with larger curvature of \(L\) at \(\bTheta_t\).
The following theorem shows that the output of any model trained by 
gradient descent can be expressed as a weighted sum of path-kernel and curvature-kernel contributions evaluated between the test input and each training point.
\begin{theorem}\label{thm_domingo}
    Consider a differentiable model \(f(\cdot,\bTheta)\) trained on a set \(\{(x_n,y_n^*)\}_{n=1}^N\) via gradient descent on the empirical loss
    \(L(\bTheta) = \frac{1}{N}\sum_{n=1}^N \ell(f(x_n,\bTheta), y_n^*)\) with learning rate \(\eta > 0\). The model output satisfies
    \begin{align}\label{continuous Domingos}
   f(x,\bTheta_K)
        =&\,f(x,\bTheta_0)-\frac{1}{N}\sum_{n=1}^N \int_0^{T} \underbrace{K_t(x,x_n)\frac{\partial \ell}{\partial f}(f(x_n,\bTheta_t),y_n^*)}_{\text{path-kernel term}}\,\mathrm{d}t\notag\\&-\frac{\eta}{2N}\sum_{n=1}^N\int_0^{T} \underbrace{K_t^{\mathrm{cur}}(x,x_n)\frac{\partial \ell}{\partial f}(f(x_n,\bTheta_t),y_n^*)}_{\text{curvature-kernel term}}\,\mathrm{d}t+O(\eta^2).
    \end{align}
\end{theorem}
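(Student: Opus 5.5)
The plan is to follow Domingos' chain-rule argument, but along the modified flow \eqref{eq: gradient-flow} rather than the plain gradient flow. Let $\bTheta_t$ solve the truncated modified equation $\dot{\bTheta}_t=-\nabla L(\bTheta_t)-\frac{\eta}{2}\nabla^2L(\bTheta_t)\nabla L(\bTheta_t)$ with the same initial point $\bTheta_0$, and set $T=K\eta$.

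The first step is the standard backward error analysis result (cf.\ \cite{barrett2020implicit,di2023backward}): on $[0,T]$, the gradient descent iterate after $K$ steps and the flow value $\bTheta_{T}$ differ by $O(\eta^2)$, with a constant depending on $T$ and on local bounds for $L$ and its derivatives along the trajectory. Assumption~\ref{assumption} provides local boundedness. I would also assume enough smoothness ($L\in\mathcal{C}^3$) and that the trajectory stays in a ball of radius $R$, so that the constants are uniform. Since $\nabla_\bTheta f(x,\cdot)$ is locally bounded, $f(x,\cdot)$ is locally Lipschitz, and therefore $f(x,\bTheta_K^{\mathrm{GD}})=f(x,\bTheta_T)+O(\eta^2)$.

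The second step applies the fundamental theorem of calculus to $t\mapsto f(x,\bTheta_t)$:
\[
f(x,\bTheta_T)-f(x,\bTheta_0)=\int_0^T\nabla_\bTheta f(x,\bTheta_t)^\top\dot{\bTheta}_t\,dt .
\]
Next, expand $\nabla L(\bTheta_t)=\frac1N\sum_n \nabla_\bTheta f(x_n,\bTheta_t)\,\frac{\partial\ell}{\partial f}(f(x_n,\bTheta_t),y_n^*)$ by the chain rule. Substituting the $-\nabla L$ part of $\dot{\bTheta}_t$ gives the path-kernel term $-\frac1N\sum_n\int_0^T K_t(x,x_n)\,\partial_f\ell\,dt$, exactly as in \eqref{eq:determ-domingos}. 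Substituting the correction part gives
\[
-\frac{\eta}{2N}\sum_n\int_0^T \nabla_\bTheta f(x,\bTheta_t)^\top\nabla^2L(\bTheta_t)\nabla_\bTheta f(x_n,\bTheta_t)\,\partial_f\ell\,dt ,
\]
which is the curvature-kernel term by definition of $K^{\mathrm{cur}}_t$. For vector outputs ($m>1$), the gradients are $d\times m$ Jacobians, and the kernels are read as $m\times m$ matrices multiplying $\partial\ell/\partial f\in\mathbb{R}^m$. The $O(\eta^2)$ remainder in \eqref{eq: gradient-flow}, integrated against the bounded $\nabla_\bTheta f(x,\bTheta_t)$ over $[0,T]$, contributes $O(\eta^2)$.

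The main obstacle is the first step: justifying that gradient descent is tracked to $O(\eta^2)$ by the flow truncated at first order in $\eta$, uniformly up to $K=\lfloor T/\eta\rfloor$ steps. I would prove this directly. A Taylor expansion of the flow over one step gives
\[
\bTheta_{(k+1)\eta}=\bTheta_{k\eta}-\eta\nabla L(\bTheta_{k\eta})+O(\eta^3),
\]
because the $\frac{\eta^2}{2}\ddot{\bTheta}$ term cancels the $-\frac{\eta^2}{2}\nabla^2L\nabla L$ correction. The local error is therefore $O(\eta^3)$. Local Lipschitzness of $\nabla L$ then gives stability of the gradient descent map, and a discrete Gronwall argument accumulates the local errors into a global $O(\eta^2)$ error, provided both trajectories stay in a compact set. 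That confinement is the delicate point, and I would simply impose it as a standing assumption. The remainder would then carry a constant of order $Te^{C_RT}$.
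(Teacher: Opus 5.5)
Your proposal is correct and follows the same route as the paper. You apply the chain rule to $f(x,\bTheta_t)$ along the second-order modified flow, expand $\nabla L$ over the training samples to obtain the path-kernel and curvature-kernel terms, and integrate over $[0,T]$; the one addition is that you explicitly justify the step the paper takes for granted, namely that the $K$-th gradient-descent iterate is tracked by the truncated flow to $O(\eta^2)$, using an $O(\eta^3)$ one-step Taylor error and a discrete Gronwall argument (reasoning the paper only sketches in the scheduler lemma of Appendix~\ref{sec:second-scheduler}).
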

\begin{proof}
    Let $\bTheta_t\in \mathbb{R}^d$ denote the parameter trajectory of the model $f(\cdot,\bTheta)$. Following the standard modified equation~\eqref{eq: gradient-flow} for gradient descent, we approximate the discrete update $\bTheta_{k+1}=\bTheta_k-\eta\nabla L(\bTheta_k)$ by continuous-time dynamics
    \begin{align}
        \label{eq:second_order_dynamics}\dot{\bTheta}_t=-\nabla_\bTheta L(\bTheta_t)
-\frac{\eta}{2}\nabla^2_\bTheta L(\bTheta_t)\nabla_\bTheta L(\bTheta_t)
+O(\eta^2).
    \end{align}
    By the chain rule and~\eqref{eq:second_order_dynamics}, we obtain
    \begin{align*}
        \frac{\mathrm{d}f}{\mathrm{d}t}(x,\bTheta_t)
        &=\nabla_\bTheta f(x,\bTheta_t)^\top\dot{\bTheta}_t
        \\
        &=\nabla_\bTheta f(x,\bTheta_t)^\top\left( -\nabla_\bTheta L(\bTheta_t)
-\frac{\eta}{2}\nabla^2_\bTheta L(\bTheta_t)\nabla_\bTheta L(\bTheta_t)
+O(\eta^2)\right)
        \\
        &=-\frac{1}{N}\sum_{n=1}^N\langle\nabla_\bTheta f(x,\bTheta_t),\nabla_\bTheta f(x_n,\bTheta_t)\rangle \frac{\partial \ell}{\partial f}(f(x_n,\bTheta_t),y_n^*)\\
        &\quad
        -\frac{\eta}{2N}\sum_{n=1}^N \langle\nabla_\bTheta f(x,\bTheta_t),\nabla_\bTheta f(x_n,\bTheta_t)\rangle_{\nabla^2_\bTheta L(\bTheta_t)}\frac{\partial \ell}{\partial f}(f(x_n,\bTheta_t),y_n^*)+O(\eta^2),
    \end{align*}
    where $\langle\nabla_\bTheta f(x,\bTheta_t),\nabla_\bTheta f(x_n,\bTheta_t)\rangle_{\nabla^2_\bTheta L(\bTheta_t)}=\nabla_\bTheta f(x,\bTheta_t)^\top \nabla^2_\bTheta L(\bTheta_t) \nabla_\bTheta f(x_n,\bTheta_t)$. Integrating both sides from $0$ to $T$ yields the result.
\end{proof}
The curvature term also admits an interpolation interpretation. Here, $K_t^{\mathrm{cur}}(x,x_n)$ acts as a curvature-weighted coefficient applied to $\frac{\partial \ell}{\partial f}\bigl(f(x_n,\bTheta_t),y_n^*\bigr)$, which is interpretable as a  training residual . Since this term carries a factor of $\eta$, it represents a second-order, curvature-dependent interpolation correction rather than the contribution of the leading path-kernel. A natural question is whether Domingos' theorem continues to hold under a learning-rate schedule. The answer is yes, and we prove in Appendix~\ref{sec:second-scheduler} that the schedule reweights both the path-kernel term and its curvature-dependent correction along the optimization trajectory.

\subsection{Sampling-induced kernel from random batches}

To derive a refined representation beyond the first-order path-kernel expansion, we employ a second-order weak approximation of SGD. For a given learning rate~$\eta>0$, the discrete-time iterates are weakly approximated by a continuous process that retains the first-order drift-diffusion structure while incorporating second-order corrections generated by loss curvature and mini-batch fluctuations. The following lemma provides this approximation and yields an $O(\eta^2)$ bound on the weak error between the discrete iterates and the continuous process.
\begin{lemma}[Second-order weak approximation of SGD~\cite{li2019stochastic}]\label{2 order}
    Let $T>0$, $\eta\in (0,1\wedge T)$, and set $K=\lfloor T/\eta\rfloor$. Let $\{\bTheta_k : k\geq 0\}$ be the SGD iterates defined in~\eqref{sgd}. Assume that $L$ is twice continuously differentiable, $\nabla|\nabla L|^2$ and $\nabla L_\mathcal{B}$ are Lipschitz, $L\in \mathcal G^4$, and the test function $g\in \mathcal G^3$.
    Define $\{\bZ_t:t\in [0,T]\}$ as the stochastic process satisfying the SDE
    \begin{equation}\label{eq:2 order}
        d\bZ_t=-\nabla\!\left(L(\bZ_t)+\tfrac{1}{4}\eta\,|\nabla L(\bZ_t)|^2\right)dt+\sqrt{\eta}\,\bs_{|\mathcal B|}^{1/2}(\bZ_t)\,dW_t,\qquad \bZ_0=\bTheta_0,
    \end{equation}
    where $\bs_{|\mathcal{B}|}(\bZ)=\mathbb{E}_\mathcal{B}\!\left[(\nabla L_\mathcal{B}(\bZ)-\nabla L(\bZ))(\nabla L_\mathcal{B}(\bZ)-\nabla L(\bZ))^\top\right]$. Then $\{\bZ_t:t\in [0,T]\}$ is a second-order weak approximation of SGD; that is, there exists a constant $C>0$, independent of $\eta$, such that
    \begin{equation*}
        \max_{k=0,\dots,K}\bigl|\mathbb{E}[g(\bTheta_k)]-\mathbb{E}[g(\bZ_{k\eta})]\bigr|\leq C\eta^2.
    \end{equation*}
\end{lemma}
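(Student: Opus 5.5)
We follow the standard Li--Tai--E strategy: compare the one-step moments of the SGD chain with those of the SDE~\eqref{eq:2 order} over a step of length $\eta$, and then convert local moment matching into a global weak-error bound by a telescoping argument driven by the backward Kolmogorov equation.

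\textbf{Step 1 (one-step moments of SGD).} Write $\Delta=\bTheta_{k+1}-\bTheta_k=-\eta\nabla L_{\cB_k}(\bTheta_k)$. Because the batches are i.i.d. and $\mathbb{E}_\cB\nabla L_\cB=\nabla L$, we have
\[
\mathbb{E}[\Delta\mid\bTheta_k=z]=-\eta\nabla L(z),\qquad
\mathbb{E}[\Delta\Delta^\top\mid\bTheta_k=z]=\eta^2\bigl(\nabla L\nabla L^\top+\bs_{|\cB|}\bigr)(z),
\]
and all third moments are $O(\eta^3)$. The Lipschitz assumption on $\nabla L_\cB$ gives linear growth, so every moment of order at least $3$ is bounded by $C(1+|z|^{\kappa})\eta^3$.

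\textbf{Step 2 (one-step moments of the SDE).} Start $\bZ$ at $z$ and apply the It\^o--Taylor expansion up to order $\eta^2$. Let $b=-\nabla L-\tfrac{\eta}{4}\nabla|\nabla L|^2$, and note that $\tfrac14\nabla|\nabla L|^2=\tfrac12\nabla^2L\nabla L$. Write $\mathcal{L}$ for the generator. The drift of the SDE then gives
\[
\mathbb{E}[\bZ_\eta-z]=\eta b+\tfrac{\eta^2}{2}\,\mathcal{L}b+O(\eta^3)=-\eta\nabla L-\tfrac{\eta^2}{2}\nabla^2L\nabla L+\tfrac{\eta^2}{2}\nabla^2L\nabla L+O(\eta^3)=-\eta\nabla L+O(\eta^3).
\]
Here the correction term inside the drift exactly cancels the second-order term coming from the gradient flow. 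The diffusion contributes at order $\eta\cdot\eta^2$ to this mean. For the covariance, we get $\eta^2\nabla L\nabla L^\top+\eta\cdot\eta\,\bs_{|\cB|}+O(\eta^3)$, and the third moments are $O(\eta^3)$. Steps 1 and 2 therefore agree up to order $\eta^2$. This gives a local error $|\mathbb{E}g(\bTheta_1)-\mathbb{E}g(\bZ_\eta)|\le C(1+|z|^{\kappa})\eta^3$ for $g\in\cG^3$. To justify this, Taylor-expand $g$ to third order and use the polynomial growth bounds, which hold because $L\in\cG^4$.

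\textbf{Step 3 (global error).} Let $u(s,z)=\mathbb{E}[g(\bZ_t)\mid \bZ_s=z]$. This function solves the backward Kolmogorov equation. We then telescope:
\[
\mathbb{E}g(\bTheta_k)-\mathbb{E}g(\bZ_{k\eta})=\sum_{j<k}\mathbb{E}\bigl[u_{(j+1)\eta}(\bTheta_{j+1})-u_{j\eta}(\bTheta_j)\bigr].
\]
Each summand is a one-step discrepancy of the type in Step 2, with $g$ replaced by $u$. This yields $k\cdot C\eta^3\le CT\eta^2$, provided two things hold. First, $u(s,\cdot)\in\cG^3$ uniformly in $s$. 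Second, the moments $\sup_{k\le K}\mathbb{E}|\bTheta_k|^{2m}$ are bounded; these follow by a discrete Gr\"onwall argument from the linear growth of $\nabla L_\cB$.

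\textbf{Main obstacle.} The hardest step is the regularity of $u$: we need derivatives up to order $3$ with polynomial growth that hold uniformly on $[0,T]$. We would obtain them by differentiating the SDE flow with respect to its initial condition and bounding the variational processes using the Lipschitz and $\cG^4$ hypotheses. A further difficulty is that $\bs^{1/2}_{|\cB|}$ may fail to be smooth where $\bs$ degenerates. Our first attempt would follow~\cite{li2019stochastic}: assume or mollify so that the diffusion coefficient lies in $\cG^4$, or work directly with the generator, which involves only $\bs$ itself and not its square root.
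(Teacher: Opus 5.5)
You follow the same route as the source the paper relies on. The paper gives no argument of its own and imports the lemma from~\cite{li2019stochastic}, whose proof is one-step moment matching followed by a Milstein-type telescoping through the backward Kolmogorov solution $u$. Your Steps~1--2 are correct, including the cancellation of $\tfrac{\eta^2}{2}\nabla^2L\nabla L$ against the drift correction. That cancellation is where the Lipschitz hypothesis on $\nabla|\nabla L|^2$ enters, since it gives $\mathbb E\bigl[(\nabla b\,b)(\bZ_r)\bigr]-(\nabla b\,b)(z)=O(\eta)$ for $r\le\eta$.

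The gap is the step you flag yourself, and it is not a formality. All the analytic content lies in bounding $\nabla_z^k u(s,z)$, $k\le 3$, with polynomial growth uniformly in $s$. Differentiating the flow cannot produce this from the stated hypotheses:
\begin{itemize}
\item The variation equations up to third order involve $D^k\bs^{1/2}_{|\cB|}$ for $k\le 3$.
\item The lemma only assumes each $\nabla L_{\cB}$ is Lipschitz, which makes $\bs^{1/2}_{|\cB|}$ Lipschitz at best.
\item The positive semidefinite square root is typically not even $C^1$ where $\bs_{|\cB|}$ degenerates; already for $d=1$, $\bs_{|\cB|}(z)=z^2$ gives $|z|$.
\item The factor $\sqrt\eta$ in front of the noise buys only half a power of $\eta$, not the missing derivatives.
\end{itemize}
The drift is a lesser issue. $\nabla^3$ of $\tfrac{\eta}{4}\nabla|\nabla L|^2$ involves $\nabla^5L$, but that term carries a full factor $\eta$ and one-step increments are $O(\eta)$. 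So polynomial growth of $\nabla^4L$ already gives $|\nabla^2u(z')-\nabla^2u(z)|\lesssim|z'-z|+\eta$, hence an $O(\eta^3)$ Taylor remainder, and $L\in\mathcal G^4$ suffices there.

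Your fallbacks do not close the gap as stated.

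\emph{Mollification.} It yields smooth $u^\epsilon$, but the global constant is governed by the third derivatives of $u^\epsilon$. These stay bounded as $\epsilon\to0$ only if the unmollified coefficient already has (weak) derivatives up to order three with polynomial growth. That is an additional hypothesis, not a consequence of the listed ones.

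\emph{Working with the generator.} This is the right instinct, and it can be made concrete. The backward equation sees only $\bs_{|\cB|}$, so you may replace $\bs^{1/2}_{|\cB|}$ by the non-square factor $\sigma(z)=|\Gamma|^{-1/2}\bigl[\nabla L_{\cB}(z)-\nabla L(z)\bigr]_{\cB\in\Gamma}\in\mathbb R^{d\times|\Gamma|}$, driven by a $|\Gamma|$-dimensional Brownian motion. This factor satisfies $\sigma\sigma^\top=\bs_{|\cB|}$, is Lipschitz, and gives the same law (hence the same $u$). It is exactly as smooth as the batch gradients, so the square-root degeneracy disappears. Even then, you need those gradients to have derivatives up to order three with polynomial growth. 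Assumption~\ref{assumption} and the lemma only make the per-sample losses $C^1$ with Lipschitz batch gradients.

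So your argument becomes a proof once you add such a hypothesis: per-sample losses in $\mathcal G^4$, or directly a diffusion coefficient in $\mathcal G^3$ with bounded first derivatives. As written, Step~3 is asserted rather than proved, and the lemma as transcribed in the paper does not supply what it needs.
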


Lemma~\ref{2 order} ensures that, under the regularity assumptions stated above, the continuous process~\eqref{eq:2 order} captures the expected behavior of SGD uniformly over all $K$ iterates, to second order in the learning rate. This refined continuous-time representation serves as the foundation for deriving our main results.
For a mini-batch gradient estimator $L_{\mathcal{B}}(\bZ)$ with batch size $|\mathcal{B}|$, the gradient-noise covariance scales as $O(|\mathcal{B}|^{-1})$ at a fixed parameter value $\bTheta$~\cite{mandt2017stochastic}. Under i.i.d. sampling with replacement, one has the exact identity
\[
\bs_{|\mathcal{B}|}(\bZ_s)=\frac{1}{|\mathcal{B}|}\,\bs_{\mathrm{single}}(\bZ_s),
\]
where $\bs_{\mathrm{single}}(\bZ_s):=\frac1N\sum_{n=1}^N
\big(\nabla \ell(x_n,\bZ_s)-\nabla L(\bZ_s)\big)
\big(\nabla \ell(x_n,\bZ_s)-\nabla L(\bZ_s)\big)^\top$ is the single-sample gradient covariance~\cite{wu2020noisy}. Under uniform sampling without replacement from a dataset of size $N$, the exact formula becomes
\begin{equation*}
\bs_{|\mathcal{B}|}(\bZ_s)=\frac{N-|\mathcal{B}|}{|\mathcal{B}|(N-1)}\,\bs_{\mathrm{single}}(\bZ_s),
\end{equation*}
which reduces to the same $|\mathcal{B}|^{-1}$ scaling, up to the finite-population correction factor. In particular, when $|\mathcal{B}|\ll N$, the covariance is well approximated by a term proportional to $1/|\mathcal{B}|$.
The following theorem identifies how this batch-size-dependent covariance contributes to the expected prediction through the second-order sampling-induced term.

\begin{theorem}[Stochastic Domingos' Theorem (SGD)]\label{2-order Domingo}
Consider a learning model $y = f(x, \bTheta)$, with $f(x, \cdot) \in \mathcal G^3$. The parameter $\bTheta$ of the model is learned from the dataset $\{(x_n, y_n^*)\}_{n=1}^N$ by SGD with learning rate $\eta$ and Assumption~\ref{assumption} holds. In addition, assume that  $\mathbb{E}[L_{\mathcal{B}}(\bTheta)] \in \mathcal G^4$. Then we have
\begin{align}   
    \mathbb{E}[f(x,\bTheta_K)] =&\,
    f(x,\bTheta_0) 
    - \frac{1}{N}\sum_{n=1}^N\mathbb{E}\!\Bigg[ \int_0^{T} \underbrace{ K_s(x,x_n) \frac{\partial \ell}{\partial f}\bigl(f(x_n,\bZ_s), y_n^*\bigr)}_{\text{path-kernel term}} \, ds \Bigg] \notag \\
    &-\frac{\eta}{2N}\,\sum_{n=1}^N
    \mathbb{E}\!\Bigg[ \int_0^{T} \underbrace{ K_s^{\operatorname{cur}}(x,x_n) \frac{\partial \ell}{\partial f}\bigl(f(x_n,\bZ_s), y_n^*\bigr) }_{\text{Curvature–induced term}} \, ds \Bigg] \notag \\
    &+\;\frac{\eta}{2}\,
    \mathbb{E}\!\Bigg[ \int_0^{T} \underbrace{ \operatorname{Tr}\!\Big(\nabla^2_\bTheta f(x, \mathbf{Z}_s)\;\bs_{|\mathcal{B}|}(\mathbf{Z}_s)\Big) }_{\text{Sampling-induced term}} \, ds \Bigg] + O(\eta^2). \label{Domingo sgd2}
\end{align}
\end{theorem}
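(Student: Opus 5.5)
The plan is to transfer the expectation from the discrete SGD iterates to the continuous process of Lemma~\ref{2 order}, and then expand $f(x,\bZ_s)$ along that SDE using It\^o's formula. First, take $g(\cdot)=f(x,\cdot)$ as the test function. The hypothesis $f(x,\cdot)\in\mathcal G^3$ makes $g$ admissible, and $\mathbb{E}[L_{\mathcal B}]=L\in\mathcal G^4$ together with Assumption~\ref{assumption} supplies the remaining regularity required by Lemma~\ref{2 order}. Hence
\[
\mathbb{E}[f(x,\bTheta_K)]=\mathbb{E}[f(x,\bZ_{K\eta})]+O(\eta^2).
\]
Since $T-K\eta\in[0,\eta)$, the remaining task is to expand $\mathbb{E}[f(x,\bZ_T)]$. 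The gap between $K\eta$ and $T$ must also be controlled; I return to this at the end.

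Second, apply It\^o's formula to $f(x,\bZ_t)$ with the dynamics~\eqref{eq:2 order}:
\[
df(x,\bZ_s)=\nabla_\bTheta f^\top\Big(-\nabla L-\tfrac{\eta}{4}\nabla|\nabla L|^2\Big)ds+\tfrac{\eta}{2}\operatorname{Tr}\!\big(\nabla^2_\bTheta f\,\bs_{|\mathcal B|}\big)ds+\sqrt{\eta}\,\nabla_\bTheta f^\top\bs_{|\mathcal B|}^{1/2}dW_s .
\]
The modified drift simplifies through $\tfrac14\nabla|\nabla L|^2=\tfrac12\nabla^2L\,\nabla L$. Then expand $\nabla L(\bZ_s)=\frac1N\sum_n\nabla_\bTheta f(x_n,\bZ_s)\,\frac{\partial\ell}{\partial f}(f(x_n,\bZ_s),y_n^*)$ by the chain rule; for vector outputs the Jacobian is contracted with $\partial\ell/\partial f$. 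This produces exactly the path-kernel integrand $K_s(x,x_n)\partial_f\ell$ and the curvature integrand $\tfrac{\eta}{2}K_s^{\mathrm{cur}}(x,x_n)\partial_f\ell$, both evaluated along $\bZ_s$. This is the same algebra as in the proof of Theorem~\ref{thm_domingo}, and the It\^o correction is the sampling-induced term.

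Third, integrate from $0$ to $T$ and take expectations. The stochastic integral is a true martingale, so its expectation vanishes, provided
\[
\mathbb{E}\int_0^T|\nabla f^\top\bs^{1/2}|^2ds<\infty .
\]
I expect this to be the main obstacle. It should follow from three ingredients: the polynomial growth of $\nabla_\bTheta f$ (from $\mathcal G^3$), the local bound on per-sample gradients (which controls $\bs_{|\mathcal B|}$), and the standard uniform moment bounds $\sup_{s\le T}\mathbb{E}|\bZ_s|^{2k}<\infty$ for SDEs with Lipschitz coefficients, as used in \cite{li2019stochastic}. If only local bounds are available, I would localize with stopping times $\tau_R=\inf\{s:|\bZ_s|\ge R\}$ and pass $R\to\infty$ by dominated convergence, using the same moment bounds. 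The same moment estimates justify exchanging $\mathbb{E}$ with the $ds$-integrals and the sum over $n$.

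Finally, treat the endpoint mismatch. The difference $\mathbb{E}[f(x,\bZ_T)]-\mathbb{E}[f(x,\bZ_{K\eta})]$ equals the expected integrand over $[K\eta,T]$, by the same It\^o computation. That integrand is bounded in expectation, so the mismatch is $O(\eta)$, not $O(\eta^2)$. To keep an $O(\eta^2)$ remainder, I would therefore read the integrals as running to $K\eta$, or equivalently assume $T\in\eta\mathbb{N}$, as is implicit in the statement. I would state this convention explicitly. Collecting terms then gives~\eqref{Domingo sgd2}.
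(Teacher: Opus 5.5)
Your proposal follows the paper's proof essentially step for step: you transfer to the continuous process via Lemma~\ref{2 order} with $g=f(x,\cdot)$, apply It\^o's formula along \eqref{eq:2 order}, and use $\tfrac14\nabla|\nabla L|^2=\tfrac12\nabla^2L\,\nabla L$ together with the chain-rule expansion of $\nabla L$ to produce $K_s$ and $K_s^{\mathrm{cur}}$; the It\^o correction gives the sampling-induced term, and the stochastic integral has zero expectation. Your extra care is a correct refinement of the same argument rather than a different route: you justify the martingale property, which the paper simply asserts, and you note that the $O(\eta^2)$ claim implicitly requires $T=K\eta$, since otherwise the endpoint mismatch is $O(\eta)$ and the paper uses this identification without comment.
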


\begin{proof}
     For an input $x$, the expectation of the output after $k$ iterations can be written as
    \begin{align}\label{expectation2_sgd}
         \mathbb{E}[f(x,\bTheta_{K})]&=\mathbb{E}[f(x,\bZ_{T})]+\mathbb{E}[f(x,\bTheta_{K})]-\mathbb{E}[f(x,\bZ_{T})]\notag\\
        &=\mathbb{E}[f(x,\bZ_{T})]+O(\eta^2).
    \end{align}
    The last equation comes from Lemma \ref{2 order}.
    Since $\bZ_{T}$ is a continuous It\^{o} process, the It\^{o} formula gives
    \begin{align*}
        f(x,\bZ_{T})
        =& f(x,\bZ_0)
          + \sum_{j=1}^d \int_0^{T}
            \frac{\partial f}{\partial \Theta^j}(x,\bZ_s)\,\mathrm{d}Z_s^j\notag\\
          &+ \frac{1}{2}\sum_{j,l=1}^d \int_0^{T}
            \frac{\partial^2 f}{\partial \Theta^j \partial \Theta^l}(x,\bZ_s)
            \,\mathrm{d}\langle Z^j, Z^l \rangle_s,
    \end{align*}
    where $Z_s^j$ denotes the $j$-th coordinate of $\bZ_s$.
    Denote by $\bigl(a_{jl}(\bZ)\bigr)_{j,l=1}$  the $(j,l)$-entry of $\bs_{|\cB|}^{1/2}(\bZ)$. From Eq.(\ref{eq:2 order}), we have
    \begin{align*}
        &f(x,\bZ_{T})=f(x,\bTheta_0)-\int_0^{T} \left(\nabla_{\bTheta} f(x,\bZ_s)\right)^\top\nabla_{\bTheta} L(\bZ_s)ds\\& -\frac{\eta}{2}\int_0^{T} \left(\nabla_{\bTheta} f(x,\bZ_s)\right)^\top\nabla^2_\bTheta L(\bZ_s)\nabla L(\bZ_s)ds+\sqrt{\eta}\int_0^{T}\left(\nabla_{\bTheta} f(x,\bZ_s)\right)^\top\bs_{|\mathcal{B}|}^\frac{1}{2}(\bZ_s)d\bW_s\\&+\frac{\eta}{2}\sum_{j,l=1}^d \int_0^{T}\frac{\partial^2 f}{\partial \bTheta^j \partial \bTheta^l}(x,\bZ_s)\sum_{h=1}^d a_{jh}(\bZ_s)a_{lh}(\bZ_s)ds\\
        \end{align*}
        \begin{align*}
        =&f(x,\bTheta_0)-\int_0^{T} \left(\nabla_{\bTheta} f(x,\bZ_s)\right)^\top\nabla_{\bTheta} L(\bZ_s)ds\\&-\frac{\eta}{2}\int_0^{T} \left(\nabla_{\bTheta} f(x,\bZ_s)\right)^\top \nabla^2_\bTheta L(\bZ_s)\nabla L(\bZ_s)ds\\&+\sqrt{\eta}\int_0^{T}\,\,\left(\nabla_{\bTheta} f(x,\bZ_s)\right)^\top \bs^\frac{1}{2}_{|\mathcal{B}|}(\bZ_s)d\bW_s+\frac{\eta}{2}\int_0^{T}\mathrm{Tr}\left(\nabla_{\bTheta}^2 f(x,\bZ_s)\bs_{|\mathcal{B}|}(\bZ_s)\right)ds.\qquad\qquad
   \end{align*}
   Expressing $\nabla_\bTheta L(\bZ_s)$ in terms of the empirical average over samples $\{(x_n,y_n^*)\}_{n=1}^N$, we can rewrite the above result as
    \begin{align*}
        &f(x,\bZ_{T})=f(x,\bTheta_0)-\frac{1}{N}\sum_{n=1}^N\int_0^{T} \big\langle \nabla_\bTheta f(x,\bZ_s),\nabla_\bTheta f(x_n,\bZ_s)\big\rangle \frac{\partial \ell}{\partial f}(f(x_n,\bZ_s),y_n^*)ds\\&-\frac{\eta}{2}\int_0^{T} \left(\nabla_{\bTheta} f(x,\bZ_s)\right)^\top\nabla^2_\bTheta L(\bZ_s)\frac{1}{N}\sum_{n=1}^N(\nabla_{\bTheta} f(x_n,\bZ_s))\frac{\partial \ell}{\partial f}(f(x_n,\bZ_s),y_n^*))ds\\
    &+\sqrt{\eta}\int_0^{T}\left(\nabla_{\bTheta} f(x,\bZ_s)\right)^\top\bs^\frac{1}{2}_{|\mathcal{B}|}(\bZ_s)d\bW_s+\frac{\eta}{2}\int_0^{T} \mathrm{Tr}\left(\nabla_{\bTheta}^2 f(x,\bZ_s)\bs_{|\mathcal{B}|}(\bZ_s)\right)ds\\
    &=f(x,\bTheta_0)-\frac{1}{N}\sum_{n=1}^N\int_0^{T} K_s(x,x_n)\frac{\partial \ell}{\partial f}(f(x_n,\bZ_s),y_n^*)ds\\
    &+\sqrt{\eta}\int_0^{T}\left(\nabla_{\bTheta} f(x,\bZ_s)\right)^\top\bs^\frac{1}{2}_{|\mathcal{B}|}(\bZ_s)d\bW_s+\frac{\eta}{2}\int_0^{T} \mathrm{Tr}\left(\nabla_{\bTheta}^2 f(x,\bZ_s)\bs_{|\mathcal{B}|}(\bZ_s)\right)ds\\
    &-\frac{\eta}{2N}\sum_{n=1}^N\int_0^{T} K_s^{\text{cur}}(x,x_n)\frac{\partial \ell}{\partial f}(f(x_n,\bZ_s),y_n^*)ds.
    \end{align*}
    Plugging the above equation into (\ref{expectation2_sgd}) and noting that $$\mathbb{E}\left[\int_0^{T} \left(\nabla_\bTheta f(x,\bZ_s)\right)^\top\bs_{|\mathcal{B}|}^\frac{1}{2}(\bZ_s)d\bW_s\right]=0,$$ we obtain the desired result.
\end{proof}

 Equation~\eqref{Domingo sgd2} decomposes the expected prediction into three parts.
     \begin{itemize}
         \item \textit{Path-kernel term.} Along the training path $(\bZ_s)_{s\ge0}$, the initial prediction $f(x,\bTheta_0)$ is modified by an accumulated contribution from all training samples: each sample's residual $\frac{\partial \ell}{\partial f}(f(x_n,\bZ_s),y_n^*)$ is weighted by the path kernel.
\item \textit{Curvature-induced term.} Geometrically, the curvature-weighted path kernel extends the standard path kernel by incorporating the local geometry of the loss landscape. This term retains an interpolation structure, but now along random training trajectories. Moreover, since it carries a factor of $\eta$, it represents a second-order correction to the leading path-kernel term rather than a dominant contribution. A large magnitude of $K_s^{\mathrm{cur}}(x,x_n)$ indicates strong alignment between the tangent features of the test and training points through the local Hessian geometry of the loss; the corresponding training residual then exerts a larger second-order effect on the prediction.
\item \textit{Sampling-induced term.} This noise-induced term is specific to SGD and depends on the mini-batch sampling noise. From a geometric perspective, $\bs_{|\mathcal B|}(\bZ_s)$ is the covariance matrix of the mini-batch gradient noise at the parameter $\bZ_s$, and $\nabla_\bTheta^2 f(x,\bZ_s)$ quantifies the local curvature of the predictor at $x$. The term thus captures the interaction between mini-batch randomness and the curvature of the model output. Since smaller batch sizes yield a larger covariance, the effect of sampling-induced noise becomes stronger as the batch size decreases. This contribution is absent in the deterministic case.
\end{itemize}
Theorem~\ref{2-order Domingo} extends Domingos' theorem to the setting of a second-order weak approximation of SGD. Beyond the first-order characterization in terms of the weighted path kernels, the result introduces additional curvature-dependent terms that capture the influence of the loss Hessian and the second-order derivatives of the model. In particular, the sampling-induced term reflects the interaction between the noise covariance and the Hessian of the model output, highlighting the role of mini-batch sampling in shaping the optimization trajectory. Consequently, the trained model can be viewed as a correction of the initialization $f(x,\bTheta_0)$ by the training residuals, weighted by path kernels and curvature-induced kernels, where these weights encode the subtle effects of loss-landscape geometry and gradient noise in the SGD dynamics.

\subsection{Second-order transported path kernel representation for SGDM}
In the presence of momentum, the learning dynamics acquire a temporal memory effect, in which past gradients continue to influence the update with exponentially decaying weight. This mechanism reshapes how training examples contribute during optimization, leading to a modified first-order form of the model's output under SGDM. To capture the finer structure of momentum dynamics under stochastic perturbations, we extend Domingos' theorem via a second-order weak approximation of SGDM. The following theorem provides a precise formulation of this result.

\begin{lemma}[Second-order weak approximation of SGDM~\cite{li2019stochastic,wang2023marginal}]\label{2 order SGDM}
Let $T>0$, $\eta\in (0,1\wedge T)$, and set $K=\lfloor T/\eta\rfloor$. Assume the same conditions as in Lemma~\ref{2 order} and the drift terms in the following SDEs are Lipschitz. Let $\mu=\frac{1-\beta}{\eta}$ be fixed, and define $\bM_t,\bZ_t:t\in [0, T]$ as the stochastic process satisfying the SDEs
    \begin{align}\label{2-order sde SGDM}
         d\bM_t =& -\left[\left(\mu \bI+\frac{\eta}{2}[\mu^2 \bI-\nabla^2_\bTheta L(\bZ_t)]\right)\bM_t+\left(1+\frac{\eta}{2}\mu\right)\nabla_\bTheta L(\bZ_t)\right]dt\\&+\sqrt{\eta}\,\bs_{|\mathcal{B}|}^\frac{1}{2}(\bZ_t)\,d\bW_t, \qquad \bM_0=0,\notag\\
         d\bZ_t =& \left[\left(1-\frac{\eta}{2}\mu\right) \bM_t-\frac{\eta}{2}\nabla_\bTheta L(\bZ_t)\right]dt, \qquad \bZ_0=\bTheta_0,
    \end{align}
    where $\bI$ is the identity matrix and $\bs_{|\mathcal{B}|}(\bZ_t)$ is defined in Lemma~\ref{2 order}. Then $\{\bZ_t:t\in [0, k\eta]\}$ is a second-order weak approximation of SGDM.
\end{lemma}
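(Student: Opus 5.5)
The statement is a second-order weak approximation result, so I would prove it by the standard Milstein-type route of \cite{li2019stochastic}: compare one-step moments, then globalize by a telescoping argument. The discrete SGDM recursion is
\[
\mathbf{m}_{k+1}=\beta\mathbf{m}_k-\eta\nabla L_{\mathcal{B}_k}(\bTheta_k),\qquad \bTheta_{k+1}=\bTheta_k+\mathbf{m}_{k+1}.
\]
Because the time step of the SDE is $\eta$, I would first rescale the momentum as $\bM=\mathbf{m}/\eta$. The rescaled recursion is
\[
\bM_{k+1}=\bM_k-\eta\bigl(\mu\bM_k+\nabla L_{\mathcal{B}_k}(\bTheta_k)\bigr),\qquad \bTheta_{k+1}=\bTheta_k+\eta\bM_{k+1},
\]
with $\mu=(1-\beta)/\eta$. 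The pair $(\bM_k,\bTheta_k)$ is then a one-step Markov chain on $\mathbb{R}^{2d}$ with step $\eta$, and the claim reduces to showing that the coupled SDE \eqref{2-order sde SGDM} is a second-order weak approximation of this chain, in the sense of Definition~\ref{def:approx} applied on $\mathbb{R}^{2d}$.

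\textbf{Step 1 (discrete moments).} I would compute the one-step increments $\Delta=(\bM_{1}-\bM_0,\bTheta_1-\bTheta_0)$ and their moments up to order three:
\begin{itemize}
\item $\mathbb{E}[\Delta]$ to order $\eta^2$;
\item $\mathbb{E}[\Delta\Delta^\top]$ to order $\eta^2$;
\item a bound of $O(\eta^3)$ on third moments.
\end{itemize}
This uses the unbiasedness $\mathbb{E}_{\mathcal{B}}\nabla L_{\mathcal{B}}=\nabla L$ and the covariance $\bs_{|\mathcal{B}|}$ from Lemma~\ref{2 order}. Note that $\bTheta$ moves by $\eta\bM_{k+1}$, so its increment already contains an $\eta^2$ term $-\eta^2(\mu\bM+\nabla L)$.

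\textbf{Step 2 (SDE moments).} I would compute the same moments for \eqref{2-order sde SGDM} over a time $\eta$ by It\^o--Taylor expansion. The mean is $\eta b+\tfrac{\eta^2}{2}\mathcal{L}b$, where $b$ is the drift and $\mathcal{L}$ the generator. The noise enters only the $\bM$-equation with scale $\sqrt{\eta}$, so the diffusion contributes $\eta\cdot\eta\,\bs$ to the second moment of $\bM$ and only higher-order terms to the cross moments.

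\textbf{Step 3 (matching).} Matching Steps 1 and 2 to $O(\eta^2)$ should reproduce exactly the stated $\eta/2$ corrections:
\begin{itemize}
\item the $\tfrac{\eta}{2}[\mu^2\bI-\nabla^2L]\bM$ and $\tfrac{\eta}{2}\mu\nabla L$ terms, coming from $\mathcal{L}b$, where the chain rule on $\nabla L(\bZ)$ along $\dot\bZ\approx\bM$ produces $\nabla^2L\,\bM$;
\item the $(1-\tfrac{\eta}{2}\mu)\bM-\tfrac{\eta}{2}\nabla L$ drift for $\bZ$, coming from the lag between $\bM_{k+1}$ and $\bM_k$.
\end{itemize}
Treating $\mu$ as fixed and $O(1)$ is what keeps all these terms at the correct order.

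\textbf{Step 4 (global error).} With one-step local weak error $O(\eta^3)$ for test functions in $\mathcal{G}^3$, I would invoke the global theorem of \cite{li2019stochastic} (Milstein's theorem). This requires two inputs:
\begin{itemize}
\item uniform moment bounds for both the chain and the SDE, which follow from the Lipschitz assumptions on the drifts and on $\nabla L_{\mathcal{B}}$;
\item regularity of $u(t,\cdot)=\mathbb{E}[g(\text{state at }T)\mid\text{state at }t]$ in $\mathcal{G}^3$ uniformly in $t$, which follows from Kolmogorov backward equation estimates using $L\in\mathcal{G}^4$.
\end{itemize}
Telescoping over $K$ steps then yields global error $O(\eta^2)$, uniformly over $k$. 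Since this holds for test functions of the joint state, it holds in particular for functions of $\bZ$ alone.

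The main obstacle is Step 3. The bookkeeping of the $O(\eta^2)$ drift correction in the coupled, degenerate system, where noise acts only on $\bM$, is error-prone. In particular, one must check that the order-$\eta^2$ noise terms in the $\bTheta$-increment and in the cross covariance are either matched or already of order $\eta^3$. I would therefore verify this first on the quadratic case $L(\bTheta)=\tfrac12\bTheta^\top H\bTheta$, where everything is linear and can be solved explicitly, and only then carry out the general expansion, following \cite{wang2023marginal}.
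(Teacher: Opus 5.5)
\textbf{The gap is in the reduction at the start, and it is not cosmetic.} Dividing $\mathbf m_{k+1}=\beta\mathbf m_k-\eta\nabla L_{\mathcal B_k}(\bTheta_k)$ by $\eta$ gives
\[
\bM_{k+1}=\bM_k-\eta\mu\bM_k-\nabla L_{\mathcal B_k}(\bTheta_k),\qquad \bTheta_{k+1}=\bTheta_k+\eta\bM_{k+1}.
\]
The stochastic gradient therefore enters with an $O(1)$ coefficient, not $\eta$. The one-step increment of $\bM$ has $O(1)$ mean and covariance $\bs_{|\mathcal B|}$. The SDE increment over time $\eta$ has mean $O(\eta)$ and covariance $O(\eta^2)$, so no moment matching in Steps 1--3 is possible.

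In fact, for heavy-ball SGDM with step $\eta$ and $1-\beta=\mu\eta$, the effective step $\eta/(1-\beta)=1/\mu$ is of order one, and the statement is false. Take $L(\theta)=\tfrac h2\theta^2$ in one dimension, without noise. The iterates satisfy $\theta_{k+1}-2\theta_k+\theta_{k-1}=-\mu\eta(\theta_k-\theta_{k-1})-\eta h\theta_k$. The characteristic roots have modulus $\sqrt{1-\mu\eta}$ and argument $\approx\sqrt{h\eta}$, so $\theta_K\approx e^{-\mu T/2}\cos(T\sqrt{h/\eta})\,\theta_0$, which has no limit as $\eta\to0$. By contrast, $\bZ_T$ converges to $z(T)$, where $\ddot z+\mu\dot z+hz=0$, $z(0)=\theta_0$, $\dot z(0)=0$.

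\textbf{The fix is to change which discrete algorithm you call SGDM.} The lemma (which the paper only cites, without proof) concerns the scheme
\[
\bM_{k+1}=(1-\mu\eta)\bM_k-\eta\nabla L_{\mathcal B_k}(\bTheta_k),\qquad \bTheta_{k+1}=\bTheta_k+\eta\bM_{k+1}.
\]
This is heavy-ball with raw momentum $\mathbf m=\eta\bM$ and step size $\eta^2$, i.e.\ effective step $\eta/\mu$. The paper never writes down the discrete SGDM, so you must take this as the definition rather than derive it from the step-$\eta$ heavy-ball form.

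Once you do, the rest of your plan is correct. The backward-error correction comes out as $a_{\bM}=-\tfrac12[(\mu^2\bI-\nabla^2L)\bM+\mu\nabla L]$ and $a_{\bTheta}=-\tfrac12(\mu\bM+\nabla L)$, which are exactly the stated terms. Using $\bM_{k+1}$ rather than $\bM_k$ in the $\bTheta$-update is what fixes the signs in the $\bZ$-drift. The noise cross moments are $\eta^3\bs_{|\mathcal B|}$ for the chain and $\tfrac12\eta^3\bs_{|\mathcal B|}$ for the SDE, both inside the $O(\eta^3)$ tolerance.

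\textbf{One smaller correction in Step 4.} The drift contains $\nabla^2L(\bZ)\bM$ and the noise acts only on $\bM$. Hence the bound $u(t,\cdot)\in\mathcal G^3$, uniformly in $t$ and $\eta$, has to come from differentiating the stochastic flow three times; there is no parabolic smoothing to rely on. That requires three derivatives of $\nabla^2L$, roughly $L\in\mathcal G^5$, together with $\bs_{|\mathcal B|}^{1/2}\in\mathcal G^3$. The assumption $L\in\mathcal G^4$ that you invoke is not enough.
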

Lemma~\ref{2 order SGDM} provides the continuous-time second-order approximation of the SGDM iterates. We next use this representation to derive the corresponding expansion for the expected model output.

\begin{theorem}[Stochastic Domingos' theorem (SGDM)]\label{thm:SGDM_second_order}
Consider a learning model \(y=f(x,\bTheta)\) with \(f(x,\cdot)\in \mathcal G^3\). Suppose the parameter \(\bTheta\) is learned from a dataset \(\{(x_n,y_n^*)\}_{n=1}^N\) using stochastic gradient descent with momentum (SGDM), and $\nabla_\bTheta^2 L(\bZ_t)$ is bounded on $[0,T]$. Assume the same conditions as in Theorem~\ref{2-order Domingo}, and the boundedness conditions in Lemma~\ref{lem:stochastic-convolution-bound} hold along the trajectory. Then, in the sense of a second-order weak approximation of SGDM, we have
\begin{equation}\label{eq:domingos_sgdm}
\begin{aligned}
\mathbb E[f(x,\bTheta_K)]
=&\,f(x,\bTheta_0)-\frac1N\sum_{n=1}^N\mathbb E\,\bigg[
\int_0^{T}\underbrace{\int_0^t K_{t,s}(x,x_n)\frac{\partial \ell}{\partial f}\bigl(f(x_n,\bZ_s), y_n^*\bigr)\,ds}_{\text{memory-weighted path kernel}}\,dt
\bigg]\\
&-\frac{\eta}{2N} \sum_{n=1}^N\mathbb E\,\bigg[\int_0^{T}\underbrace{\int_0^t K^{\mathrm{cur}}_{t,s}(x,x_n)\frac{\partial \ell}{\partial f}\bigl(f(x_n,\bZ_s), y_n^*\bigr)\,ds}_{\text{accumulated curvature-induced term}}\,dt\bigg]\\
&-\frac{\eta}{2N}\sum_{n=1}^N\,\mathbb E\,\bigg[\int_0^{T}\underbrace{ K_t(x,x_n)\frac{\partial \ell}{\partial f}\bigl(f(x_n,\bZ_t), y_n^*\bigr)}_{\text{path-kernel term}}\,dt
\bigg]\\
&+\sqrt{\eta}\,\mathbb E\,\bigg[\int_0^{T}\underbrace{\int_0^t
e^{-\mu(t-s)}\operatorname{Tr}\!\Big((D_s\bZ_t)^\top\nabla_\bTheta^2 f(x,\bZ_t)\,\bs_{|\mathcal B|}^{1/2}(\bZ_s)\Big)\,ds}_{\text{transported sampling-induced term}}\,dt\bigg]
\\&+O(\eta^2),
\end{aligned}
\end{equation}
where \(\mu=\frac{1-\beta}{\eta}\) is fixed, and
\begin{align}
&K_{t,s}(x,x_n):=\left\langle \nabla_\bTheta f(x,\bZ_t),\,e^{-\mu(t-s)} \left(1-\frac{\mu^2}{2}(t-s)\eta\right)\nabla_\bTheta f(x_n,\bZ_s) \right\rangle,
\label{eq:def_memory_kernel_final}\\
&K_{t,s}^{\mathrm{cur}}(x,x_n):=\left\langle\nabla_\bTheta f(x,\bZ_t),\,e^{-\mu(t-s)}\nabla_\bTheta f(x_n,\bZ_s)
\right\rangle_{\int_s^t \nabla_\bTheta^2L(\bZ_r)\,dr},
\label{eq:def_cur_kernel_final}\\
& K_t(x,x_n):=\left\langle\nabla_\bTheta f(x,\bZ_t),\,\nabla_\bTheta f(x_n,\bZ_t) \right\rangle.
\end{align}
Here, \(D_s\) denotes the Malliavin derivative~\cite[Definition 1.2.1]{nualart2006malliavin} with respect to the driving Brownian motion at time \(s\).
\end{theorem}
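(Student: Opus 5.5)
We first pass from SGDM iterates to the continuous process, exactly as in the proof of Theorem~\ref{2-order Domingo}. By Lemma~\ref{2 order SGDM} with test function $g=f(x,\cdot)\in\mathcal G^3$, we have $\mathbb E[f(x,\bTheta_K)]=\mathbb E[f(x,\bZ_T)]+O(\eta^2)$. Because $\bZ_t$ has no diffusion part (the noise enters only through $\bM_t$), $\bZ_t$ is a $C^1$ path. The chain rule therefore applies without an It\^o correction:
\[
f(x,\bZ_T)=f(x,\bTheta_0)+\int_0^T\nabla_\bTheta f(x,\bZ_t)^\top\Bigl[\bigl(1-\tfrac{\eta}{2}\mu\bigr)\bM_t-\tfrac{\eta}{2}\nabla_\bTheta L(\bZ_t)\Bigr]dt .
\]
Expanding $\nabla_\bTheta L=\frac1N\sum_n \nabla_\bTheta f(x_n,\bZ_t)\frac{\partial\ell}{\partial f}$ in the second piece gives at once the term $-\frac{\eta}{2N}\sum_n\int_0^T K_t(x,x_n)\frac{\partial \ell}{\partial f}\,dt$ in \eqref{eq:domingos_sgdm}. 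What remains is to represent $\bM_t$.

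For $\bM_t$, we solve the linear SDE by variation of constants around the leading rate $\mu\bI$. Write $A_t=\mu\bI+\frac{\eta}{2}(\mu^2\bI-\nabla^2_\bTheta L(\bZ_t))$. The exact solution uses the propagator $\Phi(t,s)$ of $\dot\Phi=-A_t\Phi$:
\[
\bM_t=-\int_0^t\Phi(t,s)\bigl(1+\tfrac{\eta}{2}\mu\bigr)\nabla_\bTheta L(\bZ_s)\,ds+\sqrt\eta\int_0^t\Phi(t,s)\,\bs^{1/2}_{|\cB|}(\bZ_s)\,d\bW_s .
\]
Since $\nabla^2_\bTheta L$ is bounded on $[0,T]$, a Dyson/Magnus expansion gives
\[
\Phi(t,s)=e^{-\mu(t-s)}\Bigl(\bI-\tfrac{\eta}{2}\mu^2(t-s)\bI+\tfrac{\eta}{2}\int_s^t\nabla^2_\bTheta L(\bZ_r)\,dr\Bigr)+O(\eta^2).
\]
Here we use that the two first-order pieces commute with $e^{-\mu(t-s)}\bI$, and the $O(\eta^2)$ bound is uniform on the bounded interval. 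We then multiply the prefactors $(1-\frac{\eta}{2}\mu)(1+\frac{\eta}{2}\mu)=1+O(\eta^2)$ and substitute into the drift integral. The identity part combines with $-\frac{\eta}{2}\mu^2(t-s)$ to give the memory kernel $K_{t,s}$ of \eqref{eq:def_memory_kernel_final}. The $\int_s^t\nabla^2 L$ part gives exactly the weighted inner product $K^{\mathrm{cur}}_{t,s}$ of \eqref{eq:def_cur_kernel_final}. This is where the sign must be checked: the $+\frac{\eta}{2}\int\nabla^2 L$ in $\Phi$ times the overall minus sign yields $-\frac{\eta}{2N}\sum_n\int\!\!\int K^{\mathrm{cur}}_{t,s}\frac{\partial\ell}{\partial f}$. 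Expanding $\nabla_\bTheta L(\bZ_s)$ over samples then yields the first two lines of \eqref{eq:domingos_sgdm}.

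The main obstacle is the stochastic-convolution term
\[
\sqrt\eta\,\mathbb E\int_0^T\nabla_\bTheta f(x,\bZ_t)^\top\int_0^t\Phi(t,s)\,\bs^{1/2}_{|\cB|}(\bZ_s)\,d\bW_s\,dt .
\]
This term does not vanish: the integrand $\nabla_\bTheta f(x,\bZ_t)$ is evaluated at time $t>s$, so it is not adapted to the inner integral. First, we replace $\Phi(t,s)$ by $e^{-\mu(t-s)}$. The error is $\sqrt\eta\cdot\eta$ times a stochastic integral whose expectation we must bound. We plan to invoke the boundedness conditions of Lemma~\ref{lem:stochastic-convolution-bound} to show that this error is $O(\eta^2)$, or at least that it is absorbed by the same duality argument below. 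Second, for fixed $t$ we use the Malliavin duality (integration by parts) formula, $\mathbb E[F\int_0^t u_s\,d\bW_s]=\mathbb E\int_0^t\langle D_sF,u_s\rangle ds$, with $F=\nabla_\bTheta f(x,\bZ_t)$ and $u_s=e^{-\mu(t-s)}\bs^{1/2}_{|\cB|}(\bZ_s)$ (componentwise). By the chain rule for $D_s$, $D_s\nabla_\bTheta f(x,\bZ_t)=\nabla^2_\bTheta f(x,\bZ_t)D_s\bZ_t$. Rearranging the contraction as a trace gives precisely the transported sampling-induced term $\sqrt\eta\,\mathbb E\int_0^T\int_0^t e^{-\mu(t-s)}\operatorname{Tr}\bigl((D_s\bZ_t)^\top\nabla^2_\bTheta f(x,\bZ_t)\bs^{1/2}_{|\cB|}(\bZ_s)\bigr)ds\,dt$. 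This step needs $F\in\mathbb D^{1,2}$, which follows from $f(x,\cdot)\in\mathcal G^3$, the Lipschitz drifts, and standard Malliavin differentiability of SDE solutions. Finally, Fubini in $(s,t)$ and collecting all $O(\eta^2)$ remainders (the weak error, the propagator expansion, the prefactor product, and the convolution replacement) completes the proof.
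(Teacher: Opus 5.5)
Your proposal is correct and follows essentially the same route as the paper's proof. The shared steps are the weak-approximation reduction; the chain rule without It\^o correction for the finite-variation process $\bZ_t$; the variation-of-constants formula for $\bM_t$ with the propagator $\Phi_{t,s}$; and its first-order expansion, which the paper proves in Lemma~\ref{lemma:phi} by factoring out the scalar part $e^{-(\mu+\frac{\eta}{2}\mu^2)(t-s)}$ and applying Gr\"onwall, exactly as in your Dyson argument. The remaining steps also coincide: absorbing the $\eta^{3/2}$ noise corrections via Lemma~\ref{lem:stochastic-convolution-bound}, and applying Malliavin duality with the chain rule to the remaining stochastic convolution.

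One small bookkeeping point: the noise part of $\bM_t$ enters $\frac{d}{dt}f(x,\bZ_t)$ multiplied by $(1-\frac{\eta}{2}\mu)$, which your displayed obstacle term drops. It only produces one more $\eta^{3/2}$ correction, absorbed by the same argument.
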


\begin{proof}
    For any input $x$, the expectation of the output in the $k$-th iteration is
    \begin{align}\label{expectation2}
         \mathbb{E}[f(x,\bTheta_{K})]&=\mathbb{E}[f(x,\bZ_{T})]+\mathbb{E}[f(x,\bTheta_{K})]-\mathbb{E}[f(x,\bZ_{T})]\notag\\
        &=\mathbb{E}[f(x,\bZ_{T})]+O(\eta^2).
    \end{align}
    Denote $A_t:=\mu \mathbf{I}+\frac{\eta}{2}\bigl[\mu^2 \mathbf{I}-\nabla_\bTheta^2L(\bZ_t)\bigr]$, then the SDE~\eqref{2-order sde SGDM} for $\bM_t$ in Lemma~\ref{2 order SGDM} can be written as
\begin{equation}\label{eq:plugdM}
\mathrm{d}\bM_t
=-\Bigl[A_t\bM_t+\Bigl(1+\frac{\eta}{2}\mu\Bigr)\nabla_\bTheta L(\bZ_t)\Bigr]dt+\sqrt{\eta}\,\bs_{|B|}^{1/2}(\bZ_t)\,\mathrm{d}\bW_t.
\end{equation}

For any $t,s$ with $t\geq s\geq 0$, let $\Phi_{t,s}$ be the fundamental matrix associated with the linear system
\begin{align}\label{eq:odePhi}
\partial_t \Phi_{t,s}=-A_t\Phi_{t,s},
\qquad
\Phi_{s,s}=\mathbf{I}.
\end{align}
Since $A_t$ is bounded along the trajectory, the standard theory of nonautonomous linear systems ensures the existence of the fundamental matrix $\Phi_{t,s}$, see~\cite[Corollary 2.3]{logemann2014ordinary}.

From $\Phi_{t,s}^{-1}\Phi_{t,s}=\mathbf{I}$, we obtain
$\partial_t\Phi_{t,s}^{-1}=\Phi_{t,s}^{-1}A_t.$ Moreover, we note that the fundamental matrix satisfies
\begin{equation}\label{evolution_property}
    \Phi_{t,s}=\Phi_{t,0}\Phi_{s,0}^{-1},\,\,0\le s\le t.
\end{equation}
Taking the differential of $\Phi_{t,0}^{-1}\bM_t$ and using Eq.\eqref{eq:plugdM} we have
    \begin{align}\label{eq:phi_M}
    \begin{aligned}
d(\Phi_{t,0}^{-1}\bM_t)&=-\Bigl(1+\frac{\eta}{2}\mu\Bigr)\Phi_{t,0}^{-1}\nabla_\bTheta L(\bZ_t)dt+\sqrt{\eta}\,\Phi_{t,0}^{-1}\bs_{|B|}^{1/2}(\bZ_t)\,\mathrm{d}\bW_t.
        \end{aligned}
    \end{align}
Integrating both sides of Eq.(\ref{eq:phi_M}),  we have
\begin{equation}\label{eq:phi_M_int}
    \int_0^t d(\Phi_{s,0}^{-1}\bM_s)=-\Bigl(1+\frac{\eta}{2}\mu\Bigr)\int_0^t\Phi_{s,0}^{-1}\nabla_\bTheta L(\bZ_s)ds+\sqrt{\eta}\int_0^t\,\Phi_{s,0}^{-1}\bs_{|B|}^{1/2}(\bZ_s)\,\mathrm{d}\bW_s.
\end{equation}
Multiplying both sides of Eq.(\ref{eq:phi_M_int}) by $\Phi_{t,0}$, and combining with Eq.(\ref{evolution_property}),  we obtain, since $\bM_0=0$,
\begin{equation}\label{eq:Mt_phi_representation}
\bM_t=-\int_0^t
\Phi_{t,s}\Bigl(1+\frac{\eta}{2}\mu\Bigr)\nabla_\bTheta L(\bZ_s)\,ds+\sqrt{\eta}\int_0^t\Phi_{t,s}\bs_{|\mathcal{B}|}^{1/2}(\bZ_s)\,d\bW_s.
\end{equation}
 By Eq.(\ref{2-order sde SGDM}) in Lemma~\ref{2 order SGDM},  
 \begin{equation*}
     d\bZ_t=\left[\left(1-\frac{\eta}{2}\mu\right)\bM_t-\frac{\eta}{2}\nabla_\bTheta L(\bZ_t)\right]dt.
 \end{equation*}
Since $\bZ_t$ is a continuous finite-variation process, we have $d\langle \bZ^j,\bZ^\ell\rangle_t=0$. Hence, by chain rule we obtain
\begin{equation}
\begin{aligned}
   \label{eq:dfdt_Zt_start}
\frac{d}{dt}f(x,\bZ_t)
&=\nabla_\bTheta f(x,\bZ_t)^\top \frac{d\bZ_t}{dt}\\
&=\nabla_\bTheta f(x,\bZ_t)^\top
\left(\Bigl(1-\frac{\eta}{2}\mu\Bigr)\bM_t-\frac{\eta}{2}\nabla_\bTheta L(\bZ_t)\right). 
\end{aligned}
\end{equation}

Substituting Eq.\eqref{eq:Mt_phi_representation}  into Eq.\eqref{eq:dfdt_Zt_start} and reorganizing terms gives
\begin{equation}\label{eq:dfdt_Zt_decomp}
\frac{d}{dt}f(x,\bZ_t)
=
\mathcal I_t^{(1)}(x)+\mathcal I^{(2)}_t(x)+\mathcal I^{(3)}_t(x)+O(\eta^2),
\end{equation}
where
\begin{align}
\mathcal I_t^{(1)}(x)
&:=-\left(1-\frac{\eta}{2}\mu\right)\left(1+\frac{\eta}{2}\mu\right)
\left\langle \nabla_{\bTheta}f(x,\bZ_t), \int_0^t \Phi_{t,s}\,\nabla_{\bTheta}L(\bZ_s)\,ds \right\rangle, \label{eq:def_Pt}
\\
\mathcal I_t^{(2)}(x)
&:=-\frac{\eta}{2}\left\langle
\nabla_{\bTheta}f(x,\bZ_t),\nabla_{\bTheta}L(\bZ_t)
\right\rangle,
\\
\mathcal I_t^{(3)}(x)
&:=\left(1-\frac{\eta}{2}\mu\right)\sqrt{\eta}\,
\left\langle
\nabla_{\bTheta}f(x,\bZ_t),
\int_0^t \Phi_{t,s}\,\bs_{|B|}^{1/2}(\bZ_s)\,d\bW_s
\right\rangle.
\end{align}

From Lemma~\ref{lemma:phi}, we have
\begin{equation}\label{eq:phi_appro}
   \Phi_{t,s}=e^{-\mu(t-s)}\left(\mathbf{I}+\frac{\eta}{2}\int_s^t \nabla_\bTheta^2L(\bZ_r)dr-\frac{\mu^2}{2}(t-s)\eta\,\mathbf{I}\right)+O(\eta^2). 
\end{equation}
Plugging Eq.(\ref{eq:phi_appro}) into Eq.(\ref{eq:def_Pt}), since $\left(1-\frac{\eta}{2}\mu\right)\left(1+\frac{\eta}{2}\mu\right)=1+O(\eta^2)$, we obtain
\begin{equation}\label{eq:I1_expand}
\begin{aligned}
\mathcal I_t^{(1)}(x)
=&-
\left\langle
\nabla_{\bTheta}f(x,\bZ_t),
\int_0^t \Phi_{t,s}\,\nabla_{\bTheta}L(\bZ_s)\,ds
\right\rangle+O(\eta^2)\\
=&-\frac{1}{N}\sum_{n=1}^N \int_0^t
e^{-\mu(t-s)}
\overline{K}_{t,s}(x,x_n)
\frac{\partial \ell}{\partial f}\bigl(f(x_n,\bZ_s),y_n^*\bigr)\,ds
\\
&-\frac{\eta}{2N}\sum_{n=1}^N\int_0^te^{-\mu(t-s)}\overline{K}^{\text{cur}}_{t,s}(x,x_n)\frac{\partial \ell}{\partial f}\bigl(f(x_n,\bZ_s),y_n^*\bigr)\,ds\\
&+\frac{\mu^2}{2N}\eta\sum_{n=1}^N\int_0^t (t-s)e^{-\mu(t-s)}\overline{K}_{t,s}(x,x_n)\frac{\partial \ell}{\partial f}\bigl(f(x_n,\bZ_s),y_n^*\bigr)\,ds
+O(\eta^2).
\end{aligned}
\end{equation}
where \begin{align*}
    &\overline{K}_{t,s}(x,x_n):=\left\langle\nabla_{\!\bTheta} f(x,\bZ_t),\nabla_{\!\bTheta} f(x_n,\bZ_s)\right\rangle,\\ &\overline{K}_{t,s}^{\text{cur}}(x,x_n):=\left\langle\nabla_{\!\bTheta} f(x,\bZ_t),\nabla_{\!\bTheta} f(x_n,\bZ_s)\right\rangle_{\int_s^t \nabla^2_\bTheta L(\bZ_r)dr}.
\end{align*}
With the definition of $K_t(x,x')$ (Eq.(\ref{eq_gradient_kernel})), we can also rewrite
\begin{equation}\label{eq:I2_expand}
\mathcal I_t^{(2)}(x)
=
-\frac{\eta}{2N}\sum_{n=1}^N
K_t(x,x_n)
\frac{\partial \ell}{\partial f}\bigl(f(x_n,\bZ_t),y_n^*\bigr).
\end{equation}

Now we expand \(\Phi_{t,s}\) inside \(\mathcal I_t^{(3)}\). Invoking Lemma~\ref{lemma:phi} again, we obtain
\begin{equation}\label{eq:Nt_expand}
\begin{aligned}
\mathcal I_t^{(3)}(x)
=
&\int_0^t
e^{-\mu(t-s)}
\left[
\left(1-\frac{\eta}{2}\mu\right)\sqrt{\eta}
-\frac{\mu^2}{2}\eta^{3/2}(t-s)
\right]
\nabla_{\bTheta}f(x,\bZ_t)^\top
\bs_{|B|}^{1/2}(\bZ_s)\,d\bW_s
\\
&+\frac{\eta\sqrt{\eta}}{2}
\int_0^t e^{-\mu(t-s)}
\nabla_{\bTheta}f(x,\bZ_t)^\top
\left(\int_s^t \nabla_{\bTheta}^2L(\bZ_r)\,dr\right)
\bs_{|B|}^{1/2}(\bZ_s)\,d\bW_s
+O(\eta^2).
\end{aligned}
\end{equation}
By the same argument as in Lemma~\ref{lem:stochastic-convolution-bound}, after taking expectation, each stochastic integral in~\eqref{eq:Nt_expand} is of order \(O(\sqrt{\eta})\), hence the two terms carrying the prefactor \(\eta^{3/2}\)  can be absorbed into the remainder. Thereby,
\begin{align}\label{eq:I_3malliavin}
   \mathbb E\left[\int_0^{T} \mathcal I_t^{(3)}(x)dt\right]=
&\sqrt{\eta}\mathbb E\left[\int_0^{T}\nabla_{\bTheta}f(x,\bZ_t)^\top\int_0^t
e^{-\mu(t-s)}\bs_{|B|}^{1/2}(\bZ_s)\,d\bW_sdt\right]+O(\eta^2).
\end{align}
The expectation in Eq.(\ref{eq:I_3malliavin}) does not vanish in general, since its integrand contains the future quantity \(\nabla_{\bTheta} f(x,\bZ_t)\) and is therefore not adapted to \(\mathcal F_s\). To identify its contribution, we use the Malliavin duality formula~\cite[Definition 1.3.1]{nualart2006malliavin}; see also \cite[Lemma 2.1]{debussche2011weak}. It yields that
\begin{align*}
    &\mathbb E\!\left[\int_0^{T}
\nabla_\bTheta f(x,\bZ_t)^\top
\int_0^t e^{-\mu(t-s)} \bs_{|\mathcal B|}^{1/2}(\bZ_s)\,d\bW_sdt
\right]\\
&=\mathbb E\!\left[\int_0^{T}
\int_0^t e^{-\mu(t-s)}
\operatorname{Tr}\!\Big(
D_s(\nabla_\bTheta f(x,\bZ_t))^\top\,
\bs_{|\mathcal{B}|}^{1/2}(\bZ_s)
\Big)\,dsdt
\right],
\end{align*}
where \(D_s\) denotes the Malliavin derivative with respect to the Brownian motion at time \(s\).
Using the chain rule \cite[Proposition 1.2.3]{nualart2006malliavin},
\[
D_s(\nabla_\bTheta f(x,\bZ_t))
=\nabla_\bTheta^2 f(x,\bZ_t)D_s\bZ_t ,
\]
 Eq.(\ref{eq:I_3malliavin}) then becomes
\begin{align}\label{eq:I3_expand}
    \mathbb E\left[\int_0^{T} \mathcal I_t^{(3)}(x)dt\right]
=&
\sqrt{\eta}\mathbb E\!\left[\int_0^{T}
\int_0^t e^{-\mu(t-s)}
\operatorname{Tr}\!\Big(
(D_s\bZ_t)^\top \nabla_\bTheta^2 f(x,\bZ_t)\,
\bs_{|\mathcal B|}^{1/2}(\bZ_s)
\Big)\,dsdt
\right]\notag\\&+O(\eta^2).
\end{align}
Integrating Eq.\eqref{eq:dfdt_Zt_decomp} over $[0,T]$ and taking expectations, we thus have
\begin{align}
    \mathbb{E}[f(x,\bZ_{T})]-f(x,\bZ_0)=\mathbb{E}\left[\int_0^{T}\left(I_t^{(1)}(x)+I_t^{(2)}(x)+I_t^{(3)}(x)\right)\,dt\right]+O(\eta^2).
\end{align}
Plugging \eqref{eq:I1_expand}, \eqref{eq:I2_expand} and \eqref{eq:I3_expand} into the above equation, and using Eq.(\ref{expectation2}) yields the desired result.
\end{proof}

Equation~\eqref{eq:domingos_sgdm} decomposes the expected prediction under SGDM into four contributions.
\begin{itemize}
    \item \textit{Memory-weighted path kernel.} This is the leading deterministic memory contribution in the SGDM representation. It shows that SGDM replaces the instantaneous path kernel with a time-transported kernel: a residual signal generated by sample \(x_n\) at time \(s\) is propagated to a later time \(t\) with weight \(e^{-\mu(t-s)}\bigl(1-\frac{\mu^2}{2}(t-s)\eta\bigr)\), and contributes to the prediction at the test point through the alignment of tangent features.
    \item \textit{Accumulated curvature-induced term.} This term describes how loss curvature modifies the transport of past residual signals. A training residual generated at time \(s\) is first propagated forward by the exponential memory factor \(e^{-\mu(t-s)}\) and then reweighted by the accumulated Hessian along the path from \(s\) to \(t\). Curvature thus changes how past training information is retained and transported before it affects the tangent-feature comparison at the test point.
    \item \textit{Path-kernel term.} This term retains the ordinary path-kernel structure, but with a coefficient proportional to \(\eta\). It measures the instantaneous alignment between the tangent feature of the test point and that of each training sample at the same time \(t\).
   \item \textit{Transported sampling-induced term.} This term shows that mini-batch sampling noise injected at time \(s\) is carried forward to time \(t\) through the momentum dynamics with exponential weight \(e^{-\mu(t-s)}\). Its contribution to the prediction is then measured by the tangent feature at the test point.
\end{itemize}

\section{Implications of the Second-order Theory}\label{inplications}
In this section, we discuss some implications of our second-order theory for the model's predictions.
\subsection{Curvature-weighted interpolation}
For the common case of a quadratic loss \(\ell(f,y^*)=(f-y^*)^2\), the derivative \(\partial\ell/\partial f\) equals \(2\bigl(f(x_n,\bTheta_t)-y_n^*\bigr)\). Hence Eq.~\eqref{continuous Domingos} reduces to an interpolation formula:
\begin{align}\label{simpleDomingo}
y(T) - y(0) =& -\frac{2}{N}\sum_{n=1}^N 
\int_0^T 
\bigl\langle \nabla_{\!\bTheta} f(x,\bTheta_t), \nabla_{\!\bTheta} f(x_n,\bTheta_t) \bigr\rangle \bigl(f(x_n,\bTheta_t) - y_n^*\bigr)\, dt\notag\\
&-\frac{\eta}{N}\sum_{n=1}^N \int_0^T \langle\nabla_\bTheta f(x,\bTheta_t),\nabla_\bTheta f(x_n,\bTheta_t)\rangle_{\nabla^2_\bTheta L(\bTheta_t)}\bigl(f(x_n,\bTheta_t) - y_n^*\bigr)\,dt\\&+O(\eta^2)\notag,
\end{align}
for any \(x\), where \(y(T):=f(x,\bTheta_T)\) and \(y(0):=f(x,\bTheta_0)\). Eq.~\eqref{simpleDomingo} shows that \(y(T)-y(0)\) is a weighted sum of all training-point prediction corrections \(f(x_n,\bTheta_t)-y_n^*\), with weights given by the gradient kernel integrated over the training path. In the leading term, these corrections are weighted by the path kernel. The second term has the same interpolation structure, but with a curvature-weighted kernel. Consequently, the effect of a training sample on the output depends not only on tangent-feature alignment, but also on how this alignment interacts with the local geometry of the loss landscape.

\subsection{Concentration of model predictions}
Theorem~\ref{2-order Domingo} describes the mean prediction of models trained with SGD. In fact, we can say more about the variability of the prediction: it is governed jointly by the learning rate, the covariance of the sampling-induced noise, and the local geometry of the loss landscape.

\begin{proposition}[Concentration of the terminal prediction]\label{prop:terminal-prediction-concentration}
Let \(T=K\eta\), and let \(\bZ_t\) be the stochastic process defined in Lemma~\ref{2 order}. Let \(\mathcal U\subset\mathbb R^d\) be an open region, and define the exit time $\tau:=\inf\{t\ge 0: \bZ_t\notin \mathcal U\}$.
Assume that, for a fixed test point $x$, there exist constants $G_L, M_2, M_3, K_\Sigma>0$ and batch-size-dependent constants $S_{|\mathcal{B}|}>0$ and $G_f(x)>0$ such that, for every $z\in \mathcal U$,
\[
\|\nabla L(z)\|\le G_L,\quad
\|\nabla^2L(z)\|_{\mathrm{op}}\le M_2,\quad
\|\nabla^3L(z)\|_{\mathrm{op}}\le M_3,\quad
\|\bs_{|\mathcal{B}|}(z)\|_{\mathrm{op}}\le S^2_{|\mathcal{B}|},
\]
and $\|\nabla_\bTheta f(x,z)\|\le G_f(x)$. Assume further that the diffusion coefficient \(\bs_{|\mathcal{B}|}^{1/2}\) satisfies
\[
\sum_{q=1}^d
\left\|
D(\bs^{1/2}_{|\mathcal{B}|})_q(z)\,v
\right\|^2
\le
K_\Sigma^2\|v\|^2,
\qquad
\forall z\in\mathcal U,\ \forall v\in\mathbb R^d,
\]
where \((\bs^{1/2}_{|\mathcal{B}|})_q(z)\) denotes the \(q\)-th column of
\(\bs^{1/2}_{|\mathcal{B}|}(z)\). Set
\(
\widetilde{K}_\eta:=M_2+\frac{\eta}{2}(M_3G_L+M_2^2)+\frac{\eta}{2}K_\Sigma^2 .
\)
Then, for every \(r>0\),
\begin{equation}
\mathbb P
\left(
\left|
f(x,\bZ_{T\wedge\tau})-\mathbb E[f(x,\bZ_{T\wedge\tau})]
\right|
\ge r
\right)
\le
2\exp\left(
-\frac{
r^2\widetilde{K}_\eta
}{
\eta S^2_{|\mathcal{B}|}G^2_f(x)
\left(e^{2\widetilde{K}_\eta T}-1\right)
}
\right).
\end{equation}
\end{proposition}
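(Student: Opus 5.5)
The plan is to use the Clark--Ocone (martingale representation) approach combined with a Malliavin-derivative bound, which is what the form of the constant suggests. Write $F:=f(x,\bZ_{T\wedge\tau})$. Then
\[
F-\mathbb E[F]=\int_0^T \mathbb E\bigl[D_sF\mid\mathcal F_s\bigr]\,dW_s .
\]
If we establish a deterministic bound $\|D_sF\|\le \sqrt{\eta}\,S_{|\mathcal B|}G_f(x)\,e^{\widetilde K_\eta (T-s)}$, the martingale $N_u:=\int_0^u \mathbb E[D_sF\mid\mathcal F_s]\,dW_s$ has bounded quadratic variation:
\[
\langle N\rangle_T\le \eta S^2_{|\mathcal B|}G_f^2(x)\int_0^T e^{2\widetilde K_\eta (T-s)}ds=\frac{\eta S^2_{|\mathcal B|}G_f^2(x)}{2\widetilde K_\eta}\bigl(e^{2\widetilde K_\eta T}-1\bigr).
\]
The standard exponential-martingale bound $\mathbb P(|N_T|\ge r)\le 2\exp(-r^2/(2\sup\langle N\rangle_T))$ (Bernstein/Novikov) then yields exactly the stated right-hand side. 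So the proof reduces to bounding the Malliavin derivative.

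By the chain rule, $D_sF=\nabla_\bTheta f(x,\bZ_{T\wedge\tau})^\top D_s\bZ_{T\wedge\tau}$, so $\|D_sF\|\le G_f(x)\,\|D_s\bZ_{T\wedge\tau}\|_{\mathrm{op}}$ because $\bZ$ stays in $\mathcal U$ up to $\tau$. For $t\ge s$, the derivative $Y_t:=D_s\bZ_t$ solves the linearized SDE
\[
dY_t=-\Bigl(\nabla^2L(\bZ_t)+\tfrac{\eta}{4}\nabla^2|\nabla L|^2(\bZ_t)\Bigr)Y_t\,dt+\sqrt{\eta}\sum_q D(\bs^{1/2}_{|\mathcal B|})_q(\bZ_t)Y_t\,dW^q_t,
\]
with initial value $Y_s=\sqrt{\eta}\,\bs^{1/2}_{|\mathcal B|}(\bZ_s)$, whose operator norm is at most $\sqrt\eta S_{|\mathcal B|}$. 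The drift matrix is $\nabla^2L+\tfrac{\eta}{2}(\nabla^3L[\nabla L]+(\nabla^2L)^2)$, and its operator norm is bounded by $M_2+\tfrac{\eta}{2}(M_3G_L+M_2^2)$ inside $\mathcal U$. Applying Itô's formula to $\|Y_tv\|^2$ gives
\[
d\|Y_tv\|^2\le\bigl(2M_2+\eta(M_3G_L+M_2^2)+\eta K_\Sigma^2\bigr)\|Y_tv\|^2dt+d(\text{mart}),
\]
that is, a growth rate of $2\widetilde K_\eta$. This explains why $\widetilde K_\eta$ contains the diffusion term $\tfrac{\eta}{2}K_\Sigma^2$.

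The main obstacle is that the Itô computation only gives a bound in mean square, $\mathbb E\|Y_tv\|^2\le e^{2\widetilde K_\eta(t-s)}\|Y_sv\|^2$, and not the almost-sure bound that Clark--Ocone needs. There is also the technicality of stopping at $\tau$ (handled by localizing: freeze the coefficients outside $\mathcal U$ or stop the linearized equation at $\tau$, so that $D_s\bZ_{T\wedge\tau}$ is the stopped solution). To get around the mean-square issue, use only the conditional expectation: by Jensen,
\[
\|\mathbb E[D_sF\mid\mathcal F_s]\|\le G_f(x)\,\mathbb E\bigl[\|Y_{T\wedge\tau}\|\mid\mathcal F_s\bigr]\le G_f(x)\bigl(\mathbb E[\|Y_{T\wedge\tau}\|^2\mid\mathcal F_s]\bigr)^{1/2},
\]
and the conditional mean-square Gronwall bound applied columnwise is exactly what is required. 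Its square is then bounded deterministically by $\eta S^2_{|\mathcal B|}G_f^2(x)e^{2\widetilde K_\eta(T-s)}$. This integrand-of-bounded-size property is what feeds the exponential martingale inequality in the first step. If the columnwise sum produces a factor $d$, I would instead run Gronwall on $\mathrm{Tr}(Y^\top Y)$ restricted to a unit direction $v$ and pair it with $\nabla f$, since only the projection $\nabla_\bTheta f^\top Y$ matters.
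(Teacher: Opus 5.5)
The genuine gap is the stopping at $\tau$. You treat it as a technicality, but it is exactly where the argument breaks.

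First, $F=f(x,\bZ_{T\wedge\tau})$ is in general not in $\mathbb D^{1,2}$, so Clark--Ocone is not available. Already for one-dimensional Brownian motion and its exit time $\tau$ from $(-1,1)$, we have $W_{T\wedge\tau}=\int_0^T\mathbf 1_{\{s<\tau\}}\,dW_s\notin\mathbb D^{1,2}$. Indeed, for adapted $u$ one has $\int_0^T u\,dW\in\mathbb D^{1,2}$ iff $u\in\mathbb L^{1,2}$, and indicators of events with probability strictly between $0$ and $1$ are not Malliavin differentiable.

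Second, even formally, $D_s\bZ_{T\wedge\tau}$ is not the linearized solution stopped at $\tau$, because $\tau$ itself depends on the path. Freezing the coefficients outside $\mathcal U$ does not help either: it changes the random variable, since the extended process keeps moving after $\tau$.

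The correct representation is the paper's, $M_T=\sqrt{\eta}\int_0^{T\wedge\tau}\nabla u_x(s,\bZ_s)^\top\bs_{|\mathcal B|}^{1/2}(\bZ_s)\,d\bW_s$, with $u_x$ the \emph{stopped} value function. The gradient of $u_x$ carries exit effects that no first-variation Gronwall estimate sees. The paper's proof has the same hole: it applies Lemma~\ref{lem:backward-gradient-bound}, which assumes the restarted trajectories remain in $\mathcal U$, to the stopped $u_x$.

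Under the stated hypotheses this hole cannot be filled. Take $d=2$ and $L(z)=-z_1$, so the drift is $(1,0)$ and $\nabla^2L=\nabla^3L=0$. Take constant $\bs_{|\mathcal B|}\equiv\bI$ (e.g.\ from per-sample losses $-z_1+c_n^\top z$ with $\sum_n c_n=0$), $f(x,z)=z_1$, $\mathcal U=\mathbb R^2\setminus\bigl(\{0\}\times(-\infty,0]\bigr)$, $\bZ_0=(-1,0)$ and $T=3$. Every hypothesis holds with $G_L=M_2=M_3=K_\Sigma=S_{|\mathcal B|}=G_f(x)=1$.

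When $Z^1$ first reaches $0$ (near $t=1$), $Z^2$ is symmetric, since that hitting time depends only on $W^1$. So with probability tending to $1/2$ the path is stopped on the wall, where $f=0$. With probability tending to $1/2$ it clears the wall and ends with $f\approx 2$. Thus $|f(x,\bZ_{T\wedge\tau})-\mathbb E f(x,\bZ_{T\wedge\tau})|\approx 1$ with probability tending to $1$, while the claimed right-hand side tends to $0$ as $\eta\to0$. Correspondingly, $\partial_{z_2}u_x(0,\bZ_0)$ is of order $\eta^{-1/2}$.

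What your argument does establish is the unstopped version, with all bounds holding on $\mathbb R^d$. There it is the paper's proof in Malliavin language rather than a different route. For $F=f(x,\bZ_T)$ one has $D_s\bZ_T=J_{T,s}\sqrt{\eta}\,\bs_{|\mathcal B|}^{1/2}(\bZ_s)$, with $J_{T,s}$ the first-variation matrix. The Markov property then gives $\mathbb E[D_sF\mid\mathcal F_s]=\sqrt{\eta}\,\nabla_z u_x(s,\bZ_s)^\top\bs_{|\mathcal B|}^{1/2}(\bZ_s)$, which is exactly the integrand the paper obtains from It\^o's formula and the backward Kolmogorov equation. Your conditional mean-square Gronwall step is Lemma~\ref{lem:backward-gradient-bound}.

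Make your fallback the main line. Fix a unit vector $w$ and bound
$|\mathbb E[D_sF\mid\mathcal F_s]\,w|\le G_f(x)\bigl(\mathbb E[\|J_{T,s}\sqrt{\eta}\,\bs_{|\mathcal B|}^{1/2}(\bZ_s)w\|^2\mid\mathcal F_s]\bigr)^{1/2}\le\sqrt{\eta}\,S_{|\mathcal B|}G_f(x)e^{\widetilde K_\eta(T-s)}$,
and only then take the supremum over $w$. Putting the operator norm of $Y$ inside the conditional expectation and summing over columns can cost a factor $d$.

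Suppose the coefficients and $f$ extend to $\mathbb R^d$ with the same bounds, giving $\widetilde{\bZ}$ and $\tilde f$. Then your freezing idea yields a bound on $\mathbb P\bigl(\{|f(x,\bZ_T)-\mathbb E\tilde f(x,\widetilde{\bZ}_T)|\ge r\}\cap\{\tau>T\}\bigr)$ by the stated right-hand side. That is a correct localized statement, but it is not the proposition as written.
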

\begin{proof}
Since $f(x,\bZ_t)$ contains a nonzero drift term, it is not a martingale. To construct a martingale whose terminal value is the stopped terminal prediction
\(f(x,\bZ_{T\wedge\tau})\), we consider the same stochastic dynamics with initial condition specified at an arbitrary pair $(t,z)$. For $0\le t\le s\le T$ and $z\in \mathcal{U}$, let \(\bZ_s^{t,z}\) denote the process evolving according to the same dynamics as \(\bZ_s\), but initialized at the intermediate time \(t\) from the state \(z\), i.e. $\bZ_t^{t,z}=z$:
$$d\bZ_s^{t,z}
=\left[-\nabla_\bTheta L(\bZ_s^{t,z})-\frac{\eta}{2}\nabla^2_\bTheta L(\bZ_s^{t,z})\nabla_\bTheta L(\bZ_s^{t,z})\right]ds+\sqrt{\eta}\,\bs_{|\mathcal{B}|}^{1/2}(\bZ_s^{t,z})\,d\bW_s.$$
Let \(\tau^{t,z}:=\inf\{r\ge t: \bZ_r^{t,z}\notin \mathcal U\}\) be the corresponding exit time from
\(\mathcal U\), and define the stopped value function
\[
u_x(t,z):=\mathbb E\!\left[f\bigl(x,\bZ_{T\wedge\tau^{t,z}}^{t,z}\bigr)\right].
\]
By the Markov property, $u_x(t\wedge\tau,\bZ_{t\wedge\tau})
=\mathbb E\!\left[f(x,\bZ_{T\wedge\tau})
\mid \mathcal F_{t\wedge\tau}\right]$, so the process \(u_x(t\wedge\tau,\bZ_{t\wedge\tau})\) is a martingale with terminal value \(f(x,\bZ_{T\wedge\tau})\).
Moreover,
\[
u_x(0,\bTheta_0)
=\mathbb E\!\left[
f(x,\bZ_{T\wedge\tau})
\right],\qquad
u_x
\bigl(T\wedge\tau,\bZ_{T\wedge\tau}\bigr)
=
f(x,\bZ_{T\wedge\tau}).
\]
Hence
\begin{equation}\label{eq:m_t}
    M_t:=u_x(t\wedge\tau,\bZ_{t\wedge\tau})-u_x(0,\bZ_0)
\end{equation}
is a martingale satisfying $M_0=0$ and
$M_T=f(x,\bZ_{T\wedge\tau})-\mathbb E[f(x,\bZ_{T\wedge\tau})]$.

Before the exit time $\tau$, the stopped value function \(u_x\) satisfies the backward Kolmogorov equation~\cite[Theorem 8.1.1]{oksendal2013stochastic} on \(\mathcal U\),
\begin{equation}\label{eq:parti_u}
    \partial_t u_x(t,z)+\mathcal L_\eta u_x(t,z)=0,
\end{equation}
where, for all $\varphi\in \mathcal C^2(\mathbb{R}^d)$,
\[
\mathcal L_\eta\varphi(z)
=\left(-\nabla_{\!\bTheta}L(z)-\frac{\eta}{2}
\nabla_{\!\bTheta}^2L(z)\nabla_{\!\bTheta}L(z)\right)^\top
\nabla\varphi(z)+\frac{\eta}{2}
\operatorname{Tr}\!\left(
\bs_{|\mathcal B|}(z)\nabla^2\varphi(z)
\right).
\]
Applying It\^o's formula to
\(u_x(t,\bZ_t)\) up to the bounded stopping time \(\tau\wedge T\), we obtain
\begin{align}\label{eq:deri_u}
    \begin{aligned}
d u_x(t,\bZ_t)
=&\,\partial_t u_x(t,\bZ_t)\,dt+\nabla u_x(t,\bZ_t)^\top d\bZ_t
+\frac{1}{2}\operatorname{Tr}\!\left(\nabla^2 u_x(t,\bZ_t)\,d\langle \bZ\rangle_t
\right)\\
=&\,\Big[\partial_t u_x(t,\bZ_t)+\mathcal L_\eta u_x(t,\bZ_t)
\Big]dt+\sqrt{\eta}\,\nabla u_x(t,\bZ_t)^\top \bs_{|\mathcal B|}^{1/2}(\bZ_t)\,d\bW_t .
\end{aligned}
\end{align}
Substituting the backward Kolmogorov equation~\eqref{eq:parti_u} into~\eqref{eq:deri_u}, we have
\begin{equation*}
u_x(t\wedge\tau,\bZ_{t\wedge\tau})
-u_x(0,\bZ_0)=\sqrt{\eta}\int_0^{t\wedge\tau}\nabla u_x(s,\bZ_s)^\top \bs_{|\mathcal B|}^{1/2}(\bZ_s)\,d\bW_s .
\end{equation*}
Recall Eq.~\eqref{eq:m_t}, we get
\begin{equation}
    M_T=u_x(T\wedge\tau,\bZ_{T\wedge\tau})-u_x(0,\bZ_0)=\sqrt{\eta}\int_0^{T\wedge\tau}\nabla u_x(s,\bZ_s)^\top \bs_{|\mathcal B|}^{1/2}(\bZ_s)\,d\bW_s .
\end{equation}
Then the quadratic variation of $M$ is
\begin{equation}\label{eq:varia_MT}
    \langle M\rangle_T={\eta}\int_0^{T\wedge\tau}\nabla u_x(s,\bZ_s)^\top \bs_{|\mathcal B|}(\bZ_s)\,\nabla u_x(s,\bZ_s)\,ds .
\end{equation}
Since the process is stopped upon leaving \(\mathcal U\), all the bounds in the assumptions hold on the integration interval; hence the gradient estimate in Lemma~\ref{lem:backward-gradient-bound} also applies to the stopped value function \(u_x\). Then we obtain
\begin{align*}
    \nabla u_x(s,\bZ_s)^\top \bs_{|\mathcal B|}(\bZ_s)\,\nabla u_x(s,\bZ_s)&\le \|\bs_{|\mathcal B|}(\bZ_s)\|_{\mathrm{op}}\,\|\nabla u_x(s,\bZ_s)\|^2\\&\le S^2_{|\mathcal{B}|}G_f^2(x)\,e^{2\widetilde{K}_\eta (T-s)},
\end{align*}
and so Eq.(\ref{eq:varia_MT}) satisfies
\begin{equation*}
     \langle M\rangle_T\le \eta S^2_{|\mathcal{B}|}G_f^2(x)\int_0^T e^{2\widetilde{K}_\eta (T-s)}\,ds=\eta S^2_{|\mathcal{B}|}G_f^2(x)\,\frac{e^{2\widetilde{K}_\eta T}-1}{2\widetilde{K}_\eta}.
\end{equation*}
By the exponential martingale inequality,
\begin{equation*}
    \mathbb{P}(|M_T|\ge r)\le 2 \exp\!{\left(-\frac{r^2 \widetilde{K}_\eta}{\eta S^2_{|\mathcal{B}|}G_f^2(x)(e^{2\widetilde{K}_\eta T}-1)}\right)}.
\end{equation*}
Since $M_T=f(x,\bZ_{T\wedge\tau})-\mathbb E[f(x,\bZ_{T\wedge\tau})]$,
the desired inequality follows.
\end{proof}

Proposition~\ref{prop:terminal-prediction-concentration} provides a locally stability bound for the terminal prediction around its mean. Combined with the stochastic Domingos representation, it shows that the stopped prediction \(f(x,\bZ_{T\wedge\tau})\) concentrates around its expected kernel-type representation. On the event \(\{\tau>T\}\), this stopped prediction coincides with the original terminal prediction \(f(x,\bZ_T)\). The bound indicates that prediction stability is governed by the learning rate, the sampling-induced covariance, the curvature of the loss landscape, and the parameter sensitivity of the test prediction. The dependence on batch size enters through the covariance bound $S_{|\mathcal B|}^2$: larger batches strengthen concentration around the expected representation by reducing the sampling-induced covariance in the bound. The loss-curvature bounds $M_2$ and $M_3$ affect the concentration probability through $\widetilde K_\eta$. The factor \(G_f(x)\) further shows that test points whose predictions are more sensitive to parameter changes are less stable under stochastic training.

Denote by \(\mathcal A_2(x,T)\) the second-order approximation in Theorem~\ref{2-order Domingo}, excluding the \(O(\eta^2)\) remainder, so that 
\[
\mathbb E[f(x,\bZ_T)]=\mathcal A_2(x,T)+C\eta^2.
\]
For any \(\delta\in(0,1)\), define \[
R_\delta(x,T):=S_{|\mathcal B|}G_f(x) \sqrt{\eta\frac{e^{2\widetilde K_\eta T}-1}{\widetilde K_\eta}\log\frac{2}{\delta}}.\] From Proposition~\ref{prop:terminal-prediction-concentration}, we obtain the following  estimate:
\begin{equation}
    \mathbb P \Big(\left|f(x,Z_{T\wedge\tau})-\mathcal A_2(x,T)
\right|\le |C|\eta^2+R_\delta(x,T)
\Big)\ge 1-\delta.
\end{equation}
The parameter $\delta$ only enters through the logarithmic factor $\log(2/\delta)$, reflecting the usual trade-off between the size of the
error radius and the probability of the event.

\subsection{Sampling-induced variation and batch size effects} 

When the observable is the loss itself, then the  model loss  at $T$ is
\begin{align*}   
    \mathbb{E}[L(\bTheta_k)]=&\,
    L(\bTheta_0) 
    - \mathbb{E}\!\Bigg[ \int_0^{k\eta} \|\nabla_\bTheta L(\bZ_s)\|^2\, ds \Bigg]\\&-\frac{\eta}{2}\,
    \mathbb{E}\!\Bigg[ \int_0^{k\eta}  \nabla_\bTheta L(\bZ_s)^\top \nabla^2_\bTheta L(\bZ_s) \nabla_{\bTheta} L(\bZ_s) ds \Bigg] \\
    &+\;\frac{\eta}{2}\,
    \mathbb{E}\!\Bigg[ \int_0^{k\eta} \underbrace{\operatorname{Tr}\left(
\nabla_\bTheta^2 L(\mathbf Z_s)
\bs_{|\mathcal B|}(\mathbf Z_s)
\right)}_{\text{Sampling-induced term}}  \, ds \Bigg] + O(\eta^2). \label{Domingo loss sgd2}
\end{align*}

The representation above makes the role of mini-batch noise explicit: the sampling-induced term couples the local Hessian of the loss with the covariance of the mini-batch gradient noise. The batch-size dependence follows directly from the scaling of \(\bs_{|\mathcal B|}\). Larger batches exert weaker pressure against high-curvature regions and can therefore tolerate sharper minima. Conversely, small-batch SGD is strongly penalized in sharp regions and is thus naturally biased toward flatter ones.

\section{Experiments}\label{experiments}

\begin{figure}
    \centering
    \includegraphics[width=0.45\linewidth]{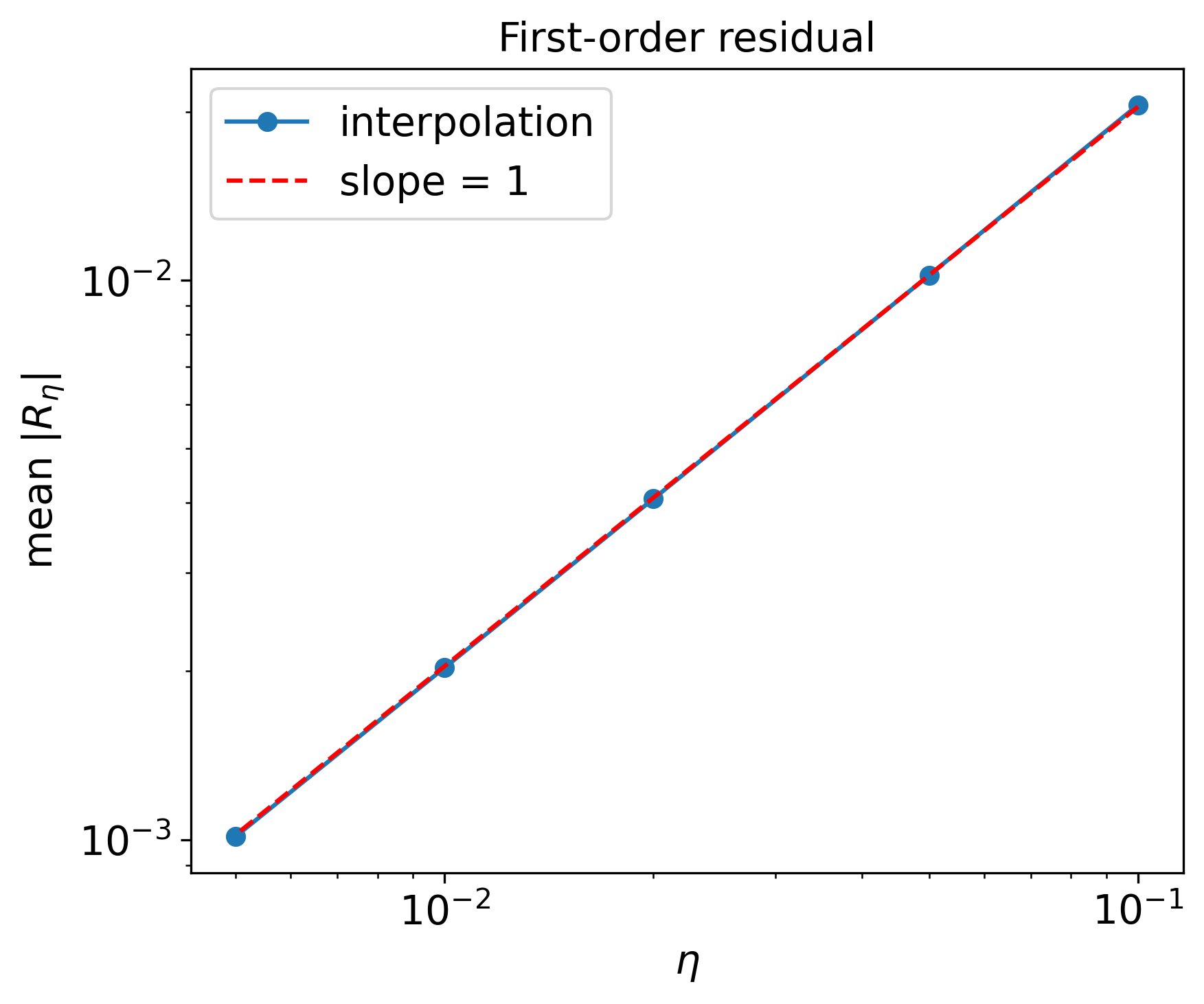}
    \includegraphics[width=0.45\linewidth]{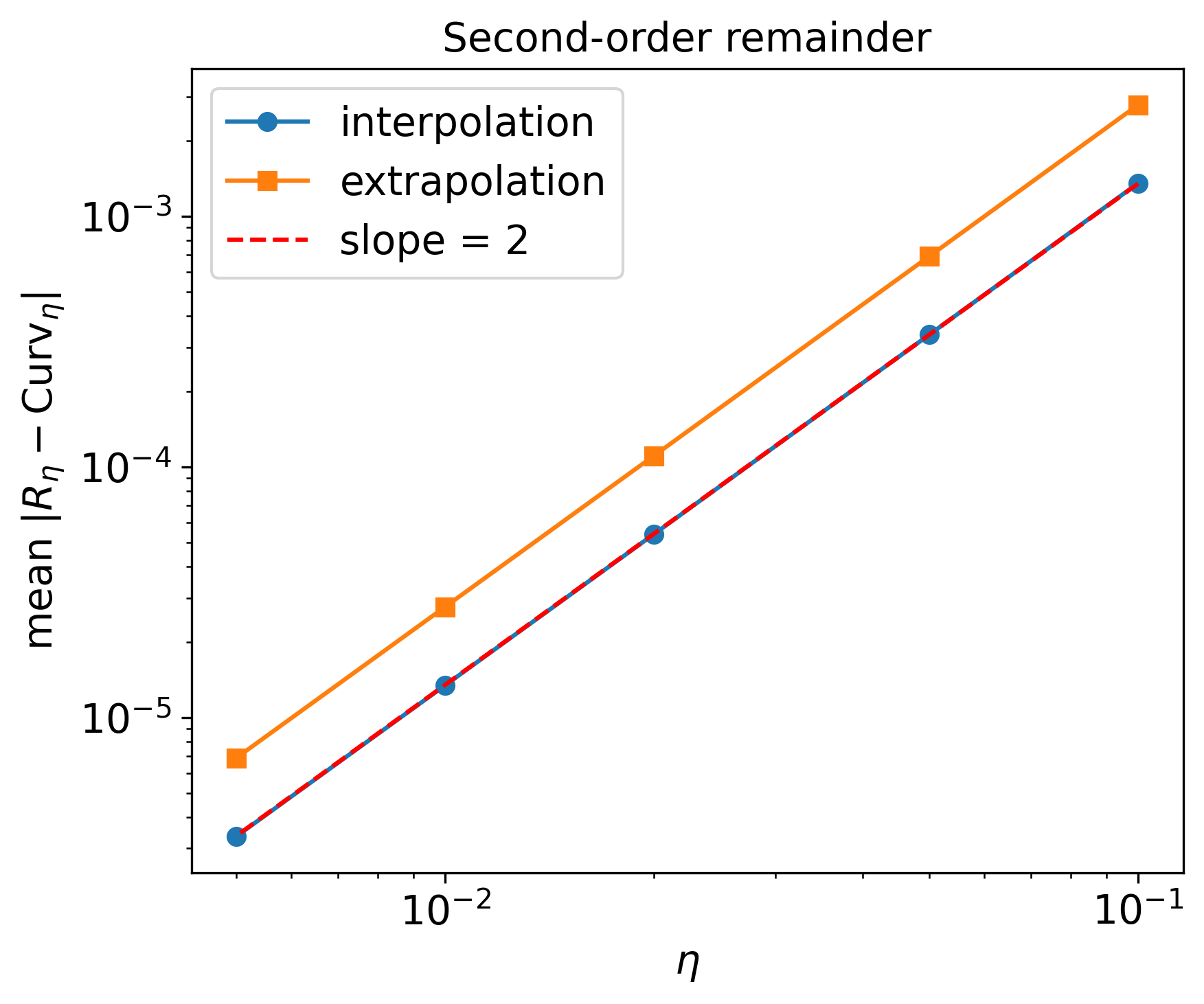}          
    \caption{Scaling verification for the output expansion under gradient descent. 
Left: the mean first-order residual $R_\eta$ over interpolation points scales as \(O(\eta)\) after subtracting the leading path-kernel term. 
Right: the mean second-order remainder $R_\eta(x)-\mathrm{Curv}_\eta(x)$ over interpolation points and extrapolation points scales as \(O(\eta^2)\) after further subtracting the curvature-induced term. 
Although the extrapolation error has a larger magnitude, it follows the same second-order scaling.}
    \label{fig:gd_val}
\end{figure}
In this section, we validate our second-order theory from different aspects via a series of numerical experiments.
\subsection{Scaling order verification}
We illustrate Theorem~\ref{thm_domingo} and Theorem~\ref{2-order Domingo} on a one-dimensional regression problem with target function
\begin{equation*}\label{eq:example_1d}
f_*(x)=\sin(2x)+0.4\sin(x).
\end{equation*}
A two-hidden-layer tanh network:
\begin{equation*}\label{eq:network_1d}
f(x,\bTheta) = a^{(2)}\sum_{m=1}^M\tanh(a_m^{(1)}x+b_m^{(1)}) + b^{(2)},
\end{equation*}
with learnable parameter $\bTheta=\{a_m^{(1)},b_{m}^{(1)}, m=1,\dots,M, a^{(2)}, b^{(2)}\}$ and width $M=8$ is trained by gradient descent on $N=20$ samples from 
$[-2,2]$, with terminal time $T=2.0$ and learning rates
\(\eta \in \{10^{-1},\,5\times10^{-2},\,2\times10^{-2},\,10^{-2},\,5\times10^{-3}\}\).
We evaluate the expansion Eq.(\ref{Domingo sgd2}) on interpolation points \(\{-1.22,-0.67,0.24,1.53\}\) and extrapolation points \(\{-3.5,2.5,2.7,3.0\}\).

For each test point \(x\), we compute the first-order residual
\[
R_\eta(x)
=f(x,\bTheta_T)-f(x,\bTheta_0)+\frac{1}{N}\sum_{n=1}^N\int_0^T
K_t(x,x_n)\,\frac{\partial \ell}{\partial f}\bigl(f(x_n,\bTheta_t),y_n^*\bigr)\,dt,\]
and the second-order remainder
\begin{align*}
    R_\eta-\text{Cur}_\eta:=&R_\eta(x)+\frac{\eta}{2N}\sum_{n=1}^N
     \int_0^{T} K_t^{\text{cur}}(x,x_n) \frac{\partial \ell}{\partial f}\bigl(f(x_n,\bZ_s), y_n^*\bigr)  \, dt .
\end{align*}
The time integrals are evaluated numerically along the discrete GD trajectory using trapezoidal quadrature.

The left panel of Figure~\ref{fig:gd_val} plots the mean interpolation residual and shows a slope \(1\) on a log-log scale, which confirms that the error after subtracting the leading path-kernel term is of order \(O(\eta)\). The right panel plots the mean second-order remainder for both interpolation and extrapolation points. In both cases, the curves exhibit slope \(2\), showing that after subtracting the curvature-induced term, the remaining error is of order \(O(\eta^2)\). While the extrapolation points have a larger error constant, the scaling law remains unchanged. This provides numerical evidence that the curvature-induced term is indeed the leading second-order contribution in the Theorem~\ref{thm_domingo}.

\begin{figure}
    \centering
    \includegraphics[width=0.9\linewidth]{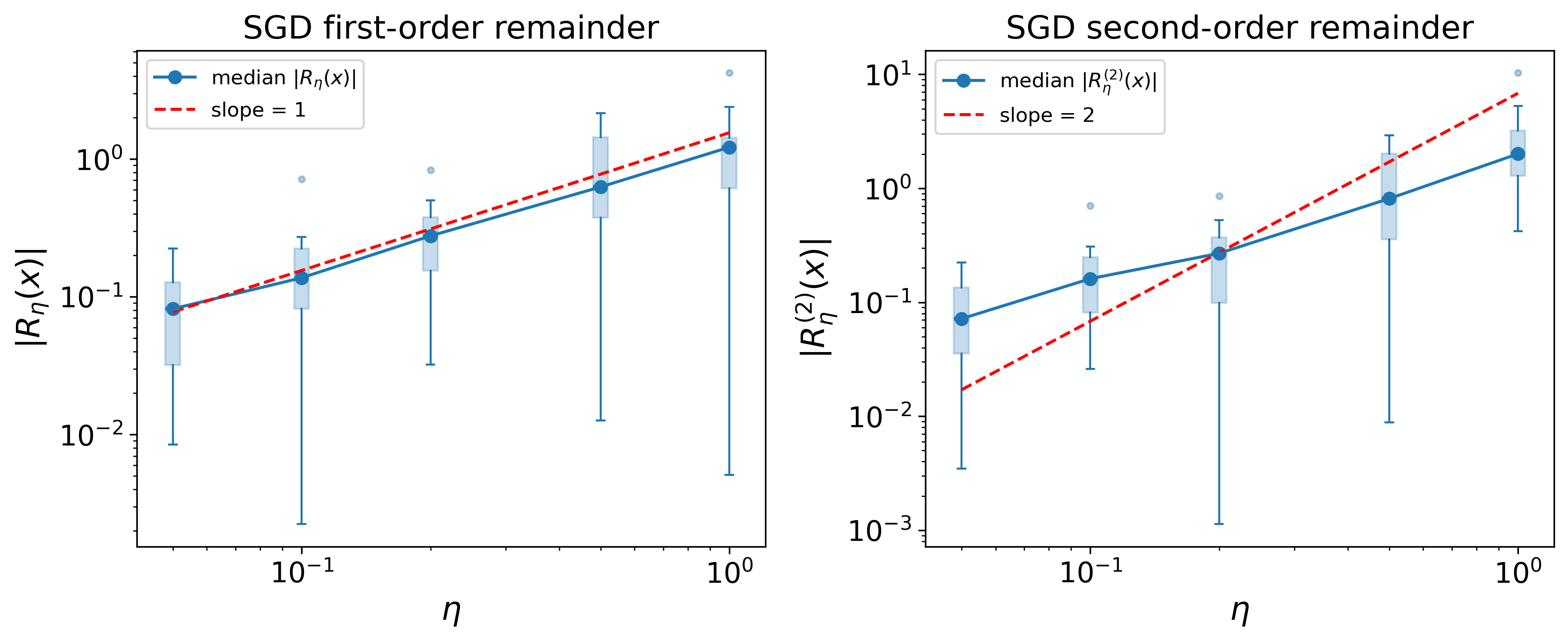}
    \caption{Empirical scaling of the first- and second-order remainders in the SGD output expansion at the test point $x=1.2$. For each learning rate $\eta$, we run SGD for $20$ independent rounds with the same initialization and dataset, and display the distribution of the absolute remainder across rounds by boxplots. Left: the first-order remainder $|R_\eta(x)|$. Right: the second-order remainder $|R_\eta^{(2)}(x)|$.}
    \label{fig:sgd_val}
\end{figure}

Figure~\ref{fig:sgd_val} reports the empirical scaling of the SGD reminder at the test point $x=1.2$. We consider the learning rates
$\eta\in\left\{1,0.5,0.2,0.1,0.05\right\}$. For each learning rate $\eta$, we perform 20 independent experiments using the same dataset and initialization with independently sampled mini-batch sequences, and visualize the resulting distribution of the remainder by boxplots. We 
define the second-order remainder by
\begin{align*}
    R_\eta^{(2)}:=&R_\eta(x)+\frac{\eta}{2N}\sum_{n=1}^N
    \mathbb{E}\!\Bigg[ \int_0^{T} K_t^{\text{cur}}(x,x_n) \frac{\partial \ell}{\partial f}\bigl(f(x_n,\bZ_s), y_n^*\bigr)  \, dt \Bigg]\\&-\frac{\eta}{2}\mathbb{E}\!\Bigg[ \int_0^{T} \operatorname{Tr}\!\Big(\nabla^2_\bTheta f(x, \mathbf{Z}_t)\;\bs_{|\mathcal{B}|}(\mathbf{Z}_t)\Big)\, dt \Bigg].
\end{align*}
In the left panel of Figure~\ref{fig:sgd_val}, the median of $|R_\eta(x)|$ increases approximately linearly with $\eta$, which is broadly consistent with the predicted first-order scaling. In the right panel, the second-order remainder shows a qualitatively similar trend to the reference slope $2$, although the spread across runs remains visible. Overall, this provides numerical evidence that, under SGD training, the first and second-order remainder terms follow the scaling behavior predicted by Theorem~\ref{2-order Domingo}.

\begin{figure}
    \centering
    \subfigure[]{
    \includegraphics[width=0.48\linewidth]{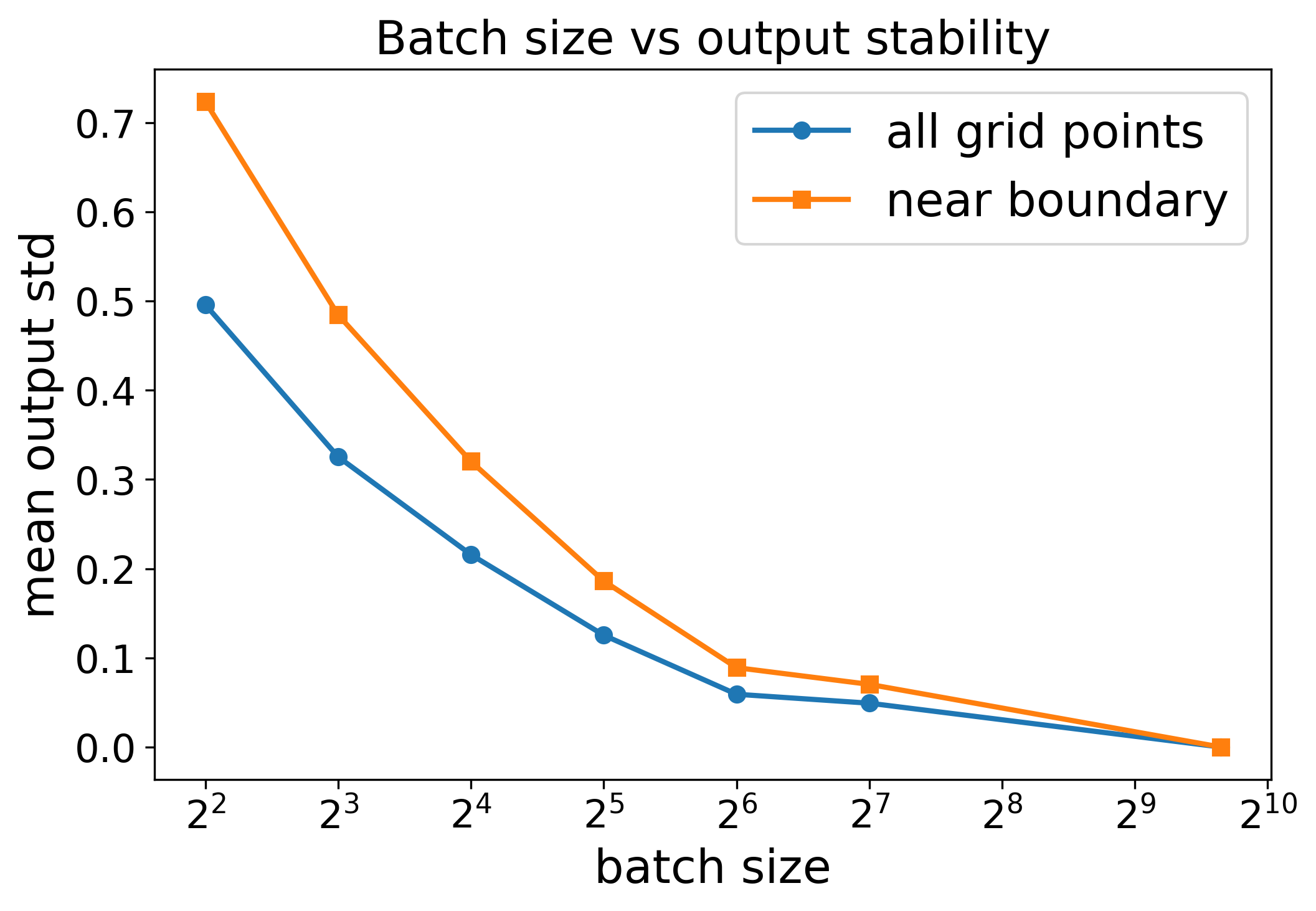}}
    \subfigure[]{
    \includegraphics[width=0.48\linewidth]{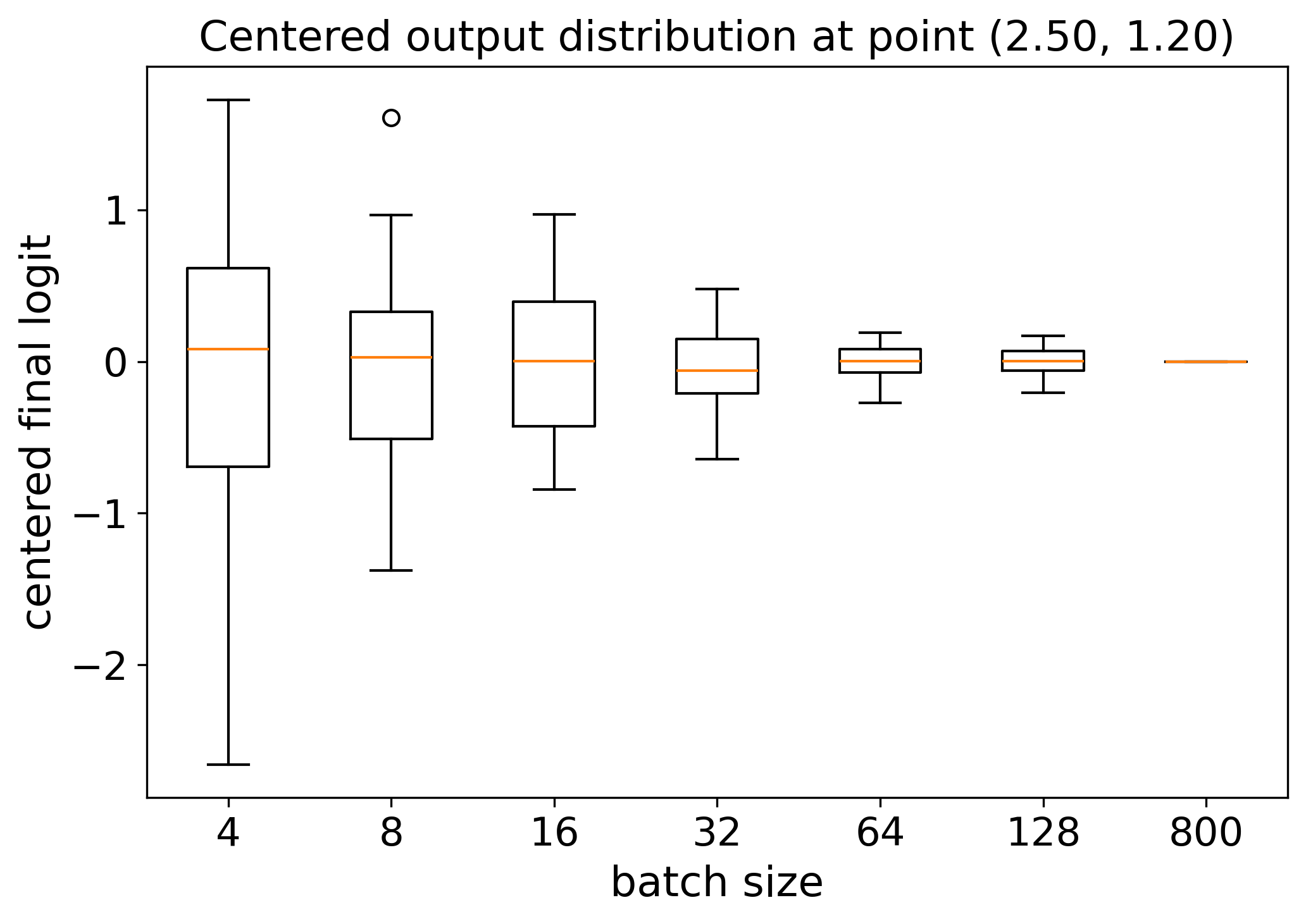}}

    \subfigure[]{
    \includegraphics[width=0.98\linewidth]{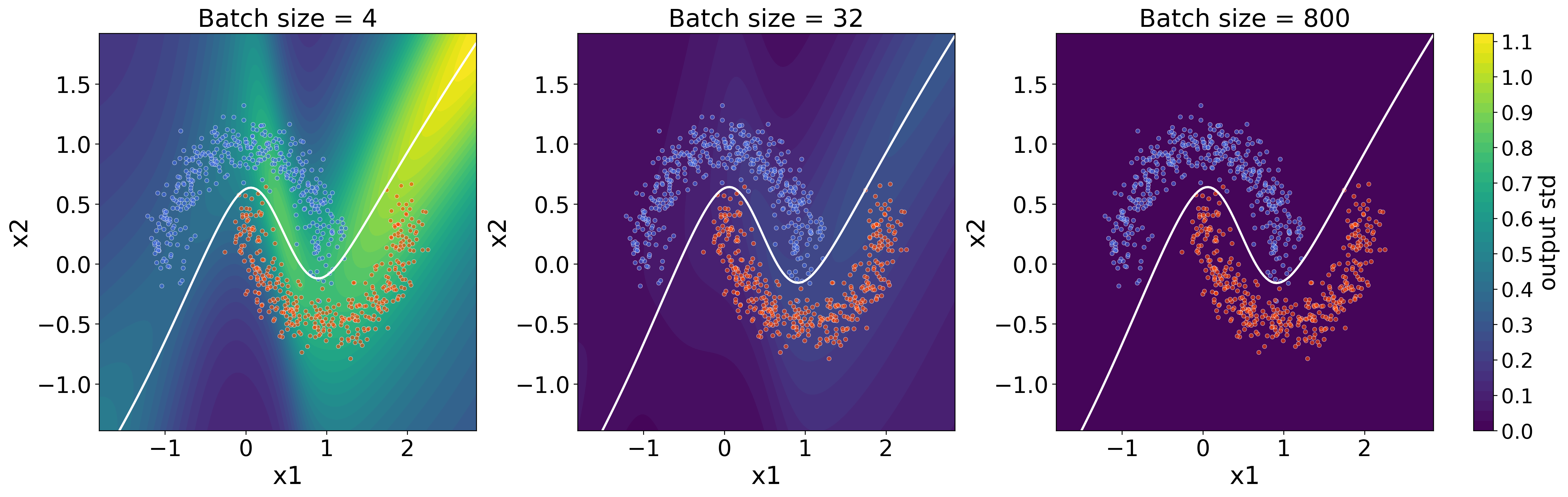}}
    \caption{Effect of batch size on output stability for SGD on the two-moons classification task. A two-hidden-layer tanh network of width \(32\) trained by SGD, while the batch size varies over \(\{4,8,16,32,64,128,800\}\). (a) Mean standard deviation of the final logit across runs decreases monotonically as the batch size increases. (b) Centered final-logit distribution at the representative point \((2.5,1.2)\). (c) Heatmaps of the output standard deviation for batch sizes \(4\), \(32\), and \(800\).}
    \label{fig:batch_stability}
\end{figure}

\subsection{Batch size and  sensitivity of predictions}
We next study how the mini-batch size affects output stability in stochastic training on a nonlinear classification task. We use the two-moons dataset with $1000$ samples and noise level $0.12$, of which $800$ samples are used for training. The classifier is a two-hidden-layer tanh network of width $32$, trained by SGD with learning rate $0.05$. To align the comparison across batch sizes with the discrete-time training index, we fix the total number of parameter updates to \(3000\) and vary the batch size in
\[
|\mathcal B|\in\{4,8,16,32,64,128,800\}.
\]
For each batch size, we repeat $20$ training rounds from the same initialization, changing only the mini-batch sampling order. For a test point \(x\), we define output stability through the variability of the final logit \(f(x,\bTheta_k)\) between runs, where \(k=3000\) is the total number of SGD iterations. Figure~\ref{fig:batch_stability} summarizes the results. Panel (a) shows that the mean output standard deviation decreases monotonically as the batch size increases. This trend is particularly pronounced for test points near the decision boundary, indicating that the stability is sensitive to test data. Panel (b) displays the centered final-logit distribution at the representative test point \((2.5,1.2)\), where the spread between runs is much larger for small batch sizes and collapses as the batch size approaches full-batch training. Panel (c) visualizes the spatial distribution of the output standard deviation in the input domain for batch sizes $4$, $32$, and $800$. The instability is strongest for small batch sizes and is progressively reduced as the batch size grows, becoming nearly negligible in the full-batch regime. These results provide direct numerical evidence that mini-batch sampling noise significantly affects prediction stability and that increasing the batch size stabilizes the network output.

\begin{figure}
    \centering
    \includegraphics[width=0.45\linewidth]{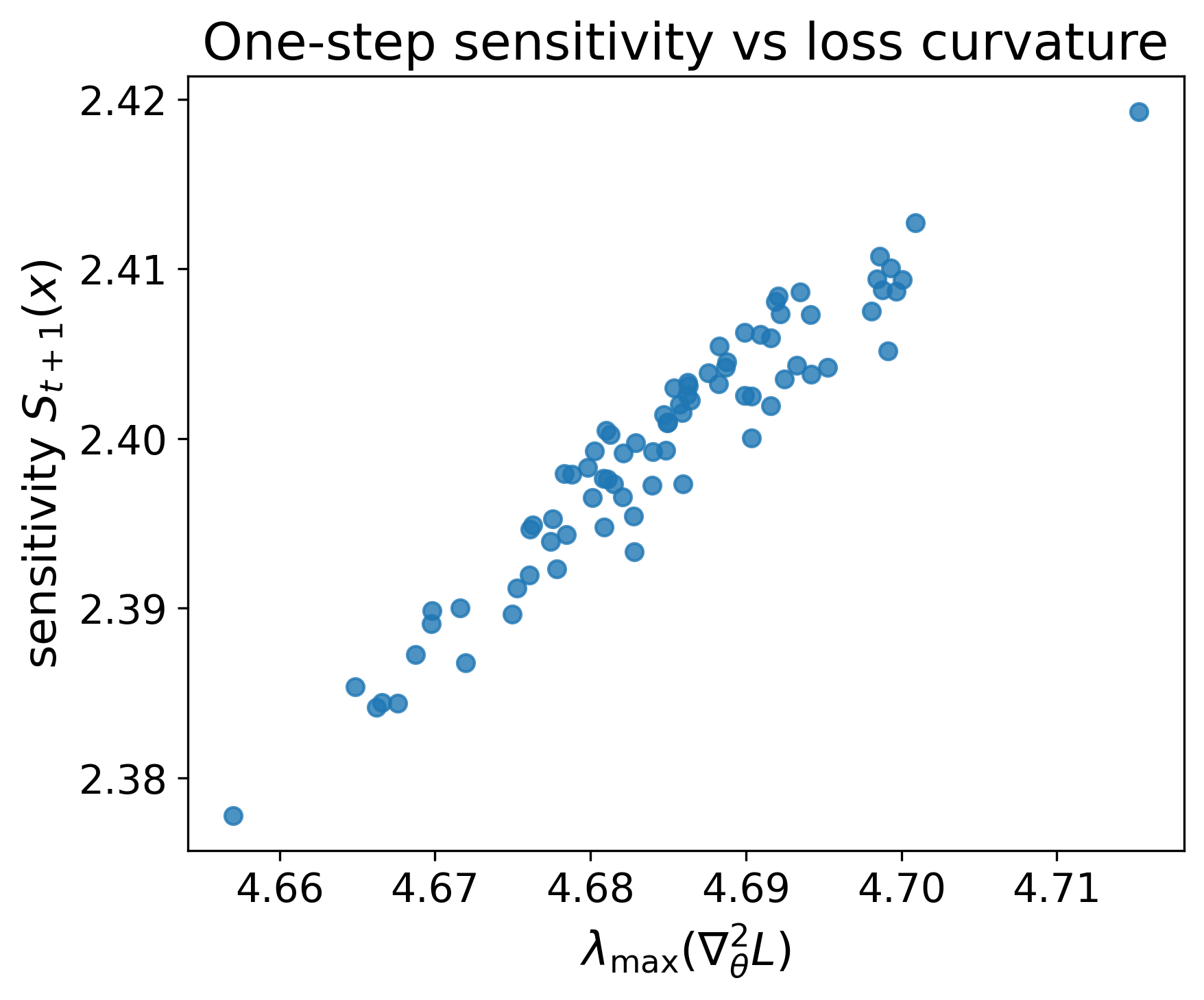}
    \includegraphics[width=0.45\linewidth]{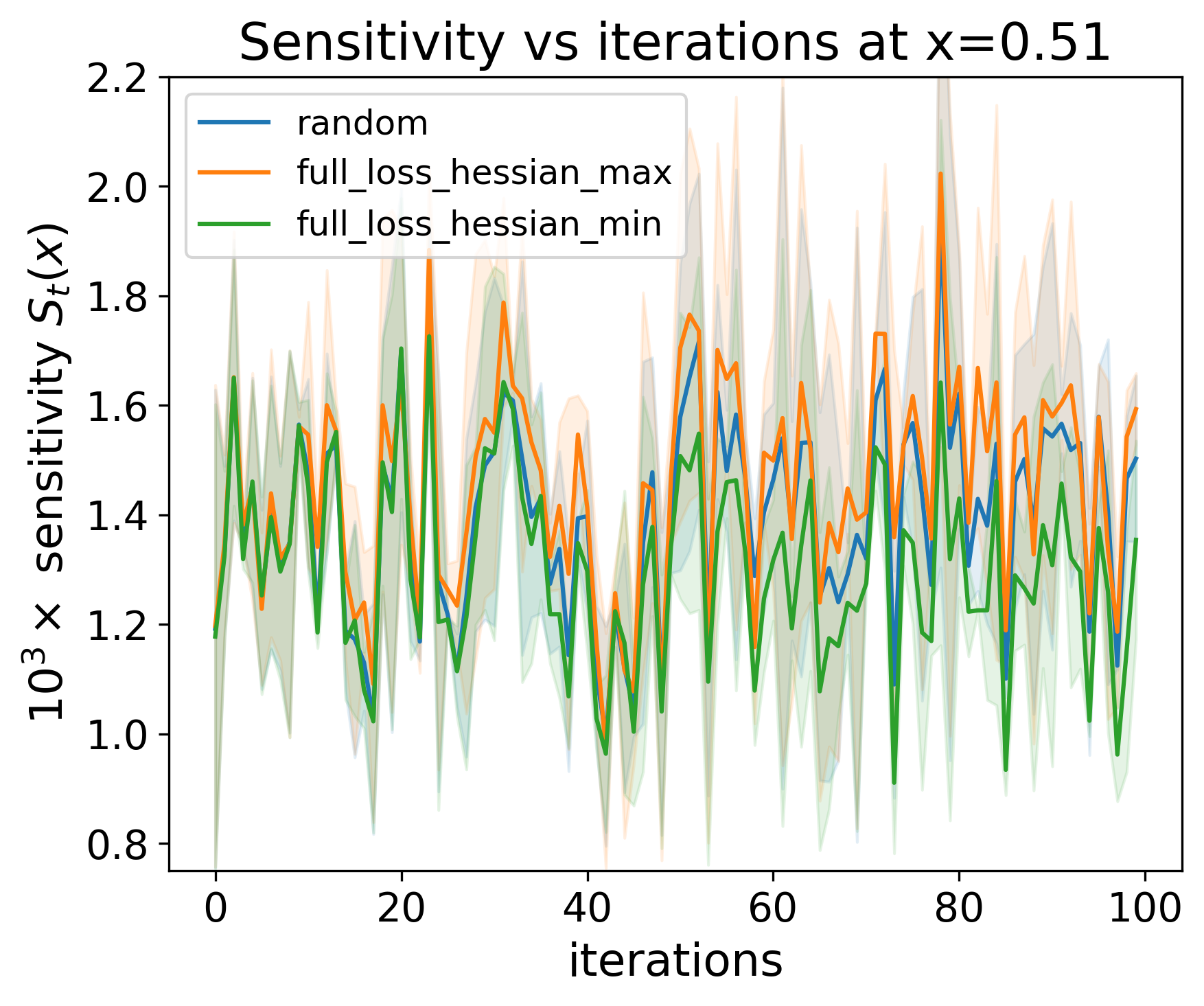}
    \caption{Effect of loss-Hessian-based batch selection on output sensitivity. 
(a) One-step experiment: from a fixed checkpoint, candidate mini-batches are used to perform one SGD update, and the resulting \(S_{t+1}(x)\) is measured. The sensitivity is positively correlated with the largest eigenvalue of the full loss Hessian.
(b) Multi-step experiment: we compare random batch selection with strategies that repeatedly choose the batch with the largest or smallest loss Hessian. The plotted trajectories of \(S_t(x)\) show that batch selection changes the stability of the prediction along training.}
    \label{fig:placeholder}
\end{figure}

\subsection{Sensitivity control via curvature}
We conduct a controlled one dimensional regression experiment to examine how curvature guided mini-batch selection affects prediction stability. Training data are generated from the target function
~\eqref{eq:example_1d} with additive Gaussian  noise of standard deviation \(0.05\). We use \(N=100\) training samples uniformly distributed in the interval \([-2,2]\). The network is~\eqref{eq:network_1d} with width \(M=24\). Training is performed for $100$ iterations with a mini-batch size $40$ and a learning rate \(10^{-2}\). For each strategy, we run the experiment five times. In the sensitivity evaluation, parameter perturbations are sampled from an isotropic Gaussian with standard deviation \(10^{-3}\), and the sensitivity statistics are estimated from \(20\) perturbation samples.

To compare different geometric biases induced by stochastic training, we consider the following batch selection rules. At every iteration, instead of directly drawing one mini batch and updating the network, we first sample \(|\mathcal{B}|=20\) candidate mini batches of size \(40\). For each candidate batch \(\{\mathcal{B}_i\}_{i=1}^{|\mathcal{B}|}\), we compute the one step SGD update
\[
\bTheta^{(\mathcal{B}_i)}_{t+1}=\bTheta_t-\eta \nabla_\bTheta L_{\mathcal{B}_i}(\bTheta_t),
\]
where \(L_{\mathcal{B}_i}\) is the empirical loss on batch \(\mathcal{B}_i\). We then evaluate a curvature score at the one step look ahead parameter \(\bTheta^{(\mathcal{B}_i)}_{t+1}\), and choose the candidate batch that maximizes or minimizes this score. We use two different curvature scores. The first is the largest eigenvalue of the full loss Hessian,
\[
\lambda_{\max}\!\bigl(\nabla_\bTheta^2 L(\bTheta^{(\mathcal{B}_i)}_{t+1})\bigr),
\]
the second is the smallest eigenvalue of the full loss Hessian $\lambda_{\min}\!\bigl(\nabla_\bTheta^2 L(\bTheta^{(\mathcal{B}_i)}_{t+1})\bigr)$.
For a fixed test point \(x\), we define the pointwise sensitivity at iteration \(t\) by perturbing the trained parameter \(\bTheta_t\) with isotropic Gaussian noise \(\xi \sim \mathcal N(0,\sigma_{\mathrm{sens}}^2 I)\), where \(\sigma_{\mathrm{sens}}\) controls the perturbation scale, and measuring the magnitude of the prediction change,
\[
S_t(x):=\mathbb E_{\xi}\bigl|f(x,\bTheta_t+\xi)-f(x,\bTheta_t)\bigr|.
\]

Figure~\ref{fig:placeholder} reports the pointwise sensitivity \(S_t(x)\) defined above. The one-step experiment shows that batches leading to larger loss-Hessian eigenvalues tend to produce larger post-update sensitivity \(S_{t+1}(x)\).
For a fixed test point $x=0.51$, the maximum-eigenvalue strategy typically yields the largest sensitivity trajectory, while minimum-eigenvalue strategy stays lower and closer to the random baseline. This indicates that forcing the optimization path toward larger curvature of the full training objective leads to more fragile predictions under parameter perturbations, whereas selecting batches that reduce the curvature of the full loss leads to more stable predictors. Overall, it supports the claim that flatter minima, as measured through the geometry of the full loss, are associated with improved prediction stability.

We next study whether the curvature-based batch selection affects the sensitivity of the loss to parameter perturbations in a small-scale MNIST experiment. We use 50 training images per class. At each SGD iteration, we sample 10 candidate mini-batches of size 5, perform a one-step look-ahead update for each candidate, estimate the largest eigenvalue of the loss Hessian and select the batch with either maximal or minimal curvature. We repeat each experiment for 5 runs and record the loss sensitivity every 20 steps.

\begin{figure}
    \centering
    \includegraphics[width=0.45\linewidth]{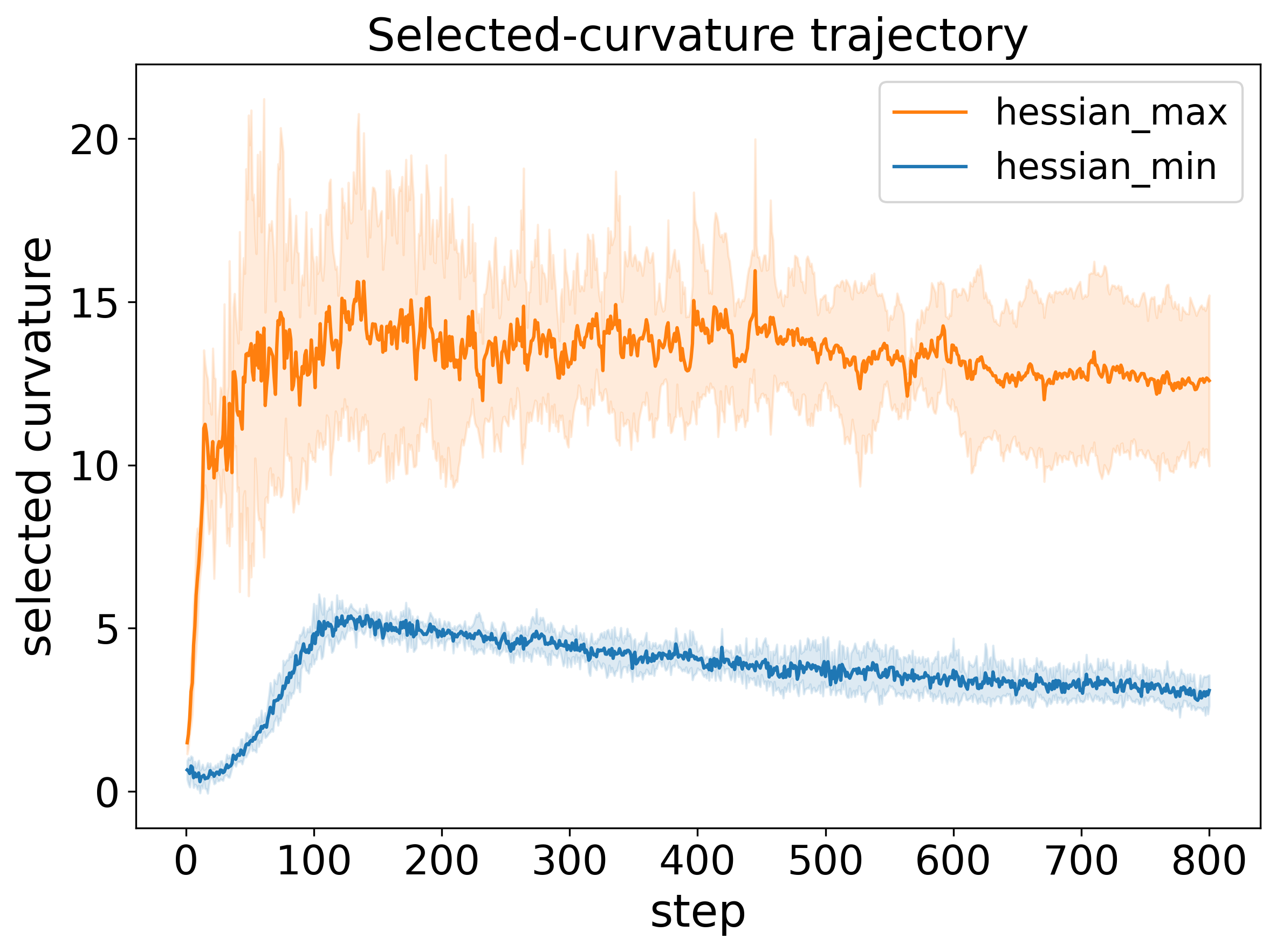}
    \includegraphics[width=0.45\linewidth]{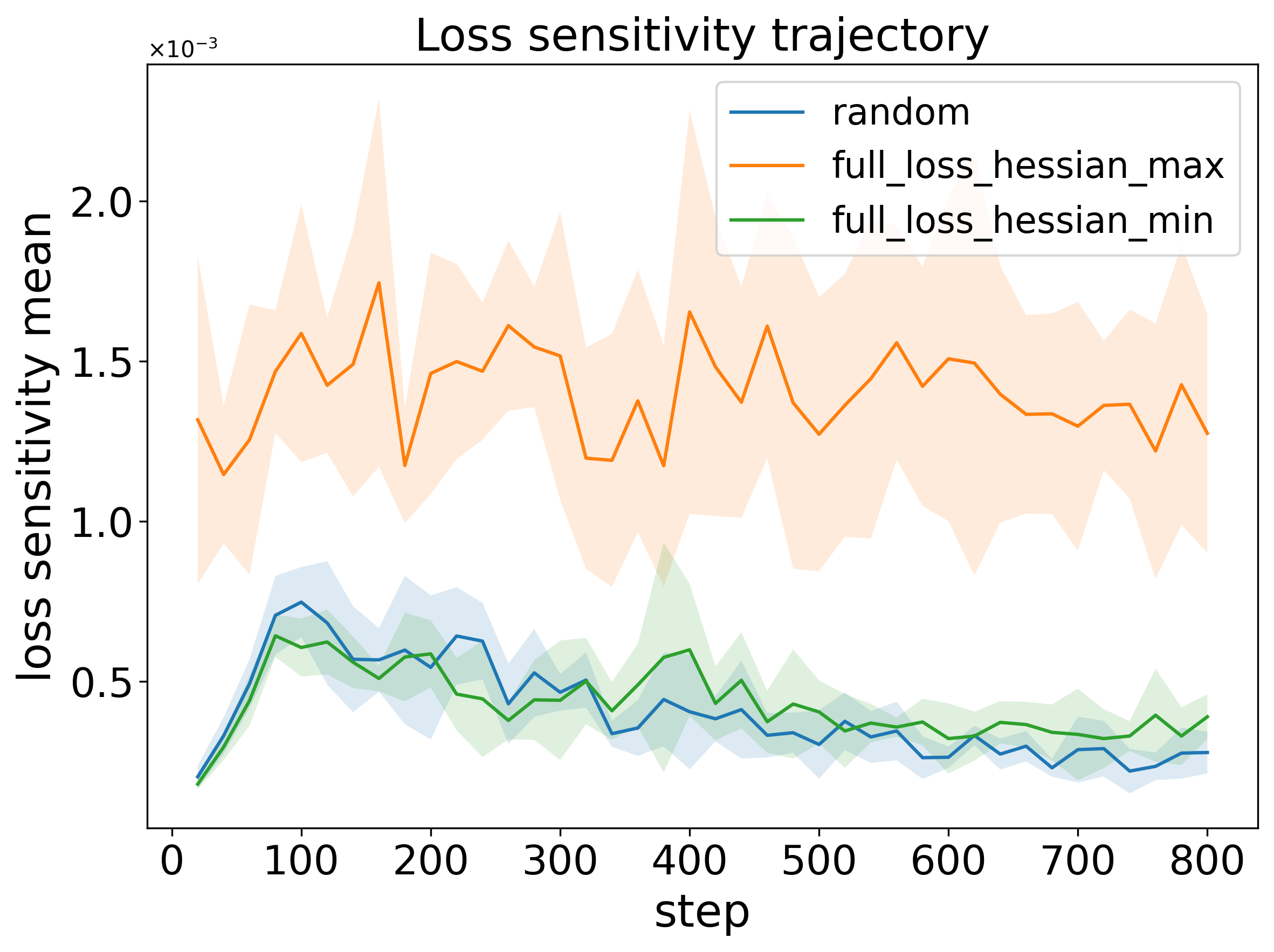}
    \caption{ Curvature-based batch selection and loss sensitivity on a reduced MNIST task. We train on MNIST with 50 training images per class. Left: selected-curvature trajectory. Right: loss-sensitivity trajectory, computed from Gaussian parameter perturbations with $\sigma_{\text{sens}}=10^{-3}$.}
    \label{fig:mnist_cur}
\end{figure}

Figure~\ref{fig:mnist_cur} summarizes the resulting training dynamics. The left panel shows the trajectory of the selected curvature. The maximal-curvature rule consistently chooses batches with substantially larger curvature values than the minimal-curvature rule throughout training. The right panel shows the corresponding trajectory of the loss sensitivity. We observe that the maximum-eigenvalue strategy also leads to a consistently higher loss-sensitivity trajectory than both the minimal-eigenvalue strategy rule and the standard SGD baseline. This suggests that the curvature bias introduced by the batch selection rule has a direct impact on the sensitivity of the loss to parameter perturbations.

\section{Conclusion}\label{conclusion}
In this work, we developed second-order output representations for models trained by gradient-based optimization. Starting from the path-kernel interpretation of gradient descent, we show that the second-order expansion can still be viewed as an interpolation of training residuals along the optimization trajectory. The interpolation weights are refined by an additional curvature-induced component, which records how the loss geometry modifies the propagation of training residuals to a test prediction. Under stochastic training, a further sampling-induced contribution appears, showing that mini-batch sampling affects the expected prediction through the parameter curvature of the prediction map along the directions of the sampling covariance. For SGDM, these mechanisms are propagated further through a momentum-induced memory effect. The concentration result complements the second-order representation by showing how the learning rate, the batch-size-dependent covariance, and the loss Hessian control the size of prediction fluctuations. Our numerical experiments support these conclusions: the observed scaling of the first- and second-order remainders is consistent with the theory. Overall, the network stores information about the training data through the accumulated residuals along the optimization path, and the final prediction is obtained by interpolating these residuals through data-dependent kernel weights. The second-order viewpoint yields a refined interpolation framework, in which a prediction is shaped not only by path kernels but also by curvature- and sampling-induced contributions.

\appendix
\section{Second-order representation with scheduler}\label{sec:second-scheduler}
A widely used learning‑rate schedule in deep learning is the cosine scheduler~\cite{loshchilov2016sgdr}, which smoothly anneals the step size from a large initial value to a small final value according to
\[
\eta_k = \eta_{\min} + \frac12 (\eta_{\max} - \eta_{\min})
\Bigl(1 + \cos\!\,\bigl(\tfrac{k}{K}\pi\bigr)\Bigr),\qquad k = 0,\dots,K,
\]
where \(\eta_{\max}>0\) and \(\eta_{\min}\ge 0\) are the maximum and minimum learning rates, \(T\) is the total number of training iterations, and \(t \in \{0,1,\dots,T\}\) is the iteration index. Unlike a constant step size or a piecewise decay schedule, cosine annealing produces a smooth and progressively decreasing learning rate. Thus, the optimization initially takes relatively large steps, while the update magnitude gradually decreases during the later stages of training, reducing the possibility of oscillatory behavior near a minimizer.

To apply the second-order interpolation formula to this setting, we first identify the modified equation associated with gradient descent under a schedule learning-rate weight.
\begin{lemma}
    Let $L\in\mathcal{C}^3(\mathbb{R}^d)$. For scheduler learning rate $\eta_k = \eta_{\min} + \frac12 (\eta_{\max} - \eta_{\min}) \Bigl(1 + \cos\!\,\bigl(\tfrac{k}{K}\pi\bigr)\Bigr)$, define $\eta=\eta_{\min}$, let \(t_k=k\eta\), \(k=0,\ldots,K\), with \(T=K\eta\). Consider the scheduled gradient-descent iteration
    \begin{equation}\label{eq:discrete_theta}
        \bTheta_{k+1}=\bTheta_k-\eta_k \nabla_\bTheta L(\bTheta_k),\,\, k=0,\cdots,K-1.
    \end{equation}
 Define scalar weight $w(t)= 1 + \tfrac12(\kappa-1)\bigl(1 + \cos(\tfrac{\pi t}{T})\bigr), \kappa = \eta_{\max}/\eta_{\min},\,\,\,t\in[0,T]$. Equivalently, $\eta_k=\eta w(t_k)$.
    Then the corresponding modified equation is
    \begin{equation}\label{eq:modi_theta_continuous}
        \dot{\bTheta}_t=-w(t)\nabla L(\bTheta_t)+\frac{\eta}{2}\dot{w}(t)\nabla L(\bTheta_t)-\frac{\eta}{2}w(t)^2\nabla^2L(\bTheta_t)\nabla L(\bTheta_t)+O(\eta^2).
    \end{equation}
\end{lemma}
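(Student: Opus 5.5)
Following the backward-error-analysis construction behind~\eqref{eq: gradient-flow}, I will seek a modified vector field of the form
\[
F_\eta(t,\bTheta)=F_0(t,\bTheta)+\eta F_1(t,\bTheta),
\]
now time-dependent, and require that the exact flow of $\dot{\bTheta}_t=F_\eta(t,\bTheta_t)$ over one step $[t_k,t_{k+1}]$ of length $\eta$ reproduce the update~\eqref{eq:discrete_theta} up to $O(\eta^3)$. This local accuracy, together with a standard Gronwall argument (using $L\in\mathcal C^3$ and boundedness of $w,\dot w$ on $[0,T]$), yields a global $O(\eta^2)$ error over the $K=T/\eta$ steps. Writing $g:=\nabla L$ and $H:=\nabla^2 L$, the discrete increment is
\[
\bTheta_{k+1}-\bTheta_k=-\eta\,w(t_k)\,g(\bTheta_k).
\]

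\textbf{Taylor expansion of the flow.} Expand the continuous solution started at $(t_k,\bTheta_k)$ in time:
\[
\bTheta_{t_k+\eta}=\bTheta_k+\eta F_\eta+\tfrac{\eta^2}{2}\bigl(\partial_t F_\eta+\nabla_{\bTheta}F_\eta\,F_\eta\bigr)+O(\eta^3),
\]
where everything is evaluated at $(t_k,\bTheta_k)$. Substituting $F_\eta=F_0+\eta F_1$ and matching orders in $\eta$ with the discrete increment proceeds as follows.

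At order $\eta$, the match gives $F_0(t,\bTheta)=-w(t)\,g(\bTheta)$.

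At order $\eta^2$, the match requires
\[
F_1+\tfrac12\bigl(\partial_t F_0+\nabla_{\bTheta}F_0\,F_0\bigr)=0.
\]
Here $\partial_t F_0=-\dot w\,g$ and $\nabla_{\bTheta}F_0\,F_0=(-wH)(-wg)=w^2Hg$, so
\[
F_1=\tfrac12\dot w\,g-\tfrac12 w^2 Hg.
\]
This gives exactly~\eqref{eq:modi_theta_continuous}.

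\textbf{Where care is needed.} The $\dot w$ term appears only because the vector field is nonautonomous, which is why it is absent from~\eqref{eq: gradient-flow}. I will therefore state the time expansion with the total derivative $\tfrac{d}{dt}F(t,\bTheta_t)=\partial_tF+\nabla F\,\dot{\bTheta}$ explicitly. A second point is that $w$ depends on $t/T=t/(K\eta)$. For fixed $T$, $w$ is a smooth function of $t$ independent of $\eta$, with $\dot w=-\tfrac{\pi}{2T}(\kappa-1)\sin(\pi t/T)$ bounded. Hence the remainder terms are uniformly $O(\eta^3)$ locally.

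\textbf{Main obstacle.} The hardest step is the remainder control. Bounding the third time-derivative of the flow requires $L\in\mathcal C^3$, which is assumed, and bounds on $w$ and its derivatives on $[0,T]$. One must also ensure that $\kappa=\eta_{\max}/\eta_{\min}$ is held fixed as $\eta\to0$. Otherwise the effective step $\eta w$ would not be small, and the expansion would fail. I will make this assumption explicit.

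Propagating the local $O(\eta^3)$ error to a global $O(\eta^2)$ bound uses the Lipschitz continuity of $F_\eta$ in $\bTheta$ on a bounded set containing the trajectory. This is the same argument as for the constant-step modified equation, applied with the time-dependent field.
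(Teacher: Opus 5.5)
Your proposal is correct and follows essentially the same backward-error-analysis argument as the paper. With $b=-w\nabla L$, the paper also expands one step to order $\eta^2$ via the total derivative $\partial_t b+D_{\bTheta}b\,b$, chooses $a=\tfrac12\dot w\nabla L-\tfrac12 w^2\nabla^2L\nabla L$ to cancel the $\eta^2$ coefficient, and then appeals to the usual local-to-global accumulation. Your requirement that $T$ and $\kappa$ stay fixed as $\eta\to0$ makes explicit what the paper leaves implicit. One small caution: with only $L\in\mathcal C^3$ you should not literally bound $\dddot{\bTheta}$, since it involves $\nabla^4L$ through $\eta F_1$. Instead, expand only the $F_0$ part to second order and control the $\eta F_1$ part by its Lipschitz continuity.
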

\begin{proof}
    Denote $b(t,\bTheta_t)=-w(t)\nabla_\bTheta L(\bTheta_t)$, following the standard backward error analysis, we determine $a(t,\bTheta_t)$ to get the truncated modified equation
    \begin{equation}\label{eq:modi_theta_dot}
        \dot{\bTheta}_t=b(t,\bTheta_t)+\eta a(t,\bTheta_t).
    \end{equation}
    Higher-order corrections in the modified vector field start at order $\eta^2$.
    Our method is to determine $a(t,\bTheta)$ by matching the second-order Taylor expansion of this modified equation with the scheduled gradient-descent update.
    Starting from \(t_k=k\eta\), Taylor expansion over one step gives
\begin{equation}\label{eq:taylor_theta}
\bTheta(t_k+\eta)=\bTheta(t_k)+\eta\dot{\bTheta}(t_k)+\frac{\eta^2}{2}\ddot{\bTheta}(t_k)+O(\eta^3).
\end{equation}
    Differentiating Eq.~\eqref{eq:modi_theta_dot} with respect to \(t\), we obtain
    \begin{align}\label{eq:second_derivative1}
        \ddot{\bTheta}_t=&\frac{d}{dt}\left[b(t,\bTheta_t)+\eta\,a(t,\bTheta_t)\right]\notag\\
        =&\partial_t b(t,\bTheta_t)+D_\bTheta b(t,\bTheta_t)\dot{\bTheta}_t+\eta \left[\partial_t a(t,\bTheta_t)+D_\bTheta (a(t,\bTheta_t))\dot{\bTheta}_t\right].
    \end{align}
Since $\dot{\bTheta}_t=b(t,\bTheta_t)+O(\eta),D_\bTheta b(t,\bTheta_t)=-w(t)\nabla^2_\bTheta L(\bTheta_t)$, we have
\begin{align}\label{eq:Db_theta}
      D_\bTheta b(t,\bTheta_t)\dot{\bTheta}_t=&D_\bTheta b(t,\bTheta_t){b}(t,\bTheta_t)+O(\eta)
      =-w(t)\nabla_\bTheta^2 L(\bTheta_t)b(t,\bTheta_t)+O(\eta)\notag\\
      =&\,w(t)^2\nabla_\bTheta^2 L(\bTheta_t)\nabla_\bTheta L(\bTheta_t)+O(\eta).
\end{align}
  Plugging Eq.(\ref{eq:Db_theta}) into Eq.(\ref{eq:second_derivative1}), we have
  \begin{align}\label{eq:second_derivative2}
      \ddot{\bTheta}_t=\partial_t b(t,\bTheta_t)+w(t)^2\nabla_\bTheta^2 L(\bTheta_t)\nabla_\bTheta L(\bTheta_t)+O(\eta).
  \end{align}
  Then Eq.(\ref{eq:taylor_theta}) becomes
  \begin{align}
      \bTheta(t_k+\eta)=&\bTheta(t_k)+\eta b(t_k,\bTheta_{t_k})+\eta^2\bigg[a(t_k,\bTheta_{t_k})+\frac{1}{2}\partial_t b(t_k,\bTheta_{t_k})\\&+\frac{1}{2}(D_\bTheta b(t_k,\bTheta_{t_k}))b(t_k,\bTheta_{t_k})\bigg]\notag+O(\eta^3).
  \end{align}
  Then we choose $a(t,\bTheta_t)$ so that the coefficient of $\eta^2$ vanishes, that is
  \begin{equation*}
      a(t,\bTheta_t)+\frac{1}{2}\partial_t b(t,\bTheta_t)+\frac{1}{2}(D_\bTheta b(t,\bTheta_t))b(t,\bTheta_t)=0.
  \end{equation*}
  Substituting the expressions for $\partial_t b(t,\bTheta_t)$ and  $D_\bTheta b(t,\bTheta_t)$, we obtain
  \begin{equation}\label{eq:a_solve}
      a(t,\bTheta_t)=\frac12\dot w(t)\nabla_\bTheta L(\bTheta_t)-\frac12w(t)^2\nabla^2_\bTheta L(\bTheta_t)\nabla_\bTheta L(\bTheta_t).
  \end{equation}
  Plugging Eq.(\ref{eq:a_solve}) into Eq.(\ref{eq:modi_theta_dot}), the truncated modified equation is
  \begin{equation*}
      \dot{\bTheta}_t=-w(t)\nabla L(\bTheta_t)+\frac{\eta}{2}\dot{w}(t)\nabla L(\bTheta_t)-\frac{\eta}{2}w(t)^2\nabla^2L(\bTheta_t)\nabla L(\bTheta_t).
  \end{equation*}
Since $T$ is fixed, the $O(\eta^3)$ local error accumulated over
$K=T/\eta$ steps gives an $O(\eta^2)$ error at the grid points
$t_k=k\eta$, under the usual local Lipschitz bounds.
\end{proof}

Applying Theorem~\ref{thm_domingo} to this time-rescaled gradient flow (Eq.(\ref{eq:modi_theta_continuous})) yields the following scheduler-weighted path-kernel evolution.

\begin{corollary}[Scheduler-weighted path-kernel evolution]\label{cor:scheduler}
Assume the same regularity conditions as in Theorem~\ref{thm_domingo}, and consider the modified equation of the cosine learning-rate schedule with learning rate $w(t)$. Then for any model output $f(x,\bTheta_T)$, the terminal prediction satisfies
\begin{align}
    f(x,\bTheta_T)
=&f(x,\bTheta_0)
-\frac{1}{N}\sum_{n=1}^{N}\,\int_{0}^{T} 
\left(w(t)-\frac{\eta}{2}\dot{w}(t)\right)
K_t(x,x_n)
\frac{\partial \ell}{\partial f}\bigl(f(x_n,\bTheta_t),y_n^*\bigr) \,dt\\&-\frac{\eta}{2N}\sum_{n=1}^N\int_{0}^{T} w(t)^2 K^{\text{cur}}_t(x,x_n)\frac{\partial \ell}{\partial f}\bigl(f(x_n,\bTheta_t),y_n^*\bigr)dt+O(\eta^2),
\end{align}
with the scalar weight $w(t)= 1 + \tfrac12(\kappa-1)\bigl(1 + \cos(\tfrac{\pi t}{T})\bigr), t\in[0,T].$
\end{corollary}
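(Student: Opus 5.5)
The plan is to repeat the chain-rule argument of Theorem~\ref{thm_domingo}, now driving it with the scheduled modified equation~\eqref{eq:modi_theta_continuous} instead of~\eqref{eq:second_order_dynamics}. There are four steps.

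\emph{Step 1: discrete to continuous.} By the preceding lemma, the scheduled iterates~\eqref{eq:discrete_theta} satisfy $\bTheta_k=\bTheta(t_k)+O(\eta^2)$ uniformly for $k\le K$, where $\bTheta(t)$ solves~\eqref{eq:modi_theta_continuous}. Under Assumption~\ref{assumption}, $\nabla_\bTheta f(x,\cdot)$ is locally bounded on a compact set containing both trajectories. Hence $f(x,\cdot)$ is locally Lipschitz there, and $f(x,\bTheta_K)=f(x,\bTheta(T))+O(\eta^2)$. From here on, $\bTheta_t$ denotes the continuous trajectory.

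\emph{Step 2: differentiate along the flow.} By the chain rule,
\[
\frac{d}{dt}f(x,\bTheta_t)=\nabla_\bTheta f(x,\bTheta_t)^\top\Bigl[-\Bigl(w(t)-\tfrac{\eta}{2}\dot w(t)\Bigr)\nabla L(\bTheta_t)-\tfrac{\eta}{2}w(t)^2\nabla^2L(\bTheta_t)\nabla L(\bTheta_t)\Bigr]+O(\eta^2).
\]
In this step the two first-order-in-$\nabla L$ terms of~\eqref{eq:modi_theta_continuous} are merged into the single scalar factor $w(t)-\frac{\eta}{2}\dot w(t)$.

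\emph{Step 3: expand over samples.} Write $\nabla L(\bTheta_t)=\frac1N\sum_n \nabla_\bTheta f(x_n,\bTheta_t)\,\frac{\partial\ell}{\partial f}(f(x_n,\bTheta_t),y_n^*)$. The first bracket then becomes $-\frac1N\sum_n (w-\frac{\eta}{2}\dot w)\,K_t(x,x_n)\,\frac{\partial \ell}{\partial f}$. The Hessian term becomes $-\frac{\eta}{2N}\sum_n w^2\,K_t^{\mathrm{cur}}(x,x_n)\,\frac{\partial\ell}{\partial f}$, using the definition of $K_t^{\mathrm{cur}}$ as the $\nabla^2 L(\bTheta_t)$-weighted inner product.

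\emph{Step 4: integrate.} Integrating from $0$ to $T$ and combining with Step 1 gives the stated formula.

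\emph{Main obstacle.} The only nontrivial point is controlling the $O(\eta^2)$ remainders uniformly in $t\in[0,T]$, so that their time integral stays $O(\eta^2)$. For this I would use the facts that $w$ and $\dot w$ are bounded on $[0,T]$ (with $|\dot w|\le \frac{\pi}{2T}(\kappa-1)$) and that $L\in\mathcal C^3$. Together these give a bounded remainder vector field along the trajectory, and that bounded field is multiplied by the bounded $\nabla_\bTheta f(x,\bTheta_t)$.

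For the discrete-to-continuous step itself, I would invoke the global $O(\eta^2)$ error stated at the end of the preceding lemma rather than re-derive it. This relies on $\kappa$ being fixed as $\eta\to0$, so that $w$ and its derivatives do not scale with $\eta$.
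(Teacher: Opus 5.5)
Your proposal is correct and follows essentially the same route as the paper. The paper's proof is simply to run the chain-rule argument of Theorem~\ref{thm_domingo} on the scheduled modified equation~\eqref{eq:modi_theta_continuous}, folding the $\frac{\eta}{2}\dot w(t)\nabla L$ term into the path-kernel weight $w(t)-\frac{\eta}{2}\dot w(t)$ and the factor $w(t)^2$ into the curvature term. Your Step~1 and your remarks on bounded $w$, $\dot w$ and fixed $\kappa=\eta_{\max}/\eta_{\min}$ just make explicit the discrete-to-continuous $O(\eta^2)$ transfer and the uniformity that the paper leaves implicit.
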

This expression shows that cosine annealing does not alter the path‑kernel structure, while introducing time-dependent weights into both the leading and curvature-dependent contributions. The leading path-kernel term is weighted by $w(t)-\frac{\eta}{2}\dot{w}(t)$, the second-order curvature-weighted term is scaled by $w(t)^2$.

\section{Lemmas}
We collect some technical lemmas used for proving our main results. 
\begin{lemma}\label{lem:backward-gradient-bound}
Let \(T>0\). For \(0\le t\le T\) and \(z\in\mathcal U\), let
\(\bZ_s^{t,z}\), \(s\in[t,T]\), be the solution of
\[
d\bZ_s^{t,z}=b_\eta(\bZ_s^{t,z})\,ds+\sqrt{\eta}\,\bs_{|\mathcal{B}|}^{1/2}(\bZ_s^{t,z})\,d\bW_s,
\qquad
\bZ_t^{t,z}=z,
\]
where $b_\eta(z)=-\nabla L(z)-\frac{\eta}{2}\nabla^2L(z)\nabla L(z)$.
For a fixed test point \(x\), define $u_x(t,z)=\mathbb E\left[f(x,\bZ_T^{t,z})\right]$. Assume that the restarted trajectory \(\bZ_s^{t,z}\) remains in \(\mathcal U\), and that the following bounds hold on \(\mathcal U\):
\[
\|\nabla L(z)\|\le G_L,\qquad
\|\nabla^2L(z)\|_{\mathrm{op}}\le M_2,\qquad
\|\nabla^3L(z)\|_{\mathrm{op}}\le M_3,
\]
and $\|\nabla_\bTheta f(x,z)\|\le G_f(x)$ for the fixed test point $x$.
Assume further that the diffusion coefficient \(\bs_{|\mathcal{B}|}^{1/2}\) satisfies
\[
\sum_{q=1}^d
\left\|
D(\bs^{1/2}_{|\mathcal{B}|})_q(z)v
\right\|^2
\le
K_\Sigma^2\|v\|^2,
\qquad
\forall z\in\mathcal U,\ \forall v\in\mathbb R^d,
\]
where \((\bs^{1/2}_{|\mathcal{B}|})_q(z)\) denotes the \(q\)-th column of
\(\bs^{1/2}_{|\mathcal{B}|}(z)\).
Set $\widetilde K_\eta=M_2+\frac{\eta}{2}(M_3G_L+M_2^2)+\frac{\eta}{2}K_\Sigma^2$.
Then
\[
\|\nabla_z u_x(t,z)\|
\le
G_f(x) e^{\widetilde K_\eta(T-t)}.
\]
\end{lemma}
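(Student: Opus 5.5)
The plan is to represent $\nabla_z u_x(t,z)$ through the first variation (Jacobian) process of the flow $z\mapsto \bZ_s^{t,z}$. Under the stated regularity, the map $z\mapsto \bZ_T^{t,z}$ is differentiable in mean square. Set $J_s:=\partial_z \bZ_s^{t,z}\in\mathbb R^{d\times d}$, $s\in[t,T]$. Differentiating under the expectation, which is justified by the local boundedness of $\nabla_\bTheta f(x,\cdot)$ on $\mathcal U$ and by the moment bound on $J_s$ proved below, we get
\[
\nabla_z u_x(t,z)=\mathbb E\bigl[J_T^\top \nabla_\bTheta f(x,\bZ_T^{t,z})\bigr].
\]
Hence, for any unit vector $v$,
\[
|\langle \nabla_z u_x(t,z),v\rangle|\le G_f(x)\,\mathbb E\|J_Tv\|\le G_f(x)\bigl(\mathbb E\|J_Tv\|^2\bigr)^{1/2}.
\]
It therefore suffices to show $\mathbb E\|J_Tv\|^2\le e^{2\widetilde K_\eta(T-t)}$.

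The vector $\xi_s:=J_sv$ solves the linearized SDE
\[
d\xi_s=Db_\eta(\bZ_s)\xi_s\,ds+\sqrt\eta\sum_{q=1}^d D(\bs^{1/2}_{|\mathcal B|})_q(\bZ_s)\xi_s\,dW^q_s,\qquad \xi_t=v.
\]
Here $Db_\eta=-\nabla^2L-\frac\eta2\bigl(\nabla^3L[\nabla L]+(\nabla^2L)^2\bigr)$. On $\mathcal U$ the operator norm bounds give
\[
\|Db_\eta\|_{\mathrm{op}}\le M_2+\tfrac\eta2(M_3G_L+M_2^2).
\]
We apply Itô's formula to $\|\xi_s\|^2$:
\[
d\|\xi_s\|^2=\Bigl(2\langle\xi_s,Db_\eta\xi_s\rangle+\eta\sum_q\|D(\bs^{1/2}_{|\mathcal B|})_q\xi_s\|^2\Bigr)ds+dN_s,
\]
where $N_s$ is a local martingale. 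The drift is bounded by
\[
\Bigl(2M_2+\eta(M_3G_L+M_2^2)+\eta K_\Sigma^2\Bigr)\|\xi_s\|^2=2\widetilde K_\eta\|\xi_s\|^2,
\]
using the assumed bound on $D\bs^{1/2}_{|\mathcal B|}$. Localizing $N_s$ and taking expectations, then passing to the limit by Fatou or monotone convergence, gives
\[
\frac{d}{ds}\mathbb E\|\xi_s\|^2\le 2\widetilde K_\eta\,\mathbb E\|\xi_s\|^2.
\]
Grönwall's inequality then yields $\mathbb E\|\xi_T\|^2\le e^{2\widetilde K_\eta(T-t)}$, and taking the supremum over unit $v$ proves the claim.

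The main obstacle is rigor rather than computation. We must justify that $\xi$ is the derivative of the flow, and that differentiation commutes with expectation, when the bounds are only assumed on $\mathcal U$. The lemma's hypothesis that the restarted trajectory stays in $\mathcal U$ is exactly what makes the coefficient bounds usable along the path. I would invoke standard differentiability of SDE flows with respect to initial data, which requires $C^1$ coefficients with bounded derivatives along the trajectory. I would control the stochastic integral $N_s$ by stopping times $\sigma_n=\inf\{s:\|\xi_s\|\ge n\}$, so that each stopped integral is a true martingale. Everything else is the one-line Itô–Grönwall estimate above.
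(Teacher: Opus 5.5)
Your proposal is correct and follows the paper's proof essentially step for step. Both arguments bound $\|\nabla b_\eta\|_{\mathrm{op}}$ by $M_2+\frac{\eta}{2}(M_3G_L+M_2^2)$, apply It\^o's formula to $\|J_{s,t}(z)v\|^2$ for the first-variation process, and close with Gr\"onwall to get $\mathbb E\|J_{T,t}(z)v\|^2\le e^{2\widetilde K_\eta(T-t)}$. Both then conclude via $\nabla_z u_x=\mathbb E[J_{T,t}(z)^\top\nabla_\bTheta f(x,\bZ_T^{t,z})]$ and Cauchy--Schwarz. Your stopping-time localization is a small gain in rigor, since the paper simply asserts that the stochastic integral has zero expectation.
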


\begin{proof}
First, we estimate the derivative of the drift. Since
\[
b_\eta(z)=-\nabla L(z)-\frac{\eta}{2}\nabla^2L(z)\nabla L(z),
\]
we have
\[
\nabla b_\eta(z)=-\nabla^2L(z)-\frac{\eta}{2}
\nabla\left(\nabla^2L(z)\nabla L(z)\right).
\]
For any direction \(v\), 
\[
\nabla\left(\nabla^2L(z)\nabla L(z)\right)v
=(\nabla_v\nabla^2L(z))\nabla L(z)+\nabla^2L(z)\nabla^2L(z)v.
\]
Therefore,
\[
\left\|
\nabla\left(\nabla^2L(z)\nabla L(z)\right)
\right\|_{\mathrm{op}}
\le
\|\nabla^3L(z)\|_{\mathrm{op}}\|\nabla L(z)\|+\|\nabla^2L(z)\|_{\mathrm{op}}^2.
\]
Using the bounds on \(\nabla L\), \(\nabla^2L\), and \(\nabla^3L\), we obtain
\[\|\nabla b_\eta(z)\|_{\mathrm{op}}\le M_2+\frac{\eta}{2}(M_3G_L+M_2^2)=:K_b.\]

Let $J_{s,t}(z)=\nabla_z \bZ_s^{t,z}$
be the derivative of the stochastic flow with respect to the initial point.
Differentiating the SDE with respect to \(z\), we obtain
\[
dJ_{s,t}(z)
=
\nabla b_\eta(\bZ_s^{t,z})J_{s,t}(z)\,ds+\sqrt{\eta}
\sum_{q=1}^dD(\bs_{|\mathcal{B}|}^{1/2})_q(\bZ_s^{t,z})J_{s,t}(z)\,dW_s^q,
\qquad
J_{t,t}(z)=\mathbf{I}.
\]

Fix a unit vector \(v\in\mathbb R^d\), and set $Y_s=J_{s,t}(z)v$.
Then
\[
dY_s=\nabla b_\eta(\bZ_s^{t,z})Y_s\,ds+\sqrt{\eta}
\sum_{q=1}^d
D(\bs_{|\mathcal{B}|}^{1/2})_q(\bZ_s^{t,z})Y_s\,dW_s^q.
\]
Applying Itô's formula to \(\|Y_s\|^2\) gives
\[
\begin{aligned}
d\|Y_s\|^2
=&2Y_s^\top\nabla b_\eta(\bZ_s^{t,z})Y_s\,ds+
\eta
\sum_{q=1}^d
\left\|
D(\bs_{|\mathcal{B}|}^{1/2})_q(\bZ_s^{t,z})Y_s
\right\|^2 ds\\&+
2\sqrt{\eta}
\sum_{q=1}^d
Y_s^\top
D(\bs_{|\mathcal{B}|}^{1/2})_q(\bZ_s^{t,z})Y_s\,dW_s^q.
\end{aligned}
\]
Taking expectations, the stochastic integral has zero expectation. Hence
\[
\frac{d}{ds}\mathbb E\|Y_s\|^2
=
\mathbb E\left[
2Y_s^\top\nabla b_\eta(\bZ_s^{t,z})Y_s
+
\eta
\sum_{q=1}^d
\left\|
D(\bs_{|\mathcal{B}|}^{1/2})_q(\bZ_s^{t,z})Y_s
\right\|^2
\right].
\]
Using $\|\nabla b_\eta(z)\|_{\mathrm{op}}\le K_b$
 and $\sum_{q=1}^d
\left\|D(\bs_{|\mathcal{B}|}^{1/2})_q(z)v
\right\|^2\le K_\Sigma^2\|v\|^2,$
we get
\[
\frac{d}{ds}\mathbb E\|Y_s\|^2
\le (2K_b+\eta K_\Sigma^2)\mathbb E\|Y_s\|^2.
\]
Since \(Y_t=v\) and \(\|v\|=1\), Gronwall's inequality yields
\[
\mathbb E\|J_{s,t}(z)v\|^2
\le e^{(2K_b+\eta K_\Sigma^2)(s-t)}=e^{2\widetilde K_\eta(s-t)},
\]
where $\widetilde K_\eta=K_b+\frac{\eta}{2}K_\Sigma^2$. Next, by differentiating $u_x(t,z)=\mathbb E[f(x,\bZ_T^{t,z})]$
with respect to \(z\), we obtain
\[
\nabla_z u_x(t,z)
=\mathbb E\left[
J_{T,t}(z)^\top\nabla_\bTheta f(x,\bZ_T^{t,z})
\right].
\]
Thus, for every unit vector \(v\),
\[
\begin{aligned}
|v^\top\nabla_z u_x(t,z)|
&=
\left|
\mathbb E\left[\nabla_\bTheta f(x,\bZ_T^{t,z})^\top J_{T,t}(z)v\right]
\right|\\
&\le G_f(x)\mathbb E\left[\|J_{T,t}(z)v\|\right]\\
&\le G_f(x)\left(\mathbb E\|J_{T,t}(z)v\|^2\right)^{1/2}\\
&\le G_f(x) e^{\widetilde K_\eta(T-t)}.
\end{aligned}
\]
Taking the supremum over all unit vectors \(v\), we obtain
\[
\|\nabla_z u_x(t,z)\| \le G_f(x) e^{\widetilde K_\eta(T-t)}.
\]
This completes the proof.
\end{proof}

We next eastimate the stochastic integral that appears in the SGDM expansion.
\begin{lemma}\label{lem:stochastic-convolution-bound}
    Assume that there exists $C_T>0$, independent of $\eta$, such that for every \(z\in\{\bZ_r:0\le r\le T\}\),
    \begin{align*}
        &\|\bs^{1/2}_{|\mathcal{B}|}(z)\|_F\le C_T, \qquad\|D(\bs^{1/2}_{|\mathcal{B}|}(z))[v]\|_{\mathrm{F}}\le C_T\|v\|,\forall v\in\mathbb{R}^d,
\\
&\|\nabla_\bTheta^2L(z)\|_{\mathrm{op}}+\|\nabla_\bTheta^3L(z)\|_{\mathrm{op}}
\le C_T,\quad\|\nabla_\bTheta^2 f(x,z)\|_{\mathrm{op}}\le C_T,\quad \sup_{0\le r\le T}\|\bM_r\|\le C_T.
    \end{align*}
Here and below, \(C_T\) denotes a generic constant, independent of \(\eta\), whose value may change from line to line.
    Then for $0\le t\le T$,
    \begin{equation*}
        \left|
\mathbb E\left[
\nabla_\bTheta f(x,\bZ_t)^\top
\int_0^t \bs^{1/2}_{|\mathcal{B}|}(\bZ_s)\,d\bW_s
\right]
\right|
\le C_T\sqrt{\eta}.
    \end{equation*}
\end{lemma}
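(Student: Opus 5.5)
The plan is to use the Malliavin duality formula, as in the proof of Theorem~\ref{thm:SGDM_second_order}, to turn the stochastic-integral expectation into an ordinary Lebesgue integral. Then we show that the Malliavin derivative of $\bZ_t$ is itself $O(\sqrt{\eta})$, and obtain the overall order $\sqrt{\eta}$ from the explicit prefactor in the diffusion of $\bM$. Concretely, duality together with the chain rule gives
\[
\mathbb E\Big[\nabla_\bTheta f(x,\bZ_t)^\top\int_0^t \bs^{1/2}_{|\mathcal B|}(\bZ_s)\,d\bW_s\Big]
=\mathbb E\Big[\int_0^t \operatorname{Tr}\Big((D_s\bZ_t)^\top\nabla^2_\bTheta f(x,\bZ_t)\,\bs^{1/2}_{|\mathcal B|}(\bZ_s)\Big)ds\Big].
\]
Bounding the trace by $\|D_s\bZ_t\|_F\,\|\nabla^2_\bTheta f\|_{\mathrm{op}}\,\|\bs^{1/2}_{|\mathcal B|}\|_F\le C_T^2\|D_s\bZ_t\|_F$ reduces everything to the estimate $\sup_{s\le t}\mathbb E\|D_s\bZ_t\|_F\le C_T\sqrt{\eta}$. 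Integrating over $s\in[0,t]\subset[0,T]$ then completes the proof.

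For this estimate, I differentiate the SGDM system of Lemma~\ref{2 order SGDM} in the Malliavin sense. For $t\ge s$, the pair $(U_t,V_t):=(D_s\bM_t,D_s\bZ_t)$ solves a linear system with the following structure.
\begin{itemize}
\item The equation for $U_t$ has drift
\[
-\Big[A_tU_t+\tfrac{\eta}{2}\big(D_s\nabla^2_\bTheta L(\bZ_t)\big)\bM_t+\big(1+\tfrac{\eta}{2}\mu\big)\nabla^2_\bTheta L(\bZ_t)V_t\Big]dt,
\]
and a stochastic part $\sqrt{\eta}\,D(\bs^{1/2}_{|\mathcal B|})(\bZ_t)[V_t]\,d\bW_t$.
\item The equation for $V_t$ has drift $\big(1-\tfrac{\eta}{2}\mu\big)U_t-\tfrac{\eta}{2}\nabla^2_\bTheta L(\bZ_t)V_t$.
\item The initial data at $t=s$ are $U_s=\sqrt{\eta}\,\bs^{1/2}_{|\mathcal B|}(\bZ_s)$ and $V_s=0$. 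Here $V_s=0$ because $\bZ$ has finite variation, so the Brownian motion enters $\bZ$ only through $\bM$.
\end{itemize}
The term $D_s\nabla^2_\bTheta L(\bZ_t)=\nabla^3_\bTheta L(\bZ_t)[V_t]$ is controlled by $C_T\|V_t\|$ using the bounds on $\nabla^3_\bTheta L$ and $\sup_r\|\bM_r\|$. Hence all coefficients of this linear system are bounded by $C_T$. Since $\mu$ is fixed and $\eta<1$, the factors $1\pm\frac{\eta}{2}\mu$ are also bounded.

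Next, apply It\^o's formula to $\|U_t\|_F^2+\|V_t\|_F^2$ and take expectations, which kills the martingale part. The It\^o correction is bounded by $\eta C_T^2\|V_t\|^2$ by the assumption on $D(\bs^{1/2}_{|\mathcal B|})$. This yields
\[
\frac{d}{dt}\mathbb E\big(\|U_t\|_F^2+\|V_t\|_F^2\big)\le C_T\,\mathbb E\big(\|U_t\|_F^2+\|V_t\|_F^2\big).
\]
Gronwall's inequality then gives $\mathbb E\|V_t\|_F^2\le e^{C_TT}\eta\,\|\bs^{1/2}_{|\mathcal B|}(\bZ_s)\|_F^2\le C_T\eta$, and Jensen's inequality gives $\mathbb E\|D_s\bZ_t\|_F\le C_T\sqrt{\eta}$, as required.

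The main obstacle is rigor about Malliavin differentiability. One must justify that $\bZ_t,\bM_t\in\mathbb D^{1,2}$ and that the duality formula applies with the non-adapted factor $\nabla_\bTheta f(x,\bZ_t)$. The growth of $\mu A_t$-type terms is not the issue, since $\mu$ is fixed. The standard route is Nualart's Theorem 2.2.1: the SDE coefficients are Lipschitz by hypothesis (Lemma~\ref{2 order SGDM}) and $C^1$, so the solution is Malliavin differentiable and its derivative satisfies the linearized equation above. The pathwise bounds assumed along the trajectory make the Gronwall argument legitimate. If the trajectory bounds are only pathwise rather than uniform, I would localize with a stopping time as in Proposition~\ref{prop:terminal-prediction-concentration} and pass to the limit.
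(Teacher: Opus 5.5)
Your proposal is correct and follows essentially the same route as the paper. You use Malliavin duality with the chain rule to reduce the claim to a bound on $\mathbb E\|D_s\bZ_t\|$, then run a Gronwall estimate on the linearized system for $(D_s\bM_t,D_s\bZ_t)$ started from $(\sqrt{\eta}\,\bs^{1/2}_{|\mathcal B|}(\bZ_s),0)$. The differences are only technical. You apply It\^o's formula to $\|D_s\bM_t\|_F^2+\|D_s\bZ_t\|_F^2$, whereas the paper squares the integral equation and uses It\^o's isometry with Cauchy--Schwarz. Your sign on the $\frac{\eta}{2}\nabla^3_\bTheta L(\bZ_t)[D_s\bZ_t]\bM_t$ term is flipped, which is harmless for the norm bound.
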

\begin{proof}
    Since the Brownian noise enters only through the momentum $\bM$, we have
    $$D_s \bM_s=\sqrt{\eta}\bs_{|\mathcal{B}|}^{\frac{1}{2}}(\bZ_s),\, D_s \bZ_s=0.$$
    Taking the Malliavin derivative of $M_r$ gives
    \begin{align*}
\mathrm{D}_s\bM_r=&\sqrt{\eta}\,\bs_{|\mathcal{B}|}^{1/2}(\bZ_s)
-\int_s^r
\left(
\mu I+\frac{\eta}{2}
(\mu^2I-\nabla_\bTheta^2L(\bZ_\zeta))
\right)D_s\bM_\zeta\,d\zeta\\&+\frac{\eta}{2}\int_s^r
\left[
\nabla_\bTheta^3L(\bZ_\zeta)[D_s\bZ_\zeta]
\right]\bM_\zeta\,d\zeta
-\left(1+\frac{\eta}{2}\mu\right)
\int_s^r
\nabla_\bTheta^2L(\bZ_\zeta)D_s\bZ_\zeta\,d\zeta
\\&+\sqrt{\eta}\int_s^r
D\bs_{|\mathcal{B}|}^{1/2}(\bZ_\zeta)[D_s\bZ_\zeta]\,d\bW_\zeta.
    \end{align*}

By It\'o isometry we obtain
\begin{equation*}
    \begin{aligned}
\mathbb E
\left\|
\sqrt{\eta}\int_s^r
D\bs_{|\mathcal{B}|}^{1/2}(\bZ_\zeta)[D_s\bZ_\zeta]\,d\bW_\zeta
\right\|^2
&=\eta\,\mathbb E\int_s^r \left\|
D\bs_{|\mathcal{B}|}^{1/2}(\bZ_\zeta)[D_s\bZ_\zeta]
\right\|_{\mathrm F}^2d\zeta \\
&\le
C_T\eta
\int_s^r \mathbb E\|D_s\bZ_\zeta\|^2d\zeta \le C_T \int_s^r
\mathbb E\|D_s\bZ_\zeta\|^2d\zeta .
\end{aligned}
\end{equation*}

By Cauchy–Schwarz inequality, we have
\begin{align*}
    \mathbb{E}[\|\mathrm{D}_s\bM_r\|^2]\leq C_T{\eta}+C_T\int_s^r\mathbb{E}\left[\|D_s\bM_\zeta\|^2+\|D_s\bZ_\zeta\|^2\right]d\zeta.
\end{align*}

Taking the Malliavin derivative of $\bZ$ gives
\begin{equation*}
D_s\bZ_r=\int_s^r
\left[
\left(1-\frac{\eta}{2}\mu\right)D_s\bM_\zeta -\frac{\eta}{2}\nabla_\bTheta^2L(\bZ_\zeta)D_s\bZ_\zeta
\right]d\zeta .
\end{equation*}
By Cauchy-Schwarz inequality,
\begin{align*}
    \mathbb E\|D_s\bZ_r\|^2
\le C_T\int_s^r
\left(\mathbb E\|D_s\bM_\zeta\|^2+\mathbb E\|D_s\bZ_\zeta\|^2
\right)d\zeta .
\end{align*}
Define $Y_r=\mathbb{E}[\|D_s\bM_r\|^2]+\mathbb{E}[\|D_s\bZ_r\|^2]$ we obtain
\begin{equation*}
    Y_r\le C_T{\eta}+C_T\int_s^r Y_\zeta \mathrm{d}\zeta.
\end{equation*}

By Gronwall’s inequality,
\begin{align}\label{eq:DZ_bound}
    \mathbb{E}[\|D_s\bZ_r\|^2]\leq Y_r\le C_T{\eta}\exp\bigl(C_T(r-s)\bigr)\le C_T{\eta}\exp\bigl(C_T T\bigr)\le C_T {\eta}.
\end{align}

By Malliavin duality,
\begin{align*}
    \mathbb E\left[\nabla_\bTheta f(x,\bZ_t)^\top\int_0^t 
\bs_{|\mathcal{B}|}^{1/2}(\bZ_s)d\bW_s
\right]&=\mathbb E\left[\int_0^t\operatorname{Tr}
\left(D_s(\nabla_\bTheta f(x,\bZ_t))^\top
\bs_{|\mathcal{B}|}^{1/2}(\bZ_s)\right)ds\right]\\
&=\mathbb E\left[\int_0^t\operatorname{Tr}
\left((\nabla_\bTheta^2 f(x,\bZ_t)D_s \bZ_t)^\top
\bs_{|\mathcal{B}|}^{1/2}(\bZ_s)
\right)ds\right].
\end{align*}
Therefore, by Eq.(\ref{eq:DZ_bound}) we obtain
\begin{align*}
   \Biggl|\mathbb E\left[\nabla_\bTheta f(x,\bZ_t)^\top\int_0^t 
\bs_{|\mathcal{B}|}^{1/2}(\bZ_s)d\bW_s
\right] \Biggr|&\leq \mathbb E\left[\int_0^t\left|\operatorname{Tr}
\left((\nabla_\bTheta^2 f(x,\bZ_t)D_s\bZ_t)^\top
\bs^{1/2}_{|\mathcal{B}|}(\bZ_s)
\right)\right|ds\right]\\
&\leq C_T\int_0^t \mathbb E \|D_s\bZ_t\|ds\le C_T\int_0^t (\mathbb E \|D_s\bZ_t\|^2)^{\frac{1}{2}}ds\\
&\leq C_T\sqrt{\eta}.
\end{align*}
\end{proof}

\begin{lemma}\label{lemma:phi}
For $0\le s\le t \le T:=K\eta$, let $\Phi_{t,s}$ be the fundamental matrix associated with the linear system
\begin{equation*}
    \partial_t \Phi_{t,s}=-A_t\Phi_{t,s},\qquad \Phi_{s,s}=\mathbf{I},
\end{equation*}
where $A_t=\mu \mathbf{I}+\frac{\eta}{2}[\mu^2 \mathbf{I}-\nabla_\bTheta^2 L(\bZ_t)]$. Assume that $\sup_{0\le t\le T}\|\nabla_\bTheta^2 L(\bZ_t)\|\le C_T$, where \(C_T\) is independent of \(\eta\). Then
\begin{align}    
    \Phi_{t,s}=e^{-\mu(t-s)}\left(\mathbf{I}+\frac{\eta}{2}\int_s^t \nabla_\bTheta^2L(\bZ_r)dr-\frac{\mu^2}{2}(t-s)\eta\,\mathbf{I}\right)+O(\eta^2),
\end{align}
where the remainder is understood in operator norm.
\end{lemma}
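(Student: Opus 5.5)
Write $A_t=\mu\mathbf I-\frac{\eta}{2}H_t+\frac{\eta}{2}\mu^2\mathbf I$ with $H_t:=\nabla_\bTheta^2L(\bZ_t)$. The scalar part $\mu(1+\frac{\eta}{2}\mu)\mathbf I$ commutes with everything, so I will factor it out first. Set $\Psi_{t,s}:=e^{\mu(1+\frac{\eta}{2}\mu)(t-s)}\Phi_{t,s}$. Then $\Psi$ solves
\[
\partial_t\Psi_{t,s}=\frac{\eta}{2}H_t\Psi_{t,s},\qquad \Psi_{s,s}=\mathbf I.
\]
This reduces the problem to a linear system whose generator is of size $O(\eta)$ uniformly on $[0,T]$, by the assumption $\sup_t\|H_t\|\le C_T$.

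\textbf{Step 1 (the $\Psi$ factor).} Write $\Psi$ in integral form, $\Psi_{t,s}=\mathbf I+\frac{\eta}{2}\int_s^t H_r\Psi_{r,s}\,dr$. Gronwall gives $\|\Psi_{t,s}\|\le e^{\frac{\eta}{2}C_T(t-s)}\le e^{C_TT/2}$. Substituting this bound back once (one Picard iteration) gives
\[
\Psi_{t,s}=\mathbf I+\frac{\eta}{2}\int_s^tH_r\,dr+R_{t,s},\qquad \|R_{t,s}\|\le \frac{\eta^2}{4}C_T^2\,\frac{(t-s)^2}{2}\,e^{C_TT/2}=O(\eta^2).
\]
Here I use that $t-s\le T$ is bounded independently of $\eta$.

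\textbf{Step 2 (the scalar factor).} Expand
\[
e^{-\mu(1+\frac{\eta}{2}\mu)(t-s)}=e^{-\mu(t-s)}e^{-\frac{\eta}{2}\mu^2(t-s)}.
\]
Since $|e^{-a}-1+a|\le a^2/2$ for $a\ge0$, we get
\[
e^{-\frac{\eta}{2}\mu^2(t-s)}=1-\frac{\mu^2}{2}(t-s)\eta+O(\eta^2),
\]
with the constant depending on $\mu$ and $T$ only. This is legitimate because $\mu$ is held fixed as in Lemma~\ref{2 order SGDM}. The factor $e^{-\mu(t-s)}\le1$ keeps everything bounded.

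\textbf{Step 3 (combine).} Multiply the two expansions:
\[
\Phi_{t,s}=e^{-\mu(t-s)}\Big(1-\frac{\mu^2}{2}(t-s)\eta+O(\eta^2)\Big)\Big(\mathbf I+\frac{\eta}{2}\int_s^tH_r\,dr+O(\eta^2)\Big).
\]
The cross term $\frac{\mu^2}{2}(t-s)\eta\cdot\frac{\eta}{2}\int_s^t H_r\,dr$ is bounded by $C\eta^2$, so it joins the remainder. This yields the claimed formula, with the remainder uniform in $0\le s\le t\le T$ in operator norm.

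\textbf{Main obstacle.} The main obstacle is only bookkeeping: all constants must be independent of $\eta$. This is guaranteed by fixed $\mu$, fixed $T$, and the uniform Hessian bound. If the trajectory is random, the same argument applies pathwise on the event where the bound holds. One sign point needs care. The paper's $A_t$ carries $-\frac{\eta}{2}H_t$, so the integrated Hessian enters $\Phi$ with a plus sign, which matches the statement.
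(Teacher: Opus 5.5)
Your proposal is correct and takes essentially the same route as the paper. Both factor out the scalar part via $\Psi_{t,s}=e^{(\mu+\frac{\eta}{2}\mu^2)(t-s)}\Phi_{t,s}$, bound $\Psi$ by Gronwall, and control the remainder $\Psi_{t,s}-\mathbf{I}-\frac{\eta}{2}\int_s^t\nabla_\bTheta^2 L(\bZ_r)\,dr$ with one substitution; both then Taylor-expand $e^{-\frac{\eta}{2}\mu^2(t-s)}$ and absorb the cross term into the $O(\eta^2)$ remainder.
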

\begin{proof}
We now derive an explicit approximation of $\Phi_{t,s}$. Defining \begin{equation}\Psi_{t,s}:=e^{(\mu+\frac{\eta}{2}\mu^2)(t-s)}\,\Phi_{t,s},\,\, \text{i.e.}\,\,\,\Phi_{t,s}=e^{-(\mu+\frac{\eta}{2}\mu^2)(t-s)}\Psi_{t,s},\label{defphits}\end{equation}
and differentiating with respect to $t$, we have
\begin{align}\label{eq:psi1}
    \partial_t \Phi_{t,s}&=-\left(\mu+\frac{\eta}{2}\mu^2\right)e^{-(\mu+\frac{\eta}{2}\mu^2)(t-s)}\Psi_{t,s}+e^{-(\mu+\frac{\eta}{2}\mu^2)(t-s)}\partial_t \Psi_{t,s}.
\end{align}
On the other hand, plugging Eq.(\ref{defphits}) into Eq.(\ref{eq:odePhi}) we have
\begin{align}\label{eq:psi2}
    \partial_t \Phi_{t,s}=-\left(\mu+\frac{\eta}{2}\mu^2\right)e^{-(\mu+\frac{\eta}{2}\mu^2)(t-s)}\Psi_{t,s}+\frac{\eta}{2}\nabla_\bTheta^2L(\bZ_t)e^{-(\mu+\frac{\eta}{2}\mu^2)(t-s)}\Psi_{t,s}.
\end{align}
Comparing Eq.(\ref{eq:psi1}) and Eq.(\ref{eq:psi2}) we deduce that
\begin{equation}
    \partial_t \Psi_{t,s}=\frac{\eta}{2}\nabla_\bTheta^2L(\bZ_t)\Psi_{t,s},\,\Psi_{s,s}=\mathbf{I}.
    \label{diffpsi}
\end{equation}
Integrating Eq.(\ref{diffpsi}) between $s$ and $t$, we have
\begin{equation}\label{integral_psi}
    \Psi_{t,s}-\mathbf{I}=\frac{\eta}{2}\int_s^t \nabla_\bTheta^2L(\bZ_r)\Psi_{r,s}dr.
\end{equation}
Define 
\begin{equation}\label{defRts}R_{t,s}:=\Psi_{t,s}-\mathbf{I}-\frac{\eta}{2}\int_s^t \nabla_\bTheta^2L(\bZ_r)dr,
\end{equation}
combining with this definition~(\ref{integral_psi}) we obtain 
\begin{equation} \label{eq:residual}
    R_{t,s}=\frac{\eta}{2}\int_s^t \nabla_\bTheta^2L(\bZ_r)(\Psi_{r,s}-\mathbf{I})dr.
\end{equation}
 Using  Eq.(\ref{diffpsi}) and Grönwall inequality, we have
\begin{align*}
    \|\Psi_{r,s}\|&\leq 1+\frac{\eta}{2}\int_s^r \|\nabla_\bTheta^2L(\bZ_q)\|\|\Psi_{q,s}\|dq\leq 1+\frac{\eta C_T}{2}\int_s^r \|\Psi_{q,s}\|dq\\&\leq \exp{\left(\frac{\eta C_T}{2}(r-s)\right)}\leq \exp{\left(\frac{\eta C_T}{2}K\eta\right)}.
\end{align*}
Noticing that $\Psi_{s,s}=\mathbf{I}$, we have
\begin{equation}\label{eq:psi_bound}
    \|\Psi_{r,s}-\mathbf{I}\|\leq \frac{\eta C_T}{2}\int_s^r \exp{\left(\frac{\eta C_T}{2}K\eta\right)} dq=\frac{\eta C_T}{2}\exp{\left(\frac{\eta C_T}{2}K\eta\right)}(r-s).
\end{equation}
Plugging Eq.(\ref{eq:psi_bound}) into Eq.(\ref{eq:residual}), we obtain
\begin{align*}
    \|R_{t,s}\|\le \frac{\eta\, C_T}{2}\int_s^t \frac{\eta C_T}{2}\exp{\left(\frac{\eta C_T}{2}K\eta\right)}(r-s) dr&\leq \frac{\eta^2 C_T^2}{4}\exp{\left(\frac{\eta C_T}{2}K\eta\right)}\int_s^{t}(r-s) dr\\&=O(\eta^2).
\end{align*}
Going back to the definition \eqref{defRts} of $R_{t,s}$, this yields
\begin{align*}
    \Psi_{t,s}=\mathbf{I}+\frac{\eta}{2}\int_s^t \nabla_\bTheta^2L(\bZ_r)dr+O(\eta^2).
\end{align*}
Combining this result with~\eqref{defphits}, we obtain
\begin{align*}
    \Phi_{t,s}&=e^{-(\mu+\frac{\eta}{2}\mu^2)(t-s)}\left(\mathbf{I}+\frac{\eta}{2}\int_s^t \nabla_\bTheta^2L(\bZ_r)dr+O(\eta^2)\right)\\
    &=e^{-\mu(t-s)}e^{-\frac{\eta}{2}\mu^2(t-s)}\left(\mathbf{I}+\frac{\eta}{2}\int_s^t \nabla_\bTheta^2L(\bZ_r)dr+O(\eta^2)\right).
\end{align*}
Taking Taylor expansion to $e^{-\frac{\eta}{2}\mu^2(t-s)}$, we get
\begin{align*}    
    \Phi_{t,s}&=e^{-\mu(t-s)}\left(1-\frac{\mu^2}{2}(t-s)\eta+O(\eta^2)\right)\left(\mathbf{I}+\frac{\eta}{2}\int_s^t \nabla_\bTheta^2L(\bZ_r)dr+O(\eta^2)\right)\notag\\
    &=e^{-\mu(t-s)}\left(\mathbf{I}-\frac{\mu^2}{2}(t-s)\eta\,\mathbf{I}+\frac{\eta}{2}\int_s^t \nabla_\bTheta^2L(\bZ_r)dr\right)+O(\eta^2).
\end{align*}
\end{proof}

\bibliographystyle{siam}
\bibliography{sample}

\end{document}